\newcommand{\numprint}[2][2]{\num[round-mode=places,round-precision=#1]{#2}}
\algnewcommand\algorithmicparfor{\textbf{parallel for}}
\algnewcommand\algorithmicpardo{\textbf{do}}
\algnewcommand\algorithmicendparfor{\textbf{end\ for}}
\definecolor{algorithmcomment}{rgb}{0.0, 0.5, 0.0}
\newcommand{\tikzmark}[1]{\tikz[overlay,remember picture] \node (#1) {};}
\newcommand*{\AddNote}[4]{%
    \begin{tikzpicture}[overlay, remember picture]
        \draw [decoration={brace,amplitude=0.5em},decorate,ultra thick,algorithmcomment]
            ($(#3)!(#1.north)!($(#3)-(0,1)$)$) --  
            ($(#3)!(#2.south)!($(#3)-(0,1)$)$)
                node [align=center, text width=5.5cm, pos=0.5, anchor=west] {#4};
    \end{tikzpicture}
}%
\definecolor{lightgray}{rgb}{0.95, 0.95, 0.95}
\definecolor{best}{RGB}{200, 220, 255} 
\definecolor{secondbest}{RGB}{220, 230, 240}
\NewDocumentCommand{%
  \parbox[t]{\linewidth}{\centering\small\textit{\input{}}}%
}{m}{%
  \parbox[t]{\linewidth}{\centering\small\textit{\input{#1}}}%
}
\newcommand{\eg}{{e.g.}}
\newcommand{\ie}{{i.e.}}
\theoremstyle{plain}
\newtheorem{theorem}{Theorem}%
\theoremstyle{definition}
\theoremstyle{remark}
\newtheorem*{rep@theorem}{\rep@title}
\newcommand{\newreptheorem}[2]{%
\newenvironment{rep#1}[1]{%
 \def\rep@title{\textbf{#2} \ref{##1}}%
 \begin{rep@theorem}}%
 {\end{rep@theorem}}}
\newcommand{\norm}[1]{\left\Vert#1\right\Vert}
\newcommand{\abs}[1]{\left\vert#1\right\vert}
\newcommand{\set}[1]{\left\{#1\right\}}
\newcommand{\parr}[1]{\left (#1\right )}
\newcommand{\brac}[1]{\left [#1\right ]}
\newcommand{\Real}{\mathbb R}
\newcommand{\Nat}{\mathbb N}
\newcommand{\eps}{\varepsilon}
\newcommand{\too}{\rightarrow}
\newcommand{\deq}%
{\sim}%
\definecolor{metabg}{HTML}{F1F4F7}
\newmdenv[backgroundcolor=metabg, roundcorner=5pt, skipabove=7pt, linewidth=0pt, innertopmargin=4pt]{myframe}
\newcommand{\dd}{\mathrm{d}}
\definecolor{mygray}{gray}{0.95}
\newcommand{\rev}[1]{{#1}}
\def\eqref#1{equation~\ref{#1}}
\def\1{\bm{1}}
\def\eps{{\epsilon}}
\DeclareMathAlphabet{\mathsfit}{\encodingdefault}{\sfdefault}{m}{sl}
\SetMathAlphabet{\mathsfit}{bold}{\encodingdefault}{\sfdefault}{bx}{n}
\def\gL{{\mathcal{L}}}
\def\gN{{\mathcal{N}}}
\newcommand{\E}{\mathbb{E}}
\newcommand{\R}{\mathbb{R}}
\newcommand{\redbox}[1]{%
  \pgfmathsetmacro{\tempcolor}{1 - (#1 / 68.1)^(1/2)}%
  \edef\tempval{\noexpand\cellcolor[rgb]{1,\tempcolor,\tempcolor}}%
  \tempval #1%
}
\newcommand{\greenbox}[1]{%
  \pgfmathsetmacro{\tempcolor}{1 - ((#1-15.8+4) / (26.9-15.8+4))^(2)}%
  \pgfmathsetmacro{\tempcolorb}{\tempcolor}
  \pgfmathsetmacro{\tempcolorbb}{0.6+0.4*\tempcolor}
  \edef\tempval{\noexpand\cellcolor[rgb]{\tempcolorb,\tempcolorbb,\tempcolorb}}%
  \tempval #1%
}
\newcommand{\greenpickbox}[1]{%
  \pgfmathsetmacro{\tempcolor}{1 - ((#1-17.6+1.5) / (21.2-17.6+1.5))^(2)}%
  \pgfmathsetmacro{\tempcolorb}{\tempcolor}
  \pgfmathsetmacro{\tempcolorbb}{0.6+0.4*\tempcolor}
  \edef\tempval{\noexpand\cellcolor[rgb]{\tempcolorb,\tempcolorbb,\tempcolorb}}%
  \tempval #1%
}
\title{Transition Matching: Scalable and Flexible Generative Modeling }
\author[*,1,\dag]{Neta Shaul}
\author[*,2]{Uriel Singer}
\author[2]{Itai Gat}
\author[2]{Yaron Lipman}
\affiliation[1]{Weizmann Institute of 
Science}
\affiliation[2]{FAIR at Meta}
\affiliation[\dag]{Work done during internship at FAIR at Meta}
\abstract{Diffusion and flow matching models have significantly advanced media generation, yet their design space is well-explored, somewhat limiting further improvements. Concurrently, autoregressive (AR) models, particularly those generating continuous tokens, have emerged as a promising direction for unifying text and media generation. This paper introduces Transition Matching (TM), a novel discrete-time, continuous-state generative paradigm that unifies and advances both diffusion/flow models and continuous AR generation. TM decomposes complex generation tasks into simpler Markov transitions, allowing for expressive non-deterministic probability transition kernels and arbitrary non-continuous supervision processes, thereby unlocking new flexible design avenues. We explore these choices through three TM variants: (i) Difference Transition Matching (DTM), which generalizes flow matching to discrete-time  by directly learning transition probabilities, yielding state-of-the-art image quality and text adherence \rev{as well as improved sampling efficiency}. (ii) Autoregressive Transition Matching (ARTM) and (iii) Full History Transition Matching (FHTM) are partially and fully causal models, respectively, that generalize continuous AR methods. They achieve continuous causal AR generation quality comparable to non-causal approaches and potentially enable seamless integration with existing AR text generation techniques. Notably, FHTM is the first fully causal model to match or surpass the performance of flow-based methods on text-to-image task in continuous domains. 
  We demonstrate these contributions through a rigorous large-scale comparison of TM variants and relevant baselines, maintaining a fixed architecture, training data, and hyperparameters.
}
\begin{document}

\maketitle

\section{Introduction}

Recent progress in diffusion models and flow matching has significantly advanced media generation (images, video, audio), achieving state-of-the-art results \citep{esser2021tamingtransformers,flux2024,polyak2410movie,chen2024f5}. However, the design space of these methods has been extensively investigated \citep{song2021scorebased,karras2022elucidating,nichol2021improvedddpm,shaul2023kineticoptimal,dhariwal2021diffusionmodelsbeatgans}, potentially limiting further significant improvements with current modeling approaches.
An alternative direction focuses on autoregressive (AR) models to unify text and media generation. Earlier approaches generated media as sequences of discrete tokens either in raster order \citep{ramesh2021dalle, yu2022parti, dhariwal2020jukebox}; or in random order \citep{chang2022maskgit}. Further advancement was shown by switching to continuous token generation \citep{li2024autoregressive,tschannen2024givt}, while also improving performance at scale \citep{fan2024fluid}.

This paper introduces Transition Matching (TM), a general discrete-time continuous-state generation paradigm that unifies diffusion/flow models and continuous AR generation. TM aims to advance both paradigms and create new state-of-the-art generative models. Similar to diffusion/flow models, TM breaks down complex generation tasks into a series of simpler Markov transitions. However, unlike diffusion/flow, TM allows for expressive non-deterministic probability transition kernels and arbitrary non-continuous supervision processes, offering new and flexible design choices.

\pagebreak
We explore these design choices and present three TM variants:

(i) Difference Transition Matching (DTM): A generalization of flow matching to discrete time, DTM directly learns the transition probabilities of consecutive states in the linear (Cond-OT) process instead of just its expectation. This straightforward approach yields a state-of-the-art generation model with improved image quality and text adherence\rev{, as well as significantly faster sampling}.

(ii) Autoregressive Transition Matching (ARTM) and (iii) Full History Transition Matching (FHTM): These partially and fully causal models (respectively) generalize continuous AR models by incorporating a multi-step generation process guided by discontinuous supervising processes. ARTM and FHTM achieve continuous causal AR generation quality comparable to non-causal methods. Importantly, their causal nature allows for seamless integration with existing AR text generation methods. FHTM is the first fully causal model to match or surpass the performance of flow-based methods in continuous domains. 

In summary, our contributions are:
\begin{enumerate}
    \item Formulating transition matching: simplified and generalized discrete-time generative models based on matching transition kernels.

\item Identifying and exploring key design choices, specifically the supervision process, kernel parameterization, and modeling paradigm.

\item Introducing DTM, which improves upon state-of-the-art flow matching in image quality, prompt alignment\rev{, and sampling speed}.

\item Introducing ARTM and FHTM: partially and fully causal AR models (resp.) that match non-AR generation quality and state-of-the-art prompt alignment.

\item Presenting a fair, large-scale comparison of the different TM variants and relevant baselines using a fixed architecture, data, and training hyper-parameters.

\end{enumerate}

\begin{figure}[t]
\centering
\renewcommand{\arraystretch}{1.2}
\setlength{\tabcolsep}{4pt}
\resizebox{\linewidth}{!}{%
\begin{tabular}{*{4}{>{\centering\arraybackslash}m{0.25\textwidth}}}

FM & MAR & FHTM (Ours) & DTM (Ours) \\
\includegraphics[width=\linewidth]{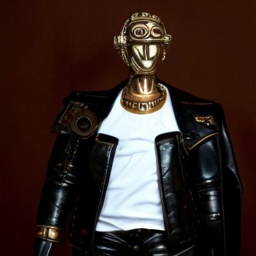} &
\includegraphics[width=\linewidth]{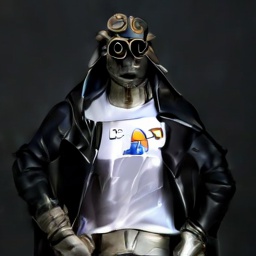} &
\includegraphics[width=\linewidth]{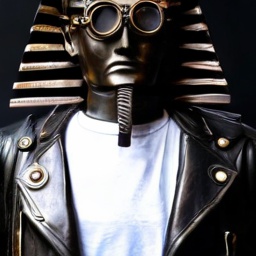} &
\includegraphics[width=\linewidth]{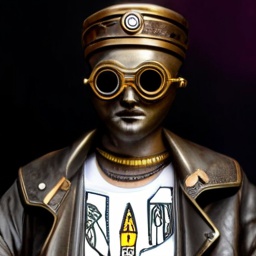} \\
\multicolumn{4}{c}{%
  \parbox[t]{\linewidth}{\centering\small\textit{\input{figures/image_comparison/180_000284_prompt.txt}}}%
} \\[15pt]
\includegraphics[width=\linewidth]{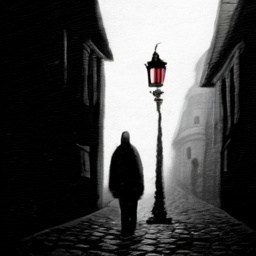} &
\includegraphics[width=\linewidth]{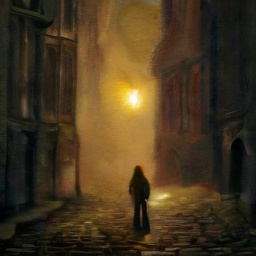} &
\includegraphics[width=\linewidth]{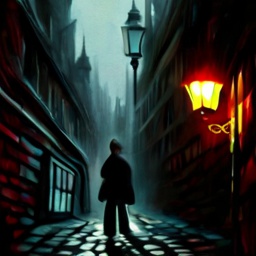} &
\includegraphics[width=\linewidth]{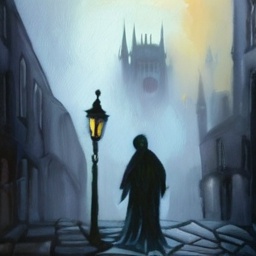} \\
\multicolumn{4}{c}{%
  \parbox[t]{\linewidth}{\centering\small\textit{\input{figures/image_comparison/269_000273_prompt.txt}}}%
} \\[6pt]

\end{tabular}
}
\label{fig:images_teaser}
\caption{Transition Matching methods (FHTM and DTM) compared to baselines (FM and MAR) under a fixed architecture, dataset and training hyper-parameters. }
\end{figure}

\section{Transition Matching}
We start by describing the framework of Transition Matching (TM), which can be seen as a simplified and general discrete time formulation for diffusion/flow models. Then, we focus on several, unexplored TM design choices and instantiations that goes beyond diffusion/flow models. In particular: we consider more powerful transition kernels and/or discontinuous noise-to-data processes. In the experiments section we show these choices lead to state-of-the-art image generation methods.

\subsection{General framework}
\label{ss:general_framework}
\textbf{Notation}
We use capital letters $X,Y,Z,A,B$ to denote random variables (RVs) and lower-case letter $x,y,z,a,b$ to denote their particular states. One exception is time $t$ where we abuse notation a bit and use it to denote both particular times and a RV. All our variables and states reside in euclidean spaces $x\in \Real^d$. The probability density function (PDF) of a random variable $Y$ is denoted $p_Y(x)$. For RVs $X_t$ (and only for them) we use the simpler PDF notation $p_t(x_t)$. We use the standard notations for joints $p_{X,Y}(x,y)$ and conditional densities $p_{X|Y}(x|y)$ densities. We denote $[T]=\set{0,1,\ldots,T}$.

\textbf{Problem definition}
Given a training set of i.i.d.~samples from an unknown \emph{target distribution} $p_T$, and some easy to sample \emph{source distribution} $p_0$. Our goal is to learn a Markov Process, defined by a \emph{probability transition kernel} $p_{t+1|t}^\theta(x_{t+1}|x_t)$, where $t \in [T-1]$ taking us from $X_0\sim p_0$ to $X_T\sim p_T$. That is, we define a series of random variables $(X_t)_{t\in [T]}$ such that $X_0\sim p_0$ and 
\begin{equation}\label{e:trans_prob}
    X_{t+1} \sim p^\theta_{t+1|t}(\cdot|X_t) \text{ for all } t\in [T-1] \text{ then } X_T\sim p_T.
\end{equation}

\begin{wrapfigure}[4]{r}{0.3\columnwidth} %
  \vspace{-28pt} %
  \centering
  \centering
  \begin{adjustbox}{width=\linewidth}
\begin{tikzpicture}
  \node[latent, fill=blue!10] (z) {$X_T$}; %
  \node[ellipse, right=of z, draw, fill=green!10, minimum width=4cm, minimum height=1.5cm, align=center] (x) {$X_0,\ldots,X_{T-1}$}; %
  \edge {z} {x};  %
\end{tikzpicture}
\end{adjustbox}
\caption{Supervising process.} \label{fig:bayesnet} %
\end{wrapfigure}
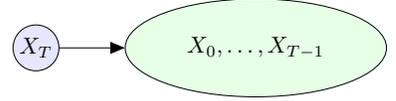

\textbf{Supervising process}
Training such a Markov process is done with the help of a \emph{supervising process}, which is a stochastic process $(X_0,X_1,\ldots,X_T)$ defined given data samples $X_T$ using a \emph{conditional process} $q_{0,\ldots,T-1|T}$, \ie, 
\begin{equation}\label{e:TM_supervising}
    q_{0,\ldots,T}(x_0,\ldots,x_T) = q_{0,\ldots,T-1|T}(x_0,\ldots,x_{T-1}|x_T)p_T(x_T),
\end{equation}
and $q_{0,\ldots,T}$ denotes the joint probability of the supervising process $(X_t)_{t\in T}$. The only constraint on the conditional process is that its marginal at time $t=0$ is the easy to sample distribution $p_0$, \ie, 
\begin{eqnarray}\label{e:boundary}
    q_0 = p_0.
\end{eqnarray}
Note that this definition is very general and allows, for example, arbitrary non-continuous processes, and indeed we utilize such a process below. Transition matching engages with the supervising process $(X_t)_{t\in T}$ by sampling pairs of consecutive states $(X_t,X_{t+1})\sim q_{t,t+1}$, $t\in[T-1]$. 

\textbf{Loss} The model $p^\theta_{t+1|t}$ is trained to transition between consecutive states $X_t\too X_{t+1}$ in the sense of \eqref{e:trans_prob} by regressing $q_{t+1|t}$ defined from the supervising process $q$. This motivates the loss utilizing a distance/divergence $D$ between distributions
\begin{align}\label{e:loss_early}
    \gL(\theta) &= \E_{t,X_t} D\big(q_{t+1|t}(\cdot|X_t) , p^\theta_{t+1|t}(\cdot|X_t)\big),
\end{align}
where $t$ is sampled uniformly from $[T-1]$. However, this loss requires evaluating $q_{t+1|t}$ which is usually hard to compute. Therefore, to make the training tractable we require that the distance $D$ has an \emph{empirical form}, \ie, can be expressed as an expectation of an empirical one-sample loss $\hat{D}$ over target samples. We define the loss
\begin{align}\label{e:loss}
    \gL(\theta) &= \E_{t,X_t} \overbrace{\E_{X_{t+1}} \hat{D}(X_{t+1},p^\theta_{t+1|t}(\cdot|X_t))}^{D\text{ in empirical form}}
    =\E_{t,X_t,X_{t+1}} \hat{D}\big(X_{t+1}, p^\theta_{t+1|t}(\cdot|X_t)\big),
\end{align}
where $(X_t,X_{t+1})$ are sampled from the joint $q_{t,t+1}$ with the help of \eqref{e:TM_supervising}, namely, first sample data $X_T\sim p_T$ and then $(X_t,X_{t+1})\sim q_{t,t+1|T}(\cdot|X_T)$. Notably, \eqref{e:loss} can be used to learn arbitrary transition kernels, in contrast to \eg, Gaussian kernels used in discrete time diffusion models or deterministic kernels used in flow matching. The particular choice of the cost $D$ depends on the modeling paradigm chosen for the transition kernel $p^\theta_{t+1|t}$, and discussed later.

\begin{figure}[t]
\newcommand{\commentwidth}{3.0cm} %
\algnewcommand{\CommentLine}[1]{%
  \hfill\makebox[\commentwidth][l]{\textcolor{cyan}{\footnotesize$\triangleright$~#1}}%
}
\centering
\begin{minipage}{0.48\textwidth}
\begin{algorithm}[H]
\caption{Transition Matching Training}
\label{alg:tm_training}
\begin{algorithmic}[1]
\Require $p_T$ \CommentLine{Data}
\Require $q_{t,Y|T}$ \CommentLine{Process}
\Require $T$ \CommentLine{Number of TM steps}
\While{not converged}
    \State Sample $t\sim \mathcal{U}([T\!-\!1])$, $X_T\sim p_T$
    \State Sample $(X_t,Y)\sim q_{t,Y|T}(\cdot|X_T)$
    \State $\mathcal{L}(\theta) \gets \hat{D}(Y, p^\theta_{Y|t}(\cdot|X_t))$
    \State $\theta \gets \theta - \gamma \nabla_\theta \mathcal{L}$ \CommentLine{Optimization step}
\EndWhile
\State \Return $\theta$
\end{algorithmic}
\end{algorithm}
\end{minipage}\hfill
\begin{minipage}{0.48\textwidth}
\begin{algorithm}[H]
\caption{Transition Matching Sampling}
\label{alg:tm_sampling}
\begin{algorithmic}[1]
\Require $p_0$ \CommentLine{Source distribution}
\Require $p^\theta_{Y|t}$ \CommentLine{Trained model}
\Require $q_{t+1|t,Y}$ \CommentLine{Parametrization}
\Require $T$ \CommentLine{Number of TM steps}
\State Sample $X_0 \sim p_0$
\For{$t = 0$ \textbf{to} $T-1$}
    \State Sample $Y \sim p^\theta_{Y|t}(\cdot | X_t)$
    \State Sample $X_{t+1} \sim q_{t+1|t,Y}(\cdot | X_t, Y)$
\EndFor
\State \Return $X_T$

\end{algorithmic}
\end{algorithm}
\end{minipage}
\end{figure}

\textbf{Kernel parameterizations} The first and natural option to parameterize  $p_{t+1|t}^\theta$ is to regress $q_{t+1|t}$ directly as is done in \eqref{e:loss}. This turns out to be a good modeling choice in certain cases. However, in some cases one can use other parameterizations that turn out to be beneficial, as is also done for flow and diffusion models. To do that in the general case, we use the \emph{law of total probability} applied to the conditional probabilities $q_{t+1|t}$ with some latent RV $Y$:
\begin{equation}\label{e:sampling_with_Y}
    q_{t+1|t}(x_{t+1}|x_t) = \int q_{t+1|t,Y}(x_{t+1}|x_t,y)q_{Y|t}(y|x_t) \dd y,
\end{equation}
where $q_{Y|t}$ is the \emph{posterior distribution} of $Y$ given $X_t=x_t$ and $q_{t+1|t,Y}$ is easy to sample (often a deterministic function of $X_t$ and $Y$). Then the posterior of $Y$ is set as the new target of the learning process instead of the transition kernel. That is, instead of the loss in \eqref{e:loss} we consider
\begin{eqnarray}\label{e:loss_param}
    \gL(\theta) = \E_{t,X_t,Y} \hat{D}\big(Y , p^\theta_{Y|t}(\cdot|X_t)\big).
\end{eqnarray}
Similarly, during training, sampling from the joint $(X_t,Y)\sim q_{t,Y}$, is accomplished by first sampling data $X_T$ and then $(X_t,Y)\sim q_{t,Y|T}(\cdot|X_t)$. 
Once the posterior $p^\theta_{Y|t}$ is trained, sampling from the transition $p^\theta_{t+1|t}$ during inference is done with the help of \eqref{e:sampling_with_Y}. 
To summarize, in cases we want to use non-trivial kernel parameterization, \ie, $Y\ne X_{t+1}$, we sample from $q_{t+1|t,Y}$ (in sampling) and $q_{t,Y|T}$ (in training). See Algorithms \ref{alg:tm_training} and \ref{alg:tm_sampling} the training and sampling pseudocodes.

\textbf{Kernel modeling} Once a desirable $Y$ is identified, the remaining part is to choose a generative model for the kernel $p_{Y|t}^\theta$. Importantly, one of the key advantages in TM comes from choosing expressive kernels that result in more elaborate transition kernels than used previously. 
A kernel modeling is set by a choice of a \emph{probability model} for $p^\theta_{Y|t}$ and a \emph{loss} to learn it. We denote the probability model choice by $B|A$, where $A$ denotes the \emph{condition} and $B$ the \emph{target}. For example, $Y|X_t$ will denote a model that predicts a sample of $Y$ given a sample of $X_t$. We will also use more elaborate probability models, such as autoregressive models. To this end, consider the state $Y$ reshaped into individual \emph{tokens} $Y=(Y^1,\ldots,Y^n)$, and then $Y^i|\parr{Y^{<i},X_t}$ means that our model samples the token $Y^i$ given previous tokens of $Y$, $Y^{<i}=(Y^1,\ldots,Y^{i-1})$, and $X_t$. 

All our models are learned with flow matching (FM) loss. For completeness, we provide the key components of flow matching formulated generically to learn to sample from $B|A$. Individual states of $A$ and $B$ are denoted $a$ and $b$, respectively. Flow matching models $p^\theta_{B|A}$ via a velocity field $u_s^\theta(b|a)$ that is used to sample from $p^\theta_{B|A}(\cdot|a)$ by solving the Ordinary Differential Equation (ODE)
\begin{equation}\label{e:ode}
    \frac{\dd B_s}{ \dd s} = u_s^\theta(B_s|a)
\end{equation}
initializing with a sample $B_0\sim \gN(0,I)$ (the standard normal distribution) and solving until $t=1$. In turn, $B_1$ is the desired sample, \ie, $B_1\sim p^\theta_{B|A}(\cdot|a)$. The loss $D$, used to train FM, has an empirical form and minimizes the difference between $q_{B|A}$ and $p^\theta_{B|A}$,  
\begin{equation}\label{e:cfm_loss}
    \hat{D}\big (B,p^\theta_{B|A}(\cdot|a)\big ) = \E_{s,B_0} \norm{u_s^\theta(B_s|a) - (B-B_0)  }^2,
\end{equation}
where $s$ is sampled uniformly in $[0,1]$, $B_0\sim \gN(0,I)$, $B\sim q_{B|A}(\cdot|a)$, and $B_s=(1-s)B_0+s B$.

We summarize the key design choices in Transition Matching:

\begin{myframe}
    \centering
    \begin{tabular}{cccc} 
        \multirow{2}{*}{\vspace{-5pt}\centering \textbf{TM design:}} & 
        \textbf{Supervising process} & 
        \textbf{Parametrization} & 
        \textbf{Modeling} \\
        \cmidrule(lr){2-2} \cmidrule(lr){3-3} \cmidrule(lr){4-4} 
        & $q$ & 
        $Y$ & 
        $B|A$ \\
    \end{tabular}
\end{myframe}

\subsection{Transition Matching made practical}\label{s:tm_practical}\vspace{-4pt}
The key contribution of this paper is identifying previously unexplored design choices in the TM framework that results in effective generative models. We focus on two TM variants: Difference Transition Matching (DTM), and Autoregressive Transition Matching (ARTM/FHTM). 

\textbf{Difference Transition Matching} Our first instance of TM makes the following choices: 
\begin{myframe}
    \centering
    \begin{tabular}{cccc} 
        \multirow{2}{*}{\vspace{-5pt}\centering \textbf{DTM:}} & 
        \textbf{Supervising process} & 
        \textbf{Parametrization} & 
        \textbf{Modeling} \\
        \cmidrule(lr){2-2} \cmidrule(lr){3-3} \cmidrule(lr){4-4}
         &     
        $X_t$ linear & 
        $Y=X_T-X_0$ & 
        $B=Y\,|\,A=(t,X_t)$ 
    \end{tabular}
\end{myframe}
As the supervising process $q$ we use the standard \emph{linear process} (a.k.a., Conditional Optimal Transport), defined by
\begin{equation}\label{e:Xt_linear}
    X_t = \parr{1-\frac{t}{T}}X_0+\frac{t}{T}
    X_T, \qquad t\in [T], 
\end{equation}
where $X_0\sim p_0=\gN(0,I)$ and $X_T\sim p_T$. This is the same process used in \citep{lipman2022flow,liu2022flow}. For the kernel parameterization $Y$ we will use the \emph{difference latent} (see Figure \ref{fig:tm_vs_fm}, left),  
\begin{equation}\label{e:difference}
    Y=X_T-X_0.   
\end{equation}
\begin{wrapfigure}[8]{r}{0.5\columnwidth} 
  \vspace{-10pt} 
  \centering
  \includegraphics[width=0.45\columnwidth]{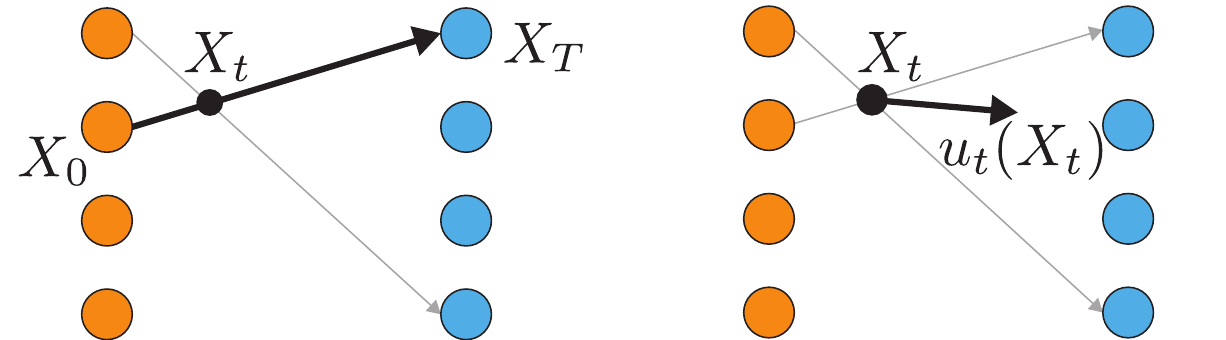} 
  \caption{Difference prediction given $X_t$ (left) and flow matching velocity $u_t(X_t)$ (right). 
    }
  \label{fig:tm_vs_fm}
  \vspace{0pt} 
\end{wrapfigure}
During training, sampling $q_{t,Y|T}(\cdot|X_T)$ (\ie, given $X_T$) is done by sampling $X_0$, and using \ref{e:Xt_linear} and \ref{e:difference} to compute $X_t,Y$. Using the definition in \ref{e:Xt_linear} and rearranging gives 
\begin{equation}\label{e:dtm}
 X_{t+1}=X_t + \frac{1}{T}Y,   
\end{equation}

and this equation can be used to sample from $q_{t+1|t,Y}(\cdot|X_t,Y)$ during inference. See Figure \ref{fig:dtm_path} for an illustration of a sampled path from this supervising process. We learn to sample from the posterior $p^\theta_{Y|t}\approx q_{Y|t}$ using flow matching with $A=(t,X_t)$ and $B=Y$. This means we learn a velocity field $u_s^\theta(y|t,x_t)$ and train it with Algorithm \ref{alg:tm_training} and the CFM loss in \eqref{e:cfm_loss}. \rev{Note that in this case one can also learn a continuous time $t\in [0,T]$ which allows more flexible sampling.} 

\begin{wrapfigure}[5]{r}{0.23\columnwidth}
  \vspace{-40pt} 
  \hspace{-15pt}
  \centering
  \includegraphics[width=0.2100867\columnwidth]{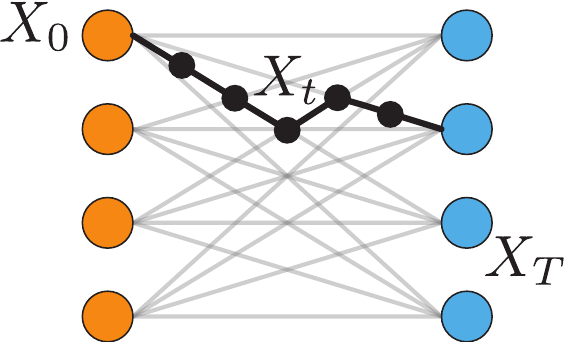} 
  \caption{DTM path, eq.\,\ref{e:dtm}.}
  \label{fig:dtm_path}
  \vspace{0pt}
\end{wrapfigure}
The last remaining component is choosing the architecture for $u_s^\theta$. Let $x=(x^1,\ldots,x^n)$ be a reshaped state to $n$ tokens. For example, each $x^i$ can represent a patch in an image $x$. Next, note that in each transition step we need to sample $Y\sim p^\theta_{Y|t}(\cdot|X_t)$ by approximating the solution of the ODE in \eqref{e:ode}. Therefore, to keep the sampling process efficient, we follow \citep{li2024autoregressive} and use a small \emph{head} $g^\theta$ that generates all tokens in a batch and is fed with latents from a large \emph{backbone} $f^\theta$. Our velocity model is defined as
\begin{equation}\label{e:DTM_arch}
    u_s^\theta(y|t,x_t) = \brac{g_{s,t}^\theta(y^1,h_t^1), \ldots, g_{s,t}^\theta(y^n,h_t^n)},
\end{equation}
where $h_t^i$ is the $i$-th output token of the backbone, \ie, $[h_t^1,h_t^2,\ldots,h_t^n] = f_t^\theta(x_t)$. See Figure \ref{fig:archs} (DTM) for an illustration of this architecture. One limitation of this architecture worth mentioning is that in each transition step, each token $y^i$ is generated independently, which limits the power of this kernel. We discuss this in the experiments section but nevertheless demonstrate that DTM with this architecture still leads to state-of-the-art image generation model. 

\textbf{Connection to flow matching} 
Although flow matching~\citep{lipman2022flow,liu2022flow, albergo2022building} is a \emph{deterministic}  process while DTM samples from a stochastic transition kernel in each step, a connection between the two is revealed by noting that the \emph{expectation} of a DTM step coincides with Flow Matching Euler step, \ie,
\begin{eqnarray}
    \E\brac{Y\vert X_t=x} = \E \brac{X_T-X_0 \vert X_t=x} = u_t(x),
\end{eqnarray}
which is exactly the marginal velocity in flow matching, see Figure \ref{fig:tm_vs_fm}. In fact, as $T\too \infty$ (or equivalently, steps are getting smaller), DTM is becoming more and more deterministic, converging to FM with Euler step, providing a novel and unexpected elementary proof (\ie, without the continuity equation) for FM marginal velocity. In Appendix~\ref{a:dtm_convergness} we prove
\begin{theorem}{(informal)}\label{thm:dtm_convergence}
    As the number of steps increases, $T\too \infty$, DTM converges to Euler step FM, $$X_{t+k} \approx x_t + \frac{k}{T}\E\brac{X_T-X_0|X_t=x_t},$$
    as $k/T\too 0$, where $X_\ell$, $\forall\ell>t$ is defined by Algorithm \ref{alg:tm_sampling} with a optimally trained $p^\theta_{Y|t}$. 
\end{theorem}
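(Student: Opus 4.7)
The plan is to unroll the Markov chain produced by Algorithm~\ref{alg:tm_sampling} under the optimality hypothesis $p^\theta_{Y\mid t}=q_{Y\mid t}$, use the tower property to reduce each expected increment to the flow-matching marginal velocity $u_s(x):=\E[X_T-X_0\mid X_s=x]$, and exploit the fact that each sampling step displaces the state by only $O(1/T)$ so that $u_{t+j}(X_{t+j})$ remains close to $u_t(x_t)$ over the window of $k$ steps.

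First, combining equation~\ref{e:dtm} with the optimality hypothesis and telescoping yields $X_{t+k}-x_t=\tfrac{1}{T}\sum_{j=0}^{k-1}Y_{t+j}$, where $Y_{t+j}\sim q_{Y\mid t+j}(\cdot\mid X_{t+j})$ and the $X$'s are generated by the chain started at $X_t=x_t$. Conditioning on $X_t=x_t$ and applying the tower property, $\E[Y_{t+j}\mid X_t=x_t]=\E[u_{t+j}(X_{t+j})\mid X_t=x_t]$, which already identifies the one-step conditional mean with the flow-matching marginal velocity at the intermediate time $t+j$. This is the elementary route to the FM velocity that the theorem advertises.

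Next, under mild regularity of $p_T$ ensuring joint Lipschitz continuity of $u$ in $(s,x)$ and a finite second moment of $Y$, I would establish by induction on $j$ two estimates: $\E\norm{X_{t+j}-x_t}^2=O((j/T)^2)$, obtained from the recursion and the second-moment bound on $Y$; and hence $\E\norm{u_{t+j}(X_{t+j})-u_t(x_t)}^2=O((j/T)^2)$, obtained from the Lipschitz property in both arguments. Summing over $j=0,\dots,k-1$ gives an expected drift-approximation error of order $k^2/T^2$, which is $o(k/T)$ whenever $k/T\to 0$. This yields the stated approximation in conditional mean.

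For a pathwise statement I would decompose each increment as $Y_{t+j}=u_{t+j}(X_{t+j})+M_{t+j}$ where $M_{t+j}$ is a martingale-difference with bounded conditional variance, and bound the fluctuation $\tfrac{1}{T}\sum_j M_{t+j}$ in $L^2$ by $O(\sqrt{k}/T)$. The main obstacle I anticipate is precisely this: the stochastic error is negligible compared to the drift $(k/T)u_t(x_t)$ only in the joint regime $k\to\infty$ and $k/T\to 0$, so outside that regime the ``$\approx$'' can only be read as convergence in conditional mean. A secondary technical burden is verifying the Lipschitz regularity of $u$ used above; for the linear supervising process defined by \eqref{e:Xt_linear} this follows from standard score-based smoothness estimates under mild tail and smoothness assumptions on $p_T$.
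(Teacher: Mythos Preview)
Your proposal is correct and follows essentially the same route as the paper's proof in Appendix~\ref{a:dtm_convergness}: both telescope the increments, use the tower property to identify $\E[Y_{t+j}\mid X_{t+j}]$ with the FM marginal velocity $u$, assume joint Lipschitz continuity of $u$ together with a uniform second-moment bound $\E[\norm{Y_{\ell h}}^2\mid X_0=x]\le c(x)$, and obtain $L^2$ convergence in the joint regime $k\to\infty$, $k/T\to 0$. The only organizational difference is that the paper expands $\E\bigl\|\tfrac{1}{k}\sum_\ell Y_{\ell h}-f_0(x)\bigr\|^2$ directly and bounds the cross terms $\E[Y_{\ell h}\cdot Y_{mh}\mid X_0=x]=\norm{f_0(x)}^2+O(kh)$ one by one, whereas you package the same computation as a drift-plus-martingale-difference decomposition; the paper then checks the two hypotheses for the DTM kernel explicitly (Lipschitzness of $u_t$ from the Gaussian $p_{t\mid 1}$, and the second-moment bound under bounded data $\norm{X_T}\le r$), which corresponds to the ``secondary technical burden'' you flag.
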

We attribute the empirical success of DTM over flow matching to its more elaborate kernel. 

\begin{figure*}
\centering
\begin{tabular}{@{}c@{\hspace{8pt}}c@{\hspace{5pt}}c@{}}
     \includegraphics[width=0.244186\linewidth]{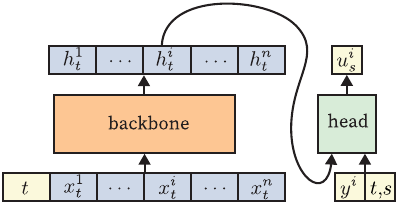} &  
     \includegraphics[width=0.360465\linewidth]{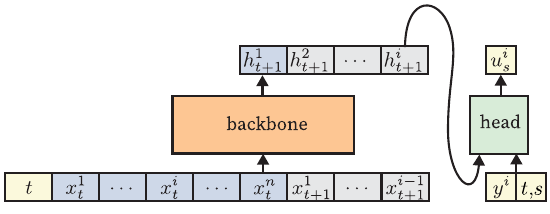} &
     \includegraphics[width=0.329457\linewidth]{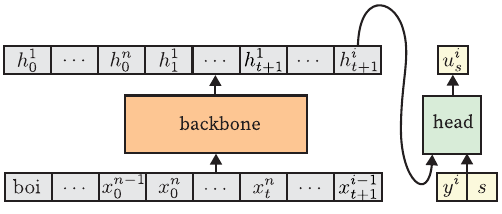} 
     \\
     DTM & ARTM & FHTM 
\end{tabular}        
        \caption{Architectures of the methods suggested in the paper. Backbone (orange) is the main network (transformer); head (green) is a small network (2\% backbone parameters); blue tokens use full attention, gray tokens are causal; $u_s^i$ is the output velocity.}
        \label{fig:archs}
\end{figure*}

\textbf{Autoregressive Transition Matching} Our second instance of TM is geared towards incorporating state-of-the-art media generation in autoregressive models, and utilizes the following choices:
\begin{myframe}
    \centering
    \begin{tabular}{cccc} 
        \multirow{2}{*}{\vspace{-5pt}\centering \textbf{ARTM:}} & 
        \textbf{Supervising process} & 
        \textbf{Parametrization} & 
        \textbf{Modeling} \\
        \cmidrule(lr){2-2} \cmidrule(lr){3-3} \cmidrule(lr){4-4}
         &     
        $X_t$ independent linear & 
        $Y=X_{t+1}$ & 
        $B=Y^i\,|\,A=(t,X_t,X_{t+1}^{<i})$ 
    \end{tabular}
\end{myframe}
\begin{wrapfigure}[9]{r}{0.51\columnwidth}
  \vspace{-16pt}
  \centering
  \includegraphics[width=0.45\columnwidth]{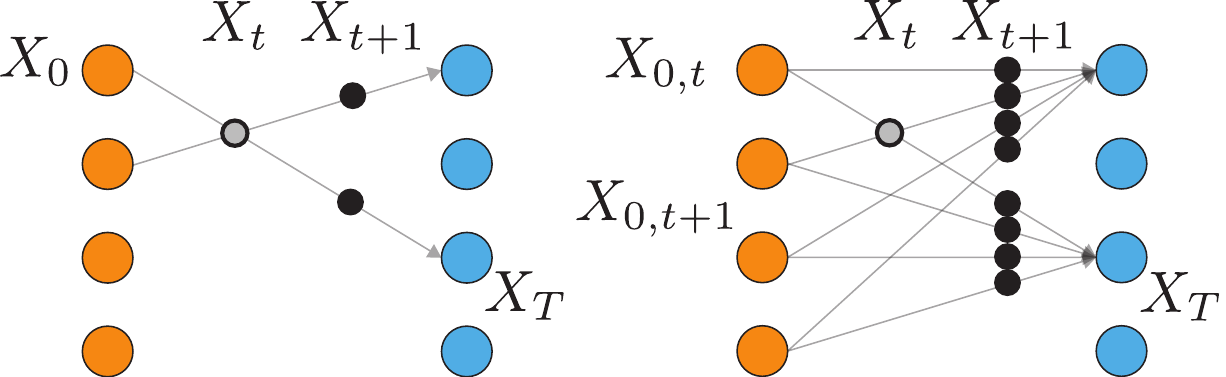}
  \caption{Linear process (left) and independent linear process (right) showing possible $X_{t+1}$ given a sample $X_t$. The independent process has much wider support for $X_{t+1}$ given $X_t$.}
  \label{fig:linear_processes}
  \vspace{0pt}
\end{wrapfigure}
In this case we use a novel supervising process we call \emph{independent linear process}, defined by
\begin{equation}\label{e:Xt_linear_independent}
    X_t = \parr{1-\frac{t}{T}}X_{0,t}+\frac{t}{T}
    X_T, \ t\in [T], 
\end{equation}
where $X_{0,t} \sim \gN(0,I)$, $t\in[T]$ are all i.i.d.~samples. Sampling $q_{t,t+1|T}(\cdot|X_T)$ is done by sampling $X_{0,t}$ and $X_{0,t+1}$ and using \ref{e:Xt_linear_independent}. Although the independent linear process has the same marginals $q_t$ as the linear process in \eqref{e:Xt_linear}, it enjoys better regularity of the conditional $q_{t+1|t}(\cdot|x_t)$, see Figure \ref{fig:linear_processes} for an illustration, and as demonstrated later in experiments is key for building state-of-the-art Autoregressive image generation models. 

For the transition kernel we use an Autoregressive 
 (AR) model with the choice of $Y=X_{t+1}$. As before, we let a state written as series of tokens $x=(x^1,\ldots,x^n)$ and write the target kernel $q_{t+1|t}$ using the probability chain rule (as usual in AR modeling), 
\begin{eqnarray*}
    q_{t+1|t}(X_{t+1}|X_t) = \prod_{i=1}^{n} q_{t+1|t}^{i}(X_{t+1}^{i}|X_t,X_{t+1}^{< i}),
\end{eqnarray*}
where $X_{t+1}^{<1}$ is the empty state. We will learn to sample from $q^i_{t+1|t}$ using FM with $A=(t,X_t,X_{t+1}^{<i})$ and $B=X_{t+1}^i$. That is, we learn a velocity field $u_s^\theta(y^i|t,x_t,x_{t+1}^{<i})$ trained with the CFM loss in \eqref{e:cfm_loss}. This method builds upon the initial idea \citep{li2024autoregressive} that uses such AR modeling to map in a single transition step from $X_0$ to $X_T$ using diffusion, and in that sense ARTM is a generalization of that method. 
Lastly the architecture for $u_s^\theta$ is based on a similar construction to DTM with a few, rather minor changes. Using the same notation for the head $g^\theta$ and backbone $f^\theta$ models we define 
\begin{equation}\label{e:ARTM_arch}
u_s^\theta(y^i|t,x_t,x_{t+1}^{<i}) = g_{s,t}^\theta(y^i,h_{t+1}^{i}),
\end{equation}
with  $h_{t+1}^i = f_t^\theta(x_t,x_{t+1}^{<i})$. Figure \ref{fig:archs} (ARTM) shows an illustration of this architecture.

\textbf{Full-History ARTM} We consider a variant of ARTM that allows full "teacher-forcing" training and consequently provides a good candidate to be incorporated into multimodal AR model. 
\begin{myframe}    
    \centering
    \begin{tabular}{cccc} 
        \multirow{2}{*}{\vspace{-5pt}\centering \textbf{FHTM:}} & 
        \textbf{Supervising process} & 
        \textbf{Parametrization} & 
        \textbf{Modeling} \\
        \cmidrule(lr){2-2} \cmidrule(lr){3-3} \cmidrule(lr){4-4}
         &     
        $X_{\leq t}$ independent linear & 
        $Y=X_{t+1}$ & 
        $B=Y^i\,|\,A=(X_0,\ldots,X_t,X_{t+1}^{<i})$ 
    \end{tabular}
\end{myframe}
The idea is to use the full history of states, namely considering the kernel 
\begin{eqnarray}
    q_{t+1|0,\ldots,t}(X_{t+1}|X_0,\ldots,X_t) = \prod_{i=1}^{n} q_{t+1|0,\ldots,t}^{i}(X_{t+1}^{i}|X_0,\ldots,X_t,X_{t+1}^{< i}),
\end{eqnarray}
and train an FM sampler from $q^i_{t+1|0,\ldots,t}$ with the choices $A=(X_0,\ldots,X_t,X_{t+1}^{<i})$ (no need to add time $t$ due to the full state sequence) and $B=X_{t+1}^i$. The architecture of the velocity $u_s$ is defined by
\begin{equation}\label{e:ARTM_arch}
    u_s^\theta(y^i|x_0,\ldots,x_{t},x_{t+1}^{<i}) = g_s^\theta(y^i,h_{t+1}^i),
\end{equation}
with $h_{t+1}^i=f^\theta(x_0,\ldots,x_{t},x_{t+1}^{<i})$ and we take $f$ to be fully causal. See Figure \ref{fig:archs} (FHTM).

\section{Related work}
\textbf{Diffusion and flows} We draw the connection to previous works from the perspective of transition matching. Diffusion models~\citep{sohldickstein2015diffusion, ho2020ddpm, song2022ddim, kingma2023variationaldiffusionmodels} can be seen as an instance of TM by choosing $D$ in the loss~(\ref{e:loss}) to be the KL divergence, derived in diffusion literature as the variational lower bound \citep{kingma2013auto}. The popular  $\eps$-prediction \citep{ho2020ddpm} in transition matching formulation is achieved by the design choices

\begin{myframe}
    \centering
    \begin{tabular}{cccc} 
        \multirow{2}{*}{\vspace{-5pt}\centering \textbf{$\eps$-prediction:}} & 
        \textbf{Supervising process} & 
        \textbf{Parametrization} & 
        \textbf{Modeling} \\
        \cmidrule(lr){2-2} \cmidrule(lr){3-3} \cmidrule(lr){4-4}
         &     
        $X_t=\sigma_tX_0 + \alpha_tX_T$ & 
        $Y=X_0$ & 
        $Y|X_t \sim \gN\parr{Y|\  \eps_t^{\theta}(X_t), w_t^2I}$ 
    \end{tabular}
\end{myframe}
where $(\sigma_t,\alpha_t)$ is the scheduler, and non-zero $w_t$ reproduces the sampling algorithm in \citep{ho2020ddpm}, while taking the limit $w_t\too0$ yields the sampling of \citep{song2022ddim}. Similarly, $x$-prediction \citep{kingma2023variationaldiffusionmodels} is achieved by the parametrization $Y=X_T$. In contrast to these work, our TM instantiations use more expressive kernel modeling. Relation to flow matching\citep{lipman2022flow,liu2022flow,albergo2022building} is discussed in Section \ref{s:tm_practical}. Generator matching \citep{holderrieth2025gm} generalizes diffusion and flow models to general continuous time Markov process modeled with arbitrary generators, while we focus on discrete time Markov processes.
Another line of works adopted supervision processes that transition between different resolutions; \citep{jin2025pyramidalfm} used flow matching with a particular coupling between different resolution as the kernel modeling; \citep{yuan2024ecar} implemented a similar scheme but allowed the FM to be dependent on the previous states (frames) in an AR manner.

\textbf{Autoregressive image generation} Early progress in text-to-image generation was achieved using autoregressive models over discrete latent spaces~\citep{ramesh2021dalle, ding2021cogview, yu2022parti}, with recent advances~\citep{tian2024varnextscale, sun2024llamagen, han2024infinity} claiming to surpass flow-based approaches. A complementary line of work explores autoregressive modeling directly in continuous space~\citep{li2024autoregressive, tschannen2024givt}, demonstrating some advantages over discrete methods. In \citep{fan2024fluid} this direction is scaled further, achieving SOTA results. In our experiments we compare these models in controlled setting and show that our autoregressive transition matching variants improves upon these models and achieves SOTA text-to-image performance with a fully causal architecture.
\begin{figure*}[t]
\centering
\renewcommand{\arraystretch}{1.2}
\setlength{\tabcolsep}{4pt}
\resizebox{\linewidth}{!}{%
\begin{tabular}{*{4}{>{\centering\arraybackslash}m{0.25\textwidth}} | *{4}{>{\centering\arraybackslash}m{0.25\textwidth}}}

FM & MAR & FHTM (Ours) & DTM (Ours) & FM & MAR & FHTM (Ours) & DTM (Ours) \\
\includegraphics[width=\linewidth]{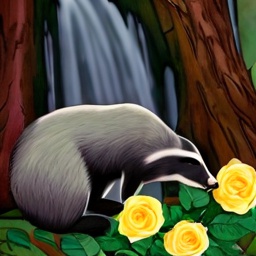} &
\includegraphics[width=\linewidth]{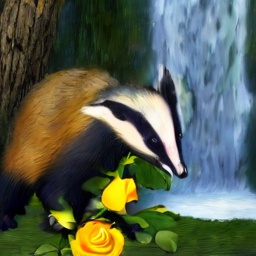} &
\includegraphics[width=\linewidth]{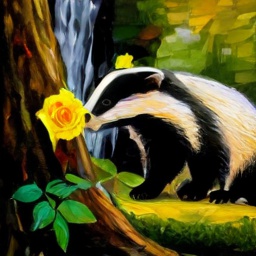} &
\includegraphics[width=\linewidth]{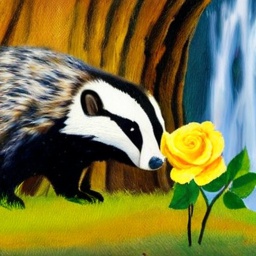} &
\includegraphics[width=\linewidth]{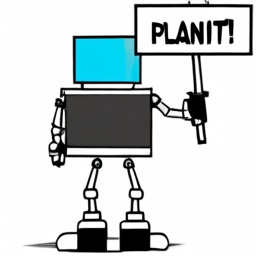} &
\includegraphics[width=\linewidth]{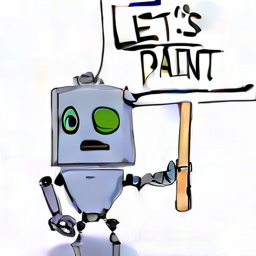} &
\includegraphics[width=\linewidth]{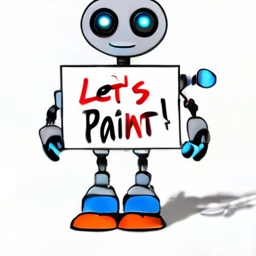} &
\includegraphics[width=\linewidth]{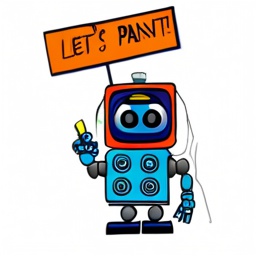} \\

\multicolumn{4}{c}{%
  \parbox[t]{\linewidth}{\centering\small\textit{\input{figures/image_comparison/45_000283_prompt.txt}}}%
} 
& \multicolumn{4}{c}{%
  \parbox[t]{\linewidth}{\centering\small\textit{\input{figures/image_comparison/550_001576_prompt.txt}}}%
} \\[6pt]

\includegraphics[width=\linewidth]{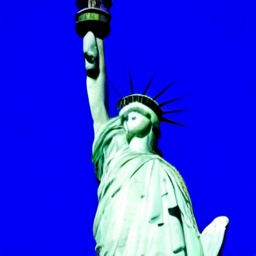} &
\includegraphics[width=\linewidth]{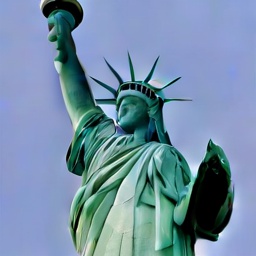} &
\includegraphics[width=\linewidth]{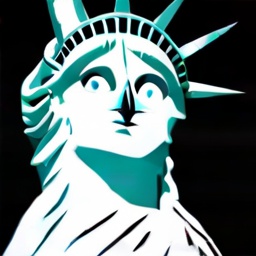} &
\includegraphics[width=\linewidth]{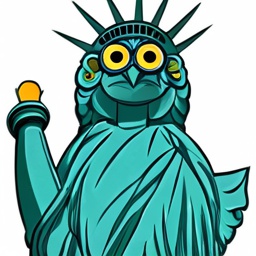} &
\includegraphics[width=\linewidth]{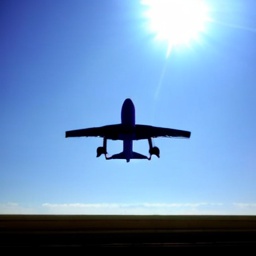} &
\includegraphics[width=\linewidth]{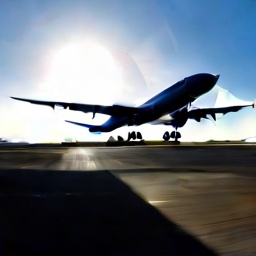} &
\includegraphics[width=\linewidth]{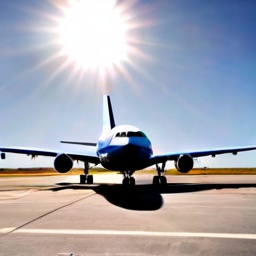} &
\includegraphics[width=\linewidth]{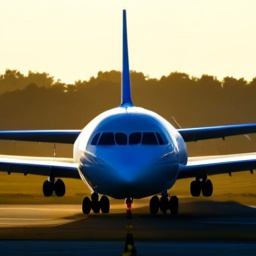} \\

\multicolumn{4}{c}{%
  \parbox[t]{\linewidth}{\centering\small\textit{\input{figures/image_comparison/228_000795_prompt.txt}}}%
} 
& \multicolumn{4}{c}{%
  \parbox[t]{\linewidth}{\centering\small\textit{\input{figures/image_comparison/223_000577_prompt.txt}}}%
} \\[6pt]

\includegraphics[width=\linewidth]{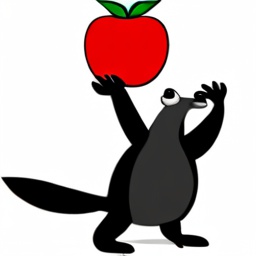} &
\includegraphics[width=\linewidth]{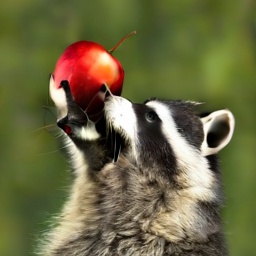} &
\includegraphics[width=\linewidth]{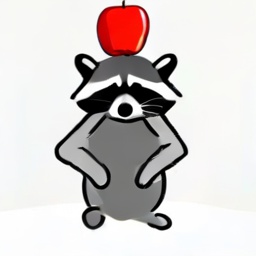} &
\includegraphics[width=\linewidth]{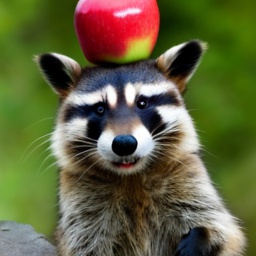} &
\includegraphics[width=\linewidth]{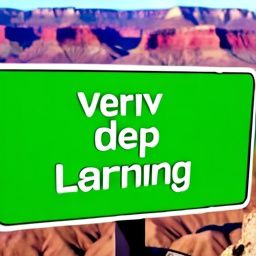} &
\includegraphics[width=\linewidth]{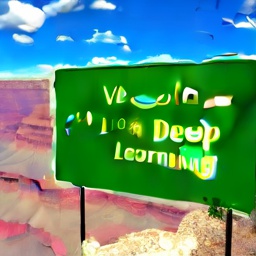} &
\includegraphics[width=\linewidth]{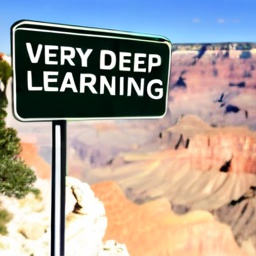} &
\includegraphics[width=\linewidth]{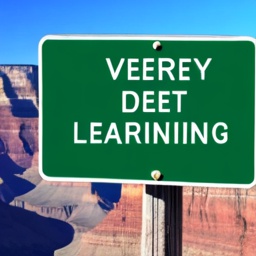} \\

\multicolumn{4}{c}{%
  \parbox[t]{\linewidth}{\centering\small\textit{\input{figures/image_comparison/650_001263_prompt.txt}}}%
} 
& \multicolumn{4}{c}{%
  \parbox[t]{\linewidth}{\centering\small\textit{\input{figures/image_comparison/1_001542_prompt.txt}}}%
} \\[6pt]
\end{tabular}
}
\caption{Samples comparison of our DTM, FHTM vs.~FM, and MAR; Images were generated on similar DiT models trained for 1M iterations.}
\label{fig:images_main}
\vspace{-10pt}
\end{figure*}

\section{Experiments}
We evaluate the performance of our Transition Matching (TM) variants---Difference TM (DTM), with $T=32$ TM steps, Autoregressive TM (ARTM-2,3) with $T=2,3$ (resp.), and Full History TM (FHTM-2,3) with $T=2,3$ (resp.) --- on the text-to-image generation task. In Appendix \ref{a:tm_algorithms} we provide training and sampling pseudocodes of the three variants in Algorithms \ref{alg:dtm_train}-\ref{alg:fhtm_sample}, and python code for training in Figures \ref{fig:dtm_python_code},\ref{fig:artm_python_code}, and \ref{fig:fhtm_python_code}.
Our baselines include flow matching (FM)~\citep{esser2024scalingrectifiedflowtransformers}, continuous-token autoregressive (AR) and masked AR (MAR)~\citep{li2024autoregressive}, and discrete-token AR~\citep{yu2022parti} and MAR~\citep{chang2022maskgit}. For continuous-token MAR we include two baselines: the original truncated Gaussian scheduler version~\citep{li2024autoregressive}, and the cosine scheduler used by Fluid (MAR-Fluid)~\citep{fan2024fluid}.

\textbf{Datasets and metrics} Training dataset is a collection of 350M licensed Shutterstock image-caption pairs. Images are of $256\times256\times3$ resolution and captions span 1–128 tokens embedded with the CLIP tokenizer~\citep{radford2021clip}. Consistent with prior work~\citep{rombach2022sd1}, for continuous state space, the images are embedded using the SDXL-VAE~\citep{podell2023sdxl} into a $32\times32\times4$ latent space, and subsequently all model training are done within this latent space. For discrete state space, images are tokenized with Chameleon-VQVAE~\citep{chameleon2025}.
Evaluation datasets are PartiPrompts~\citep{yu2022parti} and MS-COCO~\citep{lin2015mscoco} text/image benchmarks. And the reported metrics are: CLIPScore~\citep{hessel2022clipscore}, that emphasize prompt alignment; Aesthetics~\citep{schuhmann2022laion5bopenlargescaledataset} and DeQA Score~\citep{you2025deqa} that focus on image quality; PickScore~\citep{kirstain2023pickscore}, ImageReward~\citep{xu2023imagereward}, and UnifiedReward~\citep{wang2025unifiedreward} which are human preference-based and consider both image quality and text adherence. Additionally, we report results on the  GenEval~\citep{ghosh2023geneval} benchmark. 

\textbf{Architecture and optimization} All experiments are performed with the same 1.7B parameters DiT backbone ($f^{\theta}$)~\citep{peebles2023dit}, excluding a single case in which we compare to a standard LLM architecture~\citep{vaswani2023attentionneed,grattafiori2024llama3herdmodels}. Methods that require a small flow head ($g^{\theta}$), replace the final linear layer with a 40M parameters MLP~\citep{li2024autoregressive}. Text conditioning is embedded through a Flan-UL2 encoder~\citep{tay2022ul2} and injected via cross attention layers, or as prefix in the single case of the LLM architecture. Finally, the models are trained for 500K iterations with a 2048 batch size. Precise details are in Appendix \ref{aa:implementation}. We aim to facilitate a fair and useful comparison between methods in large scale by fixing the training data, using the same size architectures with identical backbone (excluding the LLM architecture that use standard transformer backbone), and same   optimization hyper-parameters. To this end, we restrict our comparison to baselines which we re-implemented.

\subsection{Main results: Text-to-image generation} 
Our main evaluation results are  reported in Tables \ref{tab:results_parti} and \ref{tab:results_coco} (in Appendix) on the DiT architecture. We find that DTM outperforms all baselines, and yields the best results across all metrics except the CLIPScore, where on the PartiPrompts benchmark it is a runner-up to MAR and our ARTM-3 and FHTM-3. On the MS-COCO benchmark, the discrete-state space models achieve the highest CLIPScore but lag behind on all other metrics, as well as on the GenEval benchmark. DTM shows a considerable gain in text adherence over the baseline FM and sets a new SOTA on the text-to-image task. Next, our AR kernels with 3 TM steps: ARTM-3 and FHTM-3, demonstrate a significant improvement compared to the AR baseline, see comparison of samples in Figure \ref{afig:artm_1_2_3} in the Appendix. When compared to MAR, ARTM-3 and FHTM-3 have comparable CLIPScore, but improve considerably on all other image quality metrics, where this is also noticeable qualitatively in Figure~\ref{fig:images_main} and Figures~\ref{afig:images_1}-\ref{afig:images_4} in the Appendix. GenEval results are reported in Table \ref{tab:genEval} and show overall DTM is leading with FHTM-3/ARTM-3 and MAR closesly follows. To our knowledge, FHTM is the first fully causal model to match FM performance on text-to-image task in continuous domain, with improved text alignment. 

\begin{table}[ht]
\centering
\renewcommand{\arraystretch}{1.2}
\setlength{\tabcolsep}{4pt}
\caption{
Evaluation of TM vs.~baselines on PartiPrompts. $^{\dagger}$ Inference  with activation caching. NFE$^*$ counts only backbone model evaluation ($f^{\theta}$). LLM and DiT have comparable number of parameters.
} 
\resizebox{\textwidth}{!}{%
\begin{tabular}{lclcccccccc}
\toprule
& Attention & Kernel & Arch & NFE$^*$ & CLIPScore~$\uparrow$ & PickScore~$\uparrow$ & ImageReward~$\uparrow$ & UnifiedReward~$\uparrow$ & Aesthetic~$\uparrow$ & DeQA Score~$\uparrow$ \\
\midrule
\multirow{4}{*}{Baseline} & \multirow{5}{*}{Full} & MAR-discrete & DiT & $256$ & \cellcolor{secondbest}\numprint[1]{26.7632} & \numprint[1]{20.6900} & \numprint{0.1397} & \numprint{4.3133} & \numprint{5.1502} & \numprint{2.4846} \\
& & MAR & DiT & $256$ & \cellcolor{best}\numprint[1]{26.98} & \numprint[1]{20.73} & \numprint{0.325} & \numprint{4.259} & \numprint{4.954} & \numprint{2.361} \\
&  & MAR-Fluid & DiT & $256$ &  \numprint[1]{25.9763} & \numprint[1]{20.5148} & \numprint{0.0684} & \numprint{3.8172} & \numprint{4.7422} & \numprint{2.3593}\\
&  & FM & DiT & $256$ &  \numprint[1]{25.97} & \cellcolor{secondbest}\numprint[1]{21.04} & \numprint{0.233} & \numprint{4.784} & \numprint{5.291} & \cellcolor{secondbest}\numprint{2.546}\\
\cmidrule(lr){1-1}
\cmidrule(lr){3-11}
\multirow{1}{*}{\shortstack{TM}} 
&  & \textbf{DTM} & DiT & $32$ & \cellcolor{secondbest}\numprint[1]{26.84} & \cellcolor{best}\numprint[1]{21.18} & \cellcolor{best}\numprint{0.532} & \cellcolor{best}\numprint{5.123} & \cellcolor{best}\numprint{5.421} & \cellcolor{best}\numprint{2.646}\\
\midrule
\multirow{2}{*}{\shortstack{Baseline}} & \multirow{7}{*}{Causal}
& AR-discrete$^{\dagger}$ & DiT & $256$ & \numprint[1]{26.6786} & \numprint[1]{20.3572} & \numprint{-0.0068} & \numprint{3.7371} & \numprint{4.8147} & \numprint{2.3826} \\
& & AR$^{\dagger}$ & DiT & $256$ & \numprint[1]{24.85} & \numprint[1]{20.11} & \numprint{-0.428} & \numprint{3.405} & \numprint{4.501} & \numprint{2.265} \\
\cmidrule(lr){1-1}
\cmidrule(lr){3-11}
\multirow{4}{*}{\shortstack{TM}}  
& & \textbf{ARTM}$\bm{-2}^{\dagger}$ & DiT & $2\times256$ & \cellcolor{secondbest}\numprint[1]{26.84} & \numprint[1]{20.76} & \numprint{0.289} & \numprint{4.493} & \numprint{5.031} & \numprint{2.374} \\
& & \textbf{FHTM}$\bm{-2}^{\dagger}$ & DiT & $2\times256$ & \cellcolor{secondbest}\numprint[1]{26.84} & \numprint[1]{20.83} & \numprint{0.300} & \numprint{4.592}  & \numprint{5.132} & \numprint{2.440}\\
& & \textbf{ARTM}$\bm{-3}^{\dagger}$ & DiT & $3\times256$ & \cellcolor{best}\numprint[1]{27.02} & \numprint[1]{20.87} & \numprint{0.375} & \numprint{4.769}  & \numprint{5.211} & \numprint{2.528} \\
& & \textbf{FHTM}$\bm{-3}^{\dagger}$ & DiT & $3\times256$ & \cellcolor{best}\numprint[1]{27.00} & \numprint[1]{20.85} & \numprint{0.309} & \numprint{4.768}  & \numprint{5.149} & \numprint{2.438}\\
& & \textbf{FHTM}$\bm{-3}^{\dagger}$ & LLM & $3\times256$ & \cellcolor{best}\numprint[1]{26.95} & \cellcolor{secondbest}\numprint[1]{20.96} & \cellcolor{secondbest}\numprint{0.426} & \cellcolor{secondbest}\numprint{5.023}  & \cellcolor{secondbest}\numprint{5.298} & \numprint{2.542}\\
\bottomrule
\end{tabular}
}
\label{tab:results_parti}
\end{table}
\begin{table}[ht]
\centering
\renewcommand{\arraystretch}{1.2}
\setlength{\tabcolsep}{4pt}
\caption{
Evaluation of TM versus baselines on GenEval; same settings as Table \ref{tab:results_parti}.
}
\resizebox{\textwidth}{!}{%
\begin{tabular}{lclccccccccc}
\toprule
& Attention & Kernel & Arch & NFE$^*$ & Overall~$\uparrow$ & Single-object~$\uparrow$ & Two-objects~$\uparrow$ & Counting~$\uparrow$ & Colors~$\uparrow$ & Position~$\uparrow$ & Color Attribute~$\uparrow$ \\
\midrule
\multirow{4}{*}{Baseline} & \multirow{5}{*}{Full} & MAR-discrete & DiT & $256$ & \numprint{0.4438} & \numprint{0.8625} & \numprint{0.4343} & \numprint{0.3670} & \numprint{0.6595} & \numprint{0.13} & \numprint{0.29} \\
& & MAR & DiT & $256$ & \cellcolor{secondbest} \numprint{0.52} & \cellcolor{best} \numprint{0.98} & \cellcolor{secondbest} 0.56 & \cellcolor{secondbest} \numprint{0.43} & \numprint{0.73} & \numprint{0.11} & \cellcolor{secondbest} \numprint{0.38} \\
&  & MAR-Fluid & DiT & $256$ & \numprint{0.4438} & \numprint{0.9} & \numprint{0.3333} & \numprint{0.3670} & \numprint{0.7553} & \numprint{0.12} & \numprint{0.28} \\
&  & FM & DiT  & $256$ & \numprint{0.47} & \numprint{0.91} & \numprint{0.52} & \numprint{0.27} & \numprint{0.71} & \numprint{0.12} & \numprint{0.34} \\
\cmidrule(lr){1-1}
\cmidrule(lr){3-12}
\multirow{1}{*}{\shortstack{TM}} 
&  & \textbf{DTM} & DiT  & $32$ & \cellcolor{best} \numprint{0.54} & \numprint{0.93} & \cellcolor{best} \numprint{0.58} & \numprint{0.35} & \cellcolor{best} \numprint{0.79} & \cellcolor{best} \numprint{0.20} & \cellcolor{best} \numprint{0.46} \\
\midrule
\multirow{2}{*}{\shortstack{Baseline}} 
& \multirow{6}{*}{Causal}
& AR-discrete$^{\dagger}$ & DiT & $256$ & \numprint{0.40760} & \cellcolor{secondbest}\numprint{0.9625} & \numprint{0.4040} & \numprint{0.3291} & \numprint{0.5957} & \numprint{0.07} & \numprint{0.19} \\
& & AR$^{\dagger}$ & DiT & $256$ & \numprint{0.34} & \numprint{0.86} & \numprint{0.26} & \numprint{0.15} & \numprint{0.63} & \numprint{0.06} & \numprint{0.15} \\
\cmidrule(lr){1-1}
\cmidrule(lr){3-12}
\multirow{4}{*}{\shortstack{TM}}  
& & \textbf{ARTM}$\bm{-2}^{\dagger}$ & DiT & $2\times256$ & \numprint{0.49} & \numprint{0.95} & \numprint{0.51} & \numprint{0.39} & \cellcolor{best} \numprint{0.79} & \numprint{0.11} & \numprint{0.27} \\
& & \textbf{FHTM}$\bm{-2}^{\dagger}$ & DiT & $2\times256$ & \numprint{0.48} & \cellcolor{secondbest} \numprint{0.96} & \numprint{0.48} & \numprint{0.25} & \cellcolor{secondbest} \numprint{0.78} & \numprint{0.09} & \numprint{0.37} \\
& & \textbf{ARTM}$\bm{-3}^{\dagger}$ & DiT & $3\times256$ &  \numprint{0.51} & \numprint{0.95} & \numprint{0.54} & \numprint{0.41} & \cellcolor{best} \numprint{0.79} & \numprint{0.16} & \numprint{0.28} \\
& & \textbf{FHTM}$\bm{-3}^{\dagger}$ & DiT & $3\times256$ & \cellcolor{secondbest} \numprint{0.52} & \cellcolor{best} \numprint{0.98} & \numprint{0.54} & \cellcolor{best} \numprint{0.44} & \numprint{0.74} & \numprint{0.16} & \numprint{0.34} \\
& & \textbf{FHTM}$\bm{-3}^{\dagger}$ & LLM & $3\times256$ & \numprint{0.49} & \numprint{0.94} & \numprint{0.55} & \numprint{0.37} & \numprint{0.69} & \cellcolor{secondbest} \numprint{0.17} & \numprint{0.29} \\
\bottomrule
\end{tabular}
}
\label{tab:genEval}
\end{table}

\newpage
\textbf{Image generation with causal model} Beyond improving prompt alignment and image quality in text-to-image task, a central goal of recent research~\citep{zhou2024transfusion,chameleon_yu2023scaling} is to develop multimodal models also capable of reasoning about images. This direction aligns naturally with our approach, as the fully causal FHTM variant enables seamless integration with large-language models (LLM) standard architecture, training, and inference algorithms. As a first step toward this goal, we demonstrate in Table \ref{tab:results_parti} and \ref{tab:results_coco} that FHTM, implemented with an LLM architecture replacing 2D with 1D positional encoding and input the text condition only at the first layer, can match and even surpass the performance of approximately the same size DiT architecture. Furthermore, it matches or improve upon all baselines across all metrics. Further implementation details are in Appendix~\ref{aa:implementation}.

\subsection{Evaluations}
\newcommand{\greenboxpaper}[2]{%
  \pgfmathsetmacro{\tempcolor}{#2}%
  \pgfmathsetmacro{\tempcolorbb}{0.6+0.4*\tempcolor}
  \edef\tempval{\noexpand\cellcolor[rgb]{\tempcolor,\tempcolorbb,\tempcolor}}%
  \tempval #1%
}
\begin{wraptable}[6]{r}{0.35\textwidth} \vspace{-10pt}  %
    \centering
    \caption{FM and DTM sampling times.}
    \label{tab:fm_vs_dtm_times_main_paper}
    \begin{adjustbox}{width=\linewidth}
    \begin{tabular}{cccc}
Kernel & time (sec) & CLIPScore & PickScore \\
\toprule
FM & \redbox{10.8} & \greenboxpaper{26.0}{0.8} & \greenboxpaper{21.0}{0.8} \\
DTM & \redbox{1.6} & \greenboxpaper{26.8}{0.6} & \greenboxpaper{21.1}{0.6} \\
\end{tabular}
\end{adjustbox}
\end{wraptable}
\textbf{Sampling efficiency} \rev{One important benefit in the DTM variant is its sampling efficiency compared to flow matching. In Table \ref{tab:dtm_steps_heatmap} we report CLIPScore and PickScore for DTM and FM for different numbers of backbone and head steps while in Table \ref{tab:dtm_steps_time} we log the corresponding forward times. Notably, the number of backbone forwards in DTM sampling can be reduced considerably without sacrificing generation quality. Table \ref{tab:fm_vs_dtm_times_main_paper} presents the superior sampling efficiency of DTM over FM: DTM achieves state-of-the-art results with only 16 backbone forwards, leading to an almost 7-fold speedup compared to FM, which requires 128 backbone forwards for optimal quality in this case. 
In contrast to DTM, ARTM/FHTM do not offer any speed-up, in fact they require backbone forwards equal to the number of transition steps times the number of image tokens, as specified in Tables \ref{tab:results_parti},\ref{tab:genEval},\ref{tab:results_coco}; Figure \ref{tab:results_coco} reports  CLIPScore and PickScore for different number of head forwards which demonstrates that this number can be reduced up to 4 with some limited reduction in performance for ARTM/FHTM sampling.   }

\textbf{Dependent vs.~independent linear process} To highlight the impact of the supervising process, we compare the linear process (\ref{e:Xt_linear}), where $X_0$ is sampled once for all $t\in[T]$, with the independent linear process (\ref{e:Xt_linear_independent}), where $X_{0,t}$ is sampled for each $t\in[T]$ independently, on our autoregressive kernels: ARTM-3 and FHTM-3. The models are trained for 100K iterations and CLIPScore and PickScore are evaluated every 10K iterations. As shown in Figure~\ref{fig:dep_vs_ind_process}, the independent linear process is far superior to the linear process on these kernels, see further discussion in Appendix \ref{aa:dependent_vs_independent}.

\textbf{DTM Kernel expressiveness} The DTM kernel (see \eqref{e:DTM_arch}) generates each token $y^i$ of dimension $2\times2\times4$, corresponding to an image patch, independently in each transition step. This architecture choice is done mainly for performance reasons to allow efficient transitions. In Figure \ref{fig:head_patch_size} we compare performance using a higher dimension $y^i$, corresponding to a $2\times8\times4$ patches. \rev{As can be seen in these graphs, performance improves for this larger patch kernel for low number of transition steps (1-4 steps) and surprisingly stays almost constant for very low number of head step, up to even a single step. The fact that performance does not improve for larger number of transition steps} can be partially explained with Theorem~\ref{thm:dtm_convergence} that shows that larger number of steps result in a simpler transition kernel (which in the limit coincides with flow matching).

\section{Conclusions}
We introduce Transition Matching (TM), a novel generative paradigm that unifies and generalizes diffusion, flow and continuous autoregressive models. We investigate three instances of TM: DTM, which surpasses state-of-the-art flow matching in image quality and text alignment; and the causal ARTM and fully causal FHTM that achieve generation quality comparable to non-causal methods. The improved performance of ARTM/FHTM comes at the price of a higher sampling cost (\eg, see the NFE counts in Table \ref{tab:results_parti}). DTM\rev{, in contrast, requires less backbone forwards and leads to significant speed-up over flow matching sampling, see \eg, Table \ref{tab:fm_vs_dtm_times_main_paper}}. Future research directions include improving the training and/or sampling via different time schedulers and distillation, as well as incorporating FHTM in a multimodal system. Our work does not introduce additional societal risks beyond those related to existing image generative models.

\medskip

\clearpage
\newpage
\bibliographystyle{assets/plainnat}
\bibliography{main}

\clearpage
\newpage

\setcounter{tocdepth}{2}
\tableofcontents
\allowdisplaybreaks

\newpage
\beginappendix

\section{Experiments}
\subsection{Implementation details}
\label{aa:implementation}
\paragraph{DiT architecture} The DiT architecture~\citep{peebles2023dit} uses 24 blocks of a self-attention layer followed by cross attention layer with the text embedding~\citep{raffel2023t5model}, with a 2048 hidden dimension, 16 attention heads, and utilize a 3D positional embedding~\citep{dehghani2023patchnpacknavit}. Embedded image~\citep{podell2023sdxl} size is  $32\times32\times4$ and input to the DiT trough a patchify layer with patch size of $2\times2\times4$. The total number of parameters is 1.7B.

\paragraph{LLM architecture} The LLM architecture~\citep{grattafiori2024llama3herdmodels} is similar to the DiT with the following differences: (i) time injection is removed, (ii) cross attention layer is removed and text embedding is input as a prefix (iii) it uses a simple 1D instead of 3D positional embedding. To compensate for reduction in number of parameters, we increase the number of self-attention layers to 34, reaching 1.7B total number of parameters (comparable to the DiT).

\paragraph{Flow head architecture} Following \citep{li2024autoregressive} we use an MLP with 6 layers and a hidden dimension of 1024. to convert from the backbone hidden dimension (2048) to the MLP hidden dimension (1024) we use a simple linear layer. Finally, we replace the time input with AdaLN\citep{peebles2023dit} time injection.

\paragraph{Optimization} The models are trained for 500K iterations, with a 2048 total batch size, $1*e^{-4}$ constant learning rate and 2K iterations warmup.

\paragraph{Classifier free guidance} To support classifier free guidance (CFG), during training, with probability of $0.15$, we drop the text prompt and replace it with empty prompt. Following \citep{li2024autoregressive}, during sampling, we apply CFG to the velocity of the flow head ($g^{\theta}$) with a guidance scale of $6.5$.

\subsection{Main results: Text-to-image generation}

\paragraph{Additional Kernels and Baselines} Similar to the extension of the AR kernel to ARTM, We extend the MAR-Fluid kernel to 2 and 3 transition steps, resulting with the \textbf{MARTM}$\bm{-2}$ and \textbf{MARTM}$\bm{-3}$ kernels.
Furthermore, we investigate the performance of the \textit{Restart} sampling algorithm \citep{xu2023restart} on the FM kernel, were noise is added during the sampling process. We follow the authors' suggestion and perform 1 restart from $t=0.6$ to $t=0.4$, 3 restarts from $t=0.8$ to $t=0.6$, and an additional 3 restarts from $t=1$ to $t=0.8$. The sampling is performed on a base of 1000 steps, resulting with a total of 2400 NFE. As an additional baseline, we sample the FM kernel with 2400 NFE.
Results can be found in Tables \ref{tab:mar_results_parti},\ref{tab:mar_results_geneval},\ref{tab:mar_results_coco}.

\begin{table}[ht]
\centering
\renewcommand{\arraystretch}{1.2}
\setlength{\tabcolsep}{4pt}
\caption{
Evaluation of MARTM and the \textit{Restart} sampling algorithm baselines on PartiPrompts.
}
\resizebox{\textwidth}{!}{%
\begin{tabular}{lclcccccccc}
\toprule
& Attention & Kernel & Arch & NFE$^*$ & CLIPScore~$\uparrow$ & PickScore~$\uparrow$ & ImageReward~$\uparrow$ & UnifiedReward~$\uparrow$ & Aesthetic~$\uparrow$ & DeQAScore~$\uparrow$ \\
\midrule
\multirow{6}{*}{Baseline} & \multirow{6}{*}{Full} 
& MAR-Fluid & DiT & $256$ &  \numprint[1]{25.9763} & \numprint[1]{20.5148} & \numprint{0.0684} & \numprint{3.8172} & \numprint{4.7422} & \numprint{2.3593}\\
& & \textbf{MARTM}$\bm{-2}$ & DiT & $256$ & \numprint[1]{26.6757} & \numprint[1]{20.9342} & \numprint{0.3550} & \numprint{4.6910} & \numprint{5.1316} & \numprint{2.4244} \\
& & \textbf{MARTM}$\bm{-3}$ & DiT & $256$ & \numprint[1]{26.3622} & \numprint[1]{20.8517} & \numprint{0.2546} & \numprint{4.4846} & \numprint{5.1127} & \numprint{2.4932} \\
\cmidrule(lr){3-11}
&  & FM & DiT & $256$ &  \numprint[1]{25.97} & \numprint[1]{21.04} & \numprint{0.233} & \numprint{4.784} & \numprint{5.291} & \numprint{2.546}\\
&  & FM & DiT & $2400$ & \numprint[1]{25.9657} & \numprint[1]{21.0524} & \numprint{0.2373} & \numprint{4.8077} & \numprint{5.2946} & \numprint{2.5491} \\
&  & FM-\textit{Restart} & DiT & $2400$ & \numprint[1]{26.1152} & \numprint[1]{21.1134} & \numprint{0.3409} & \numprint{4.8349} & \numprint{5.3088} & \numprint{2.5306} \\
\bottomrule
\end{tabular}
}
\label{tab:mar_results_parti}
\end{table}

\begin{table}[ht]
\centering
\renewcommand{\arraystretch}{1.2}
\setlength{\tabcolsep}{4pt}
\caption{
Evaluation of MARTM and the \textit{Restart} sampling algorithm baselines on GenEval.
}
\resizebox{\textwidth}{!}{%
\begin{tabular}{lclccccccccc}
\toprule
& Attention & Kernel & Arch & NFE$^*$ & Overall~$\uparrow$ & Single-object~$\uparrow$ & Two-objects~$\uparrow$ & Counting~$\uparrow$ & Colors~$\uparrow$ & Position~$\uparrow$ & Color Attribute~$\uparrow$ \\
\midrule
\multirow{6}{*}{Baseline} & \multirow{6}{*}{Full} 
&  MAR-Fluid & DiT & $256$ & \numprint{0.4438} & \numprint{0.9} & \numprint{0.3333} & \numprint{0.3670} & \numprint{0.7553} & \numprint{0.12} & \numprint{0.28} \\
& & \textbf{MARTM}$\bm{-2}$ & DiT & $256$ & \numprint{0.5108} & \numprint{0.9375} & \numprint{0.5454} & \numprint{0.3544} & \numprint{0.7659} & \numprint{0.21} & \numprint{0.32} \\
& & \textbf{MARTM}$\bm{-3}$ & DiT & $256$ & \numprint{0.5181} & \numprint{0.9125} & \numprint{0.5757} & \numprint{0.4050} & \numprint{0.7659} & \numprint{0.14} & \numprint{0.38} \\
\cmidrule(lr){3-12}
&  & FM & DiT  & $256$ & \numprint{0.47} & \numprint{0.91} & \numprint{0.52} & \numprint{0.27} & \numprint{0.71} & \numprint{0.12} & \numprint{0.34} \\
&  & FM & DiT & $2400$ & \numprint{0.4728} & \numprint{0.9125} & \numprint{0.5050} & \numprint{0.2531} & \numprint{0.7234} & \numprint{0.14} & \numprint{0.36} \\
&  & FM-\textit{Restart} & DiT & $2400$ & \numprint{0.4927} & \numprint{0.8875} & \numprint{0.5858} & \numprint{0.2911} & \numprint{0.7340} & \numprint{0.13} & \numprint{0.38} \\
\bottomrule
\end{tabular}
}
\label{tab:mar_results_geneval}
\end{table}

\begin{table}[ht]
\centering
\renewcommand{\arraystretch}{1.2}
\setlength{\tabcolsep}{4pt}
\caption{
Evaluation of MARTM and the \textit{Restart} sampling algorithm baselines on MS-COCO.
}
\resizebox{\textwidth}{!}{%
\begin{tabular}{lclcccccccc}
\toprule
& Attention & Kernel & Arch & NFE$^*$ & CLIPScore~$\uparrow$ & PickScore~$\uparrow$ & ImageReward~$\uparrow$ & UnifiedReward~$\uparrow$ & Aesthetic~$\uparrow$ & DeQAScore~$\uparrow$ \\
\midrule
\multirow{6}{*}{Baseline} & \multirow{6}{*}{Full} 
& MAR-Fulid & DiT & $256$ &  \numprint[1]{25.4637} & \numprint[1]{20.4549} & \numprint{-0.1094} & \numprint{3.9407} & \numprint{4.8603} & \numprint{2.3822}\\
& & \textbf{MARTM}$\bm{-2}$ & DiT & $256$ & \numprint[1]{25.8743} & \numprint[1]{20.9598} & \numprint{0.1705} & \numprint{4.9328} & \numprint{5.3284} & \numprint{2.4078} \\
& & \textbf{MARTM}$\bm{-3}$ & DiT & $256$ & \numprint[1]{25.7331} & \numprint[1]{20.8891} & \numprint{0.0407} & \numprint{4.6700} & \numprint{5.2120} & \numprint{2.4514} \\
\cmidrule(lr){3-11}
&  & FM & DiT & $256$ & \numprint[1]{25.78} & \numprint[1]{21.11} & \numprint{0.088} & \numprint{5.003} & \numprint{5.450} & \numprint{2.466}\\
&  & FM & DiT & $2400$ & \numprint[1]{25.7924} & \numprint[1]{21.1209} & \numprint{0.0889} & \numprint{5.0044} & \numprint{5.4521} & \numprint{2.4672} \\
&  & FM-\textit{Restart} & DiT & $2400$ & \numprint[1]{25.8281} & \numprint[1]{21.1396} & \numprint{0.1508} & \numprint{5.1107} & \numprint{5.4753} & \numprint{2.4396} \\
\bottomrule
\end{tabular}
}
\label{tab:mar_results_coco}
\end{table}
\paragraph{Flow head NFE} We ablate the number of NFE required by the flow head ($g^{\theta}$) to reach best performance for each model. As shown in Figure~\ref{fig:head_steps}, we observe the models reach saturation with relatively low NFE, and decide to report results on Tables \ref{tab:results_parti}, \ref{tab:results_coco} and \ref{tab:genEval} with 64 NFE for the flow head.

\paragraph{TM steps vs Flow head NFE for DTM} We test the performance of the DTM variant as function of TM steps and Flow head NFE. As shown in Table~\ref{tab:dtm_steps_heatmap}, our DTM model achieve reach saturation about 16 TM steps and 4 Flow head steps, according to CLIPScore and PickScore. Generation time for a single image on a single H100 GPU is provided in Table~\ref{tab:dtm_steps_time}.

\begin{table}[ht]
\centering
\renewcommand{\arraystretch}{1.2}
\setlength{\tabcolsep}{4pt}
\caption{
Evaluation of TM versus baselines on MS-COCO. $^{\dagger}$ Inference is done with activation caching. NFE$^*$ counts only backbone model evaluation ($f^{\theta}$). LLM and DiT have comparable number of parameters.
}
\resizebox{\textwidth}{!}{%
\begin{tabular}{lclcccccccc}
\toprule
& Attention & Kernel & Arch & NFE$^*$ & CLIPScore~$\uparrow$ & PickScore~$\uparrow$ & ImageReward~$\uparrow$ & UnifiedReward~$\uparrow$ & Aesthetic~$\uparrow$ & DeQAScore~$\uparrow$ \\
\midrule
\multirow{4}{*}{Baseline} & \multirow{5}{*}{Full} 
& MAR-discrete & DiT & $256$ & \cellcolor{secondbest}\numprint[1]{26.5723} & \numprint[1]{20.6255} & \numprint{0.0085} & \numprint{4.1381} & \numprint{5.2673} & \numprint{2.4132} \\
& & MAR & DiT & $256$ & \numprint[1]{26.12} & \numprint[1]{20.66} & \numprint{0.165} & \numprint{4.615} & \numprint{5.058} & \numprint{2.342} \\
&  & MAR-Fulid & DiT & $256$ &  \numprint[1]{25.4637} & \numprint[1]{20.4549} & \numprint{-0.1094} & \numprint{3.9407} & \numprint{4.8603} & \numprint{2.3822}\\
&  & FM & DiT & $256$ & \numprint[1]{25.78} & \cellcolor{secondbest}\numprint[1]{21.11} & \numprint{0.088} & \numprint{5.003} & \cellcolor{secondbest}\numprint{5.450} & \cellcolor{secondbest}\numprint{2.466}\\
\cmidrule(lr){1-1}
\cmidrule(lr){3-11}
\multirow{1}{*}{\shortstack{TM}} 
&  & \textbf{DTM} & DiT & $32$ & \numprint[1]{26.16} & \cellcolor{best}\numprint[1]{21.19} & \cellcolor{best}\numprint{0.224} & \cellcolor{best}\numprint{5.378} & \cellcolor{best}\numprint{5.546} & \cellcolor{best}\numprint{2.582}\\
\midrule
\multirow{2}{*}{\shortstack{Baseline}} & \multirow{7}{*}{Causal}
& AR-discrete$^{\dagger}$ & DiT & $256$ & \cellcolor{best}\numprint[1]{26.6854} & \numprint[1]{20.3111} & \numprint{-0.0551} & \numprint{3.8331} & \numprint{4.9316} & \numprint{2.3352} \\
& & AR$^{\dagger}$ & DiT & $256$ & \numprint[1]{24.83} & \numprint[1]{20.11} & \numprint{-0.483} & \numprint{3.602} & \numprint{4.764} & \numprint{2.339} \\
\cmidrule(lr){1-1}
\cmidrule(lr){3-11}
\multirow{4}{*}{\shortstack{TM}}  
& & \textbf{ARTM}$\bm{-2}^{\dagger}$ & DiT & $2\times256$ & \numprint[1]{25.897} & \numprint[1]{20.75} & \numprint{0.074} & \numprint{4.702} & \numprint{5.190} & \numprint{2.407} \\
& & \textbf{FHTM}$\bm{-2}^{\dagger}$ & DiT & $2\times256$ & \numprint[1]{25.91} & \numprint[1]{20.79} & \numprint{0.070} & \numprint{4.779}  & \numprint{5.272} & \numprint{2.445}\\
& & \textbf{ARTM}$\bm{-3}^{\dagger}$ & DiT & $3\times256$ & \numprint[1]{26.07} & \numprint[1]{20.92} & \numprint{0.107} & \numprint{4.989}  & \numprint{5.350} & \numprint{2.464} \\
& & \textbf{FHTM}$\bm{-3}^{\dagger}$ & DiT& $3\times256$ & \numprint[1]{26.14} & \numprint[1]{20.98} & \numprint{0.147} & \numprint{5.232}  & \numprint{5.378} & \numprint{2.412}\\
& & \textbf{FHTM}$\bm{-3}^{\dagger}$ & LLM & $3\times256$ & \numprint[1]{26.14} & \cellcolor{secondbest}\numprint[1]{21.08} & \cellcolor{secondbest}\numprint{0.243} & \cellcolor{secondbest}\numprint{5.510}  & \cellcolor{secondbest}\numprint{5.526} & \numprint{2.507}\\
\bottomrule
\end{tabular}
}
\label{tab:results_coco}
\end{table}

\newpage
\subsection{Sampling efficiency}

\begin{table}[ht]
\centering
\caption{Performance of FM (c-d) and DTM (a-b) for different combinations of Head NFE and TM steps, computed on a subset of the PartiPrompts dataset  (1024 out of 1632). Color intensity increases with higher performance.}
\label{tab:dtm_steps_heatmap}

\begin{tabular}{cc}
 \begin{subtable}[t]{0.50\textwidth}
        \setlength{\tabcolsep}{1pt}
        \centering
        \small
        \caption{DTM CLIPScore}
        \label{tab:dtm_clip}
\begin{tabular}{cc|cccccccc}
\multicolumn{2}{c}{} & \multicolumn{8}{c}{\textbf{TM steps}} \\
\multicolumn{2}{c}{\textbf{}} & 1 & 2 & 4 & 8 & 16 & 32 & 64 & 128 \\
\cmidrule(lr){3-10}
\multirow{8}{*}{\rotatebox[origin=c]{90}{\textbf{Head NFE}}}
 & 1 & \greenbox{15.8} & \greenbox{17.0} & \greenbox{20.4} & \greenbox{22.8} & \greenbox{23.2} & \greenbox{23.2} & \greenbox{23.0} & \greenbox{22.8} \\
 & 2 & \greenbox{16.1} & \greenbox{18.6} & \greenbox{24.2} & \greenbox{26.2} & \greenbox{26.4} & \greenbox{26.4} & \greenbox{26.2} & \greenbox{26.2} \\
 & 4 & \greenbox{17.9} & \greenbox{21.1} & \greenbox{25.4} & \greenbox{26.7} & \greenbox{26.8} & \greenbox{26.7} & \greenbox{26.5} & \greenbox{26.5} \\
 & 8 & \greenbox{18.8} & \greenbox{21.2} & \greenbox{25.5} & \greenbox{26.6} & \greenbox{26.8} & \greenbox{26.6} & \greenbox{26.5} & \greenbox{26.5} \\
 & 16 & \greenbox{18.9} & \greenbox{21.3} & \greenbox{25.5} & \greenbox{26.7} & \greenbox{26.8} & \greenbox{26.6} & \greenbox{26.5} & \greenbox{26.4} \\
 & 32 & \greenbox{19.0} & \greenbox{21.2} & \greenbox{25.5} & \greenbox{26.7} & \greenbox{26.7} & \greenbox{26.7} & \greenbox{26.6} & \greenbox{26.5} \\
 & 64 & \greenbox{19.0} & \greenbox{21.3} & \greenbox{25.4} & \greenbox{26.7} & \greenbox{26.8} & \greenbox{26.6} & \greenbox{26.4} & \greenbox{26.5} \\
 & 128 & \greenbox{18.9} & \greenbox{21.3} & \greenbox{25.4} & \greenbox{26.7} & \greenbox{26.9} & \greenbox{26.7} & \greenbox{26.5} & \greenbox{26.4} \\
\end{tabular}
    \end{subtable}%

&

 \begin{subtable}[t]{0.5\textwidth}
        \setlength{\tabcolsep}{1pt}
        \centering
        \small
        \caption{DTM PickScore}
        \label{tab:dtm_pick}
\begin{tabular}{cc|cccccccc}
\multicolumn{2}{c}{} & \multicolumn{8}{c}{\textbf{TM steps}} \\
\multicolumn{2}{c}{\textbf{}} & 1 & 2 & 4 & 8 & 16 & 32 & 64 & 128 \\
\cmidrule(lr){3-10}
\multirow{8}{*}{\rotatebox[origin=c]{90}{\textbf{Head NFE}}}
 & 1 & \greenpickbox{17.6} & \greenpickbox{17.8} & \greenpickbox{18.6} & \greenpickbox{19.4} & \greenpickbox{19.6} & \greenpickbox{19.7} & \greenpickbox{19.6} & \greenpickbox{19.6} \\
 & 2 & \greenpickbox{17.7} & \greenpickbox{18.3} & \greenpickbox{19.7} & \greenpickbox{20.6} & \greenpickbox{20.9} & \greenpickbox{21.0} & \greenpickbox{21.0} & \greenpickbox{21.0} \\
 & 4 & \greenpickbox{18.1} & \greenpickbox{18.8} & \greenpickbox{20.0} & \greenpickbox{20.8} & \greenpickbox{21.1} & \greenpickbox{21.1} & \greenpickbox{21.1} & \greenpickbox{21.1} \\
 & 8 & \greenpickbox{18.3} & \greenpickbox{18.8} & \greenpickbox{20.0} & \greenpickbox{20.8} & \greenpickbox{21.1} & \greenpickbox{21.1} & \greenpickbox{21.1} & \greenpickbox{21.2} \\
 & 16 & \greenpickbox{18.3} & \greenpickbox{18.8} & \greenpickbox{20.0} & \greenpickbox{20.9} & \greenpickbox{21.1} & \greenpickbox{21.1} & \greenpickbox{21.1} & \greenpickbox{21.1} \\
 & 32 & \greenpickbox{18.3} & \greenpickbox{18.8} & \greenpickbox{20.0} & \greenpickbox{20.9} & \greenpickbox{21.1} & \greenpickbox{21.1} & \greenpickbox{21.1} & \greenpickbox{21.1} \\
 & 64 & \greenpickbox{18.3} & \greenpickbox{18.8} & \greenpickbox{20.0} & \greenpickbox{20.9} & \greenpickbox{21.1} & \greenpickbox{21.1} & \greenpickbox{21.1} & \greenpickbox{21.1} \\
 & 128 & \greenpickbox{18.3} & \greenpickbox{18.8} & \greenpickbox{20.0} & \greenpickbox{20.8} & \greenpickbox{21.1} & \greenpickbox{21.1} & \greenpickbox{21.1} & \greenpickbox{21.1} \\
\end{tabular}
    \end{subtable}
    \\

 \begin{subtable}[t]{0.5\textwidth}
        \setlength{\tabcolsep}{1pt}
        \centering
        \small
        \caption{FM CLIPScore}
        \label{tab:fm_clip}
        \begin{tabular}{cc|cccccccc}
\multicolumn{2}{c}{} & \multicolumn{8}{c}{\textbf{Euler steps}} \\
\multicolumn{2}{c}{\textbf{}} & 1 & 2 & 4 & 8 & 16 & 32 & 64 & 128 \\
\cmidrule(lr){3-10}
\hspace{15pt}\multirow{1}{*}{\rotatebox[origin=c]{90}{}}
&  0 & \greenbox{15.8} & \greenbox{16.6} & \greenbox{19.7} & \greenbox{23.8} & \greenbox{25.6} & \greenbox{25.9} & \greenbox{25.9} & \greenbox{26.0} \\
        \end{tabular}
    \end{subtable}%
    
&

 \begin{subtable}[t]{0.5\textwidth}
        \setlength{\tabcolsep}{1pt}
        \centering
        \small
        \caption{FM PickScore}
        \label{tab:fm_pick}
        \begin{tabular}{cc|cccccccc}
\multicolumn{2}{c}{} & \multicolumn{8}{c}{\textbf{Euler steps}} \\
\multicolumn{2}{c}{\textbf{}} & 1 & 2 & 4 & 8 & 16 & 32 & 64 & 128 \\
\cmidrule(lr){3-10}
\hspace{15pt}\multirow{1}{*}{\rotatebox[origin=c]{90}{}}
&  0  & \greenpickbox{17.9} & \greenpickbox{18.0} & \greenpickbox{18.7} & \greenpickbox{20.0} & \greenpickbox{20.8} & \greenpickbox{21.0} & \greenpickbox{21.0} & \greenpickbox{21.0} \\
        \end{tabular}
    \end{subtable}%
\end{tabular}
\end{table}

\begin{table}[ht]
\centering
\caption{DTM inference time (in seconds) for different combinations of Head NFE and TM steps on a single H100 GPU. Color intensity increases with runtime. Note that 0 head steps refers to FM.}
\label{tab:dtm_steps_time}
\begin{tabular}{cc|cccccccc}
\multicolumn{2}{c}{} & \multicolumn{8}{c}{\textbf{TM steps (84 ms/step)}} \\
\multicolumn{2}{c}{\textbf{}} & 1 & 2 & 4 & 8 & 16 & 32 & 64 & 128 \\
\cmidrule(lr){3-10}
\multirow{9}{*}{\rotatebox[origin=c]{90}{\textbf{Head NFE (3.5 ms/step)}}}
& 0   & \redbox{0.1} & \redbox{0.2} & \redbox{0.3} & \redbox{0.7} & \redbox{1.3} & \redbox{2.7} & \redbox{5.4} & \redbox{10.8} \\
& 1   & \redbox{0.1} & \redbox{0.2} & \redbox{0.4} & \redbox{0.7} & \redbox{1.4} & \redbox{2.8} & \redbox{5.6} & \redbox{11.2} \\
& 2   & \redbox{0.1} & \redbox{0.2} & \redbox{0.4} & \redbox{0.7} & \redbox{1.5} & \redbox{2.9} & \redbox{5.8} & \redbox{11.6} \\
& 4   & \redbox{0.1} & \redbox{0.2} & \redbox{0.4} & \redbox{0.8} & \redbox{1.6} & \redbox{3.1} & \redbox{6.3} & \redbox{12.5} \\
& 8   & \redbox{0.1} & \redbox{0.2} & \redbox{0.4} & \redbox{0.9} & \redbox{1.8} & \redbox{3.6} & \redbox{7.2} & \redbox{14.3} \\
& 16  & \redbox{0.1} & \redbox{0.3} & \redbox{0.6} & \redbox{1.1} & \redbox{2.2} & \redbox{4.5} & \redbox{9.0} & \redbox{17.9}\\
& 32  & \redbox{0.2} & \redbox{0.4} & \redbox{0.8} & \redbox{1.6} & \redbox{3.1} & \redbox{6.3} & \redbox{12.5} & \redbox{25.1} \\
& 64  & \redbox{0.3} & \redbox{0.6} & \redbox{1.2} & \redbox{2.5} & \redbox{4.9} & \redbox{9.9} & \redbox{19.7} & \redbox{39.4} \\
& 128 & \redbox{0.5} & \redbox{1.1} & \redbox{2.1} & \redbox{4.3} & \redbox{8.5} & \redbox{17.0} & \redbox{34.0} & \redbox{68.1} \\
\end{tabular}
\end{table}

\begin{figure}[H]
    \centering
    \resizebox{\textwidth}{!}{%
    \begin{tabular}{cc}
        \includegraphics[width=0.5\textwidth]{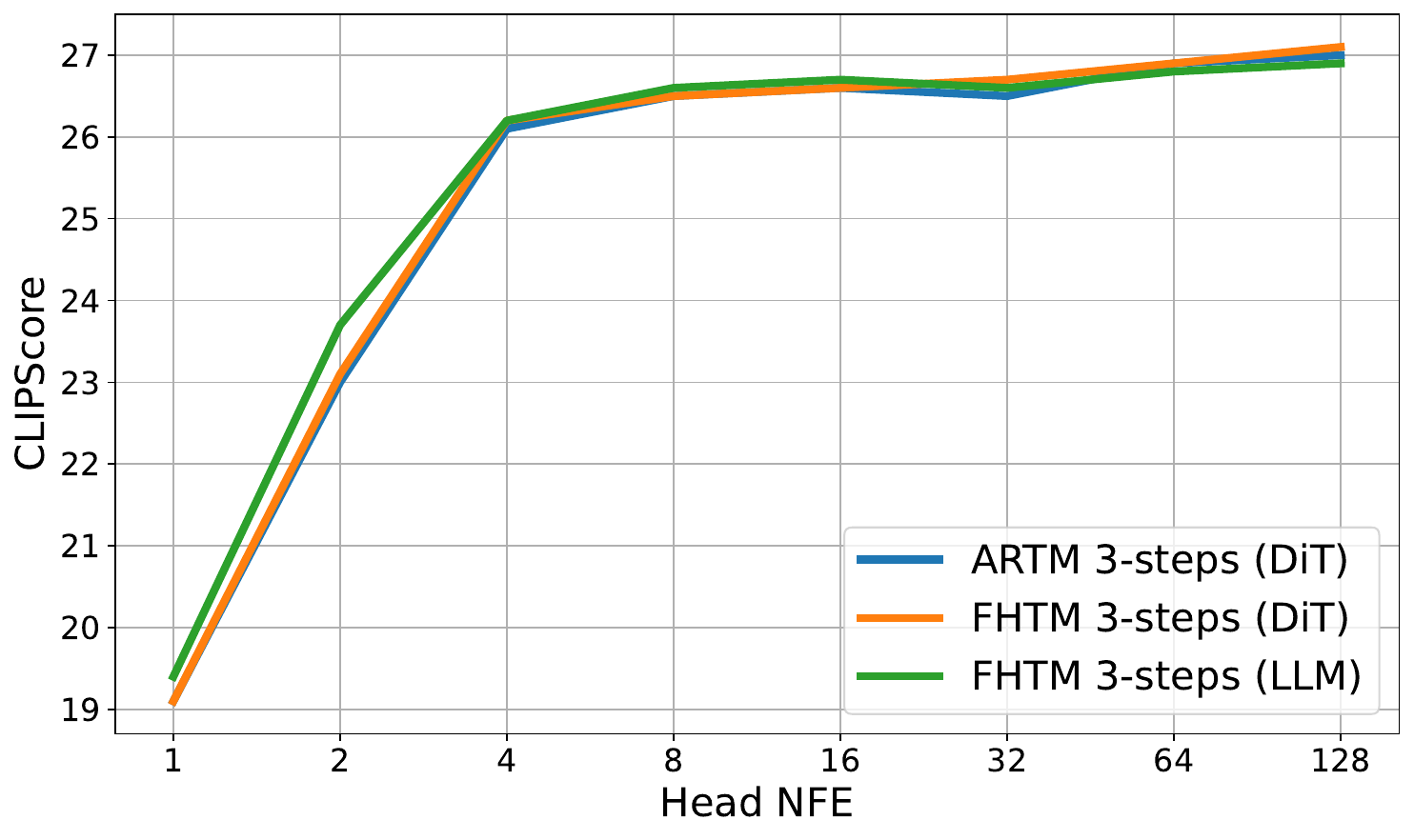} & \includegraphics[width=0.5\textwidth]{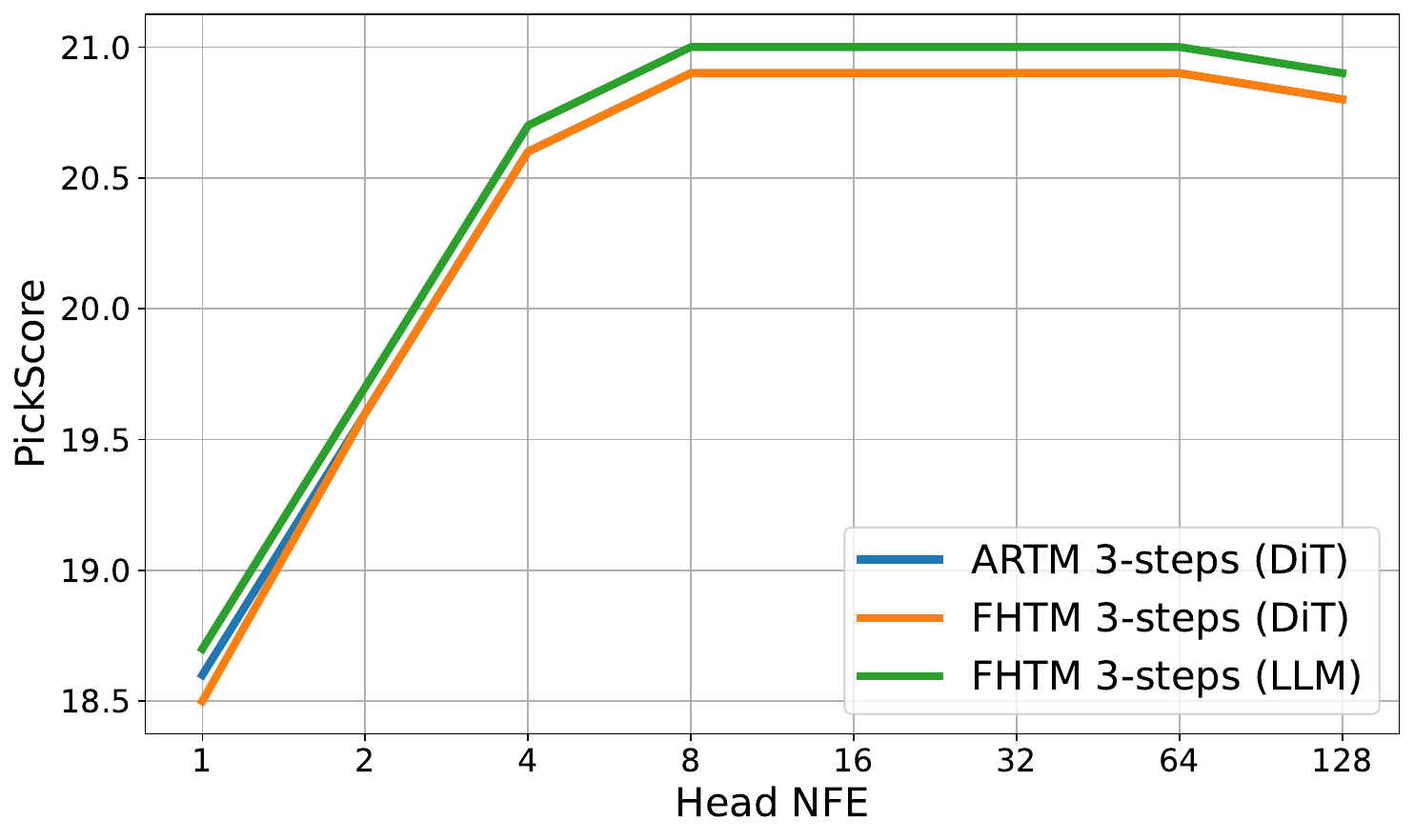}
    \end{tabular}    
    }
     \caption{
     Comparison of flow head NFE vs.~CLIPScore (left), and PickScore (right) computed on the PartiPrompts dataset. 
     }
     \label{fig:head_steps}
\end{figure}

\FloatBarrier 
\subsection{Dependent vs.~independent linear process}
\label{aa:dependent_vs_independent}
Further analysis of the generated images reveals that the AR kernels are unable to learn the linear process, resulting in low quality image generation. We hypothesize that the AR kernels exploit the linear relationship between $X_t$ and $X_{t+1}$ during training, which leads the model to learn a degenerate function and causes it to fail in inference.

\begin{figure}[H]
    \centering
    \resizebox{\textwidth}{!}{%
    \begin{tabular}{cc}
        {MS-COCO} & {PartiPrompts} \\
        \includegraphics[width=0.5\textwidth]{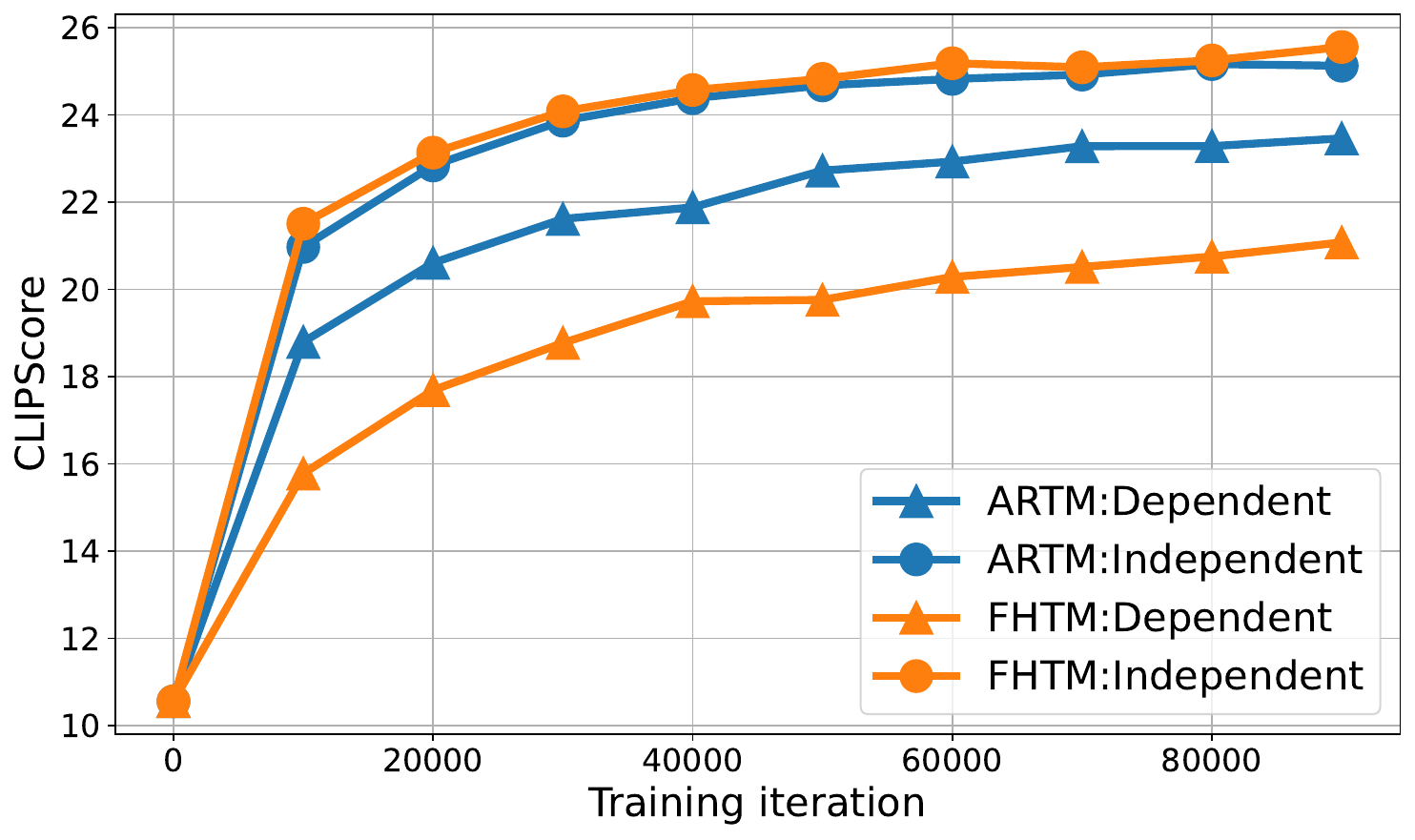} & \includegraphics[width=0.5\textwidth]{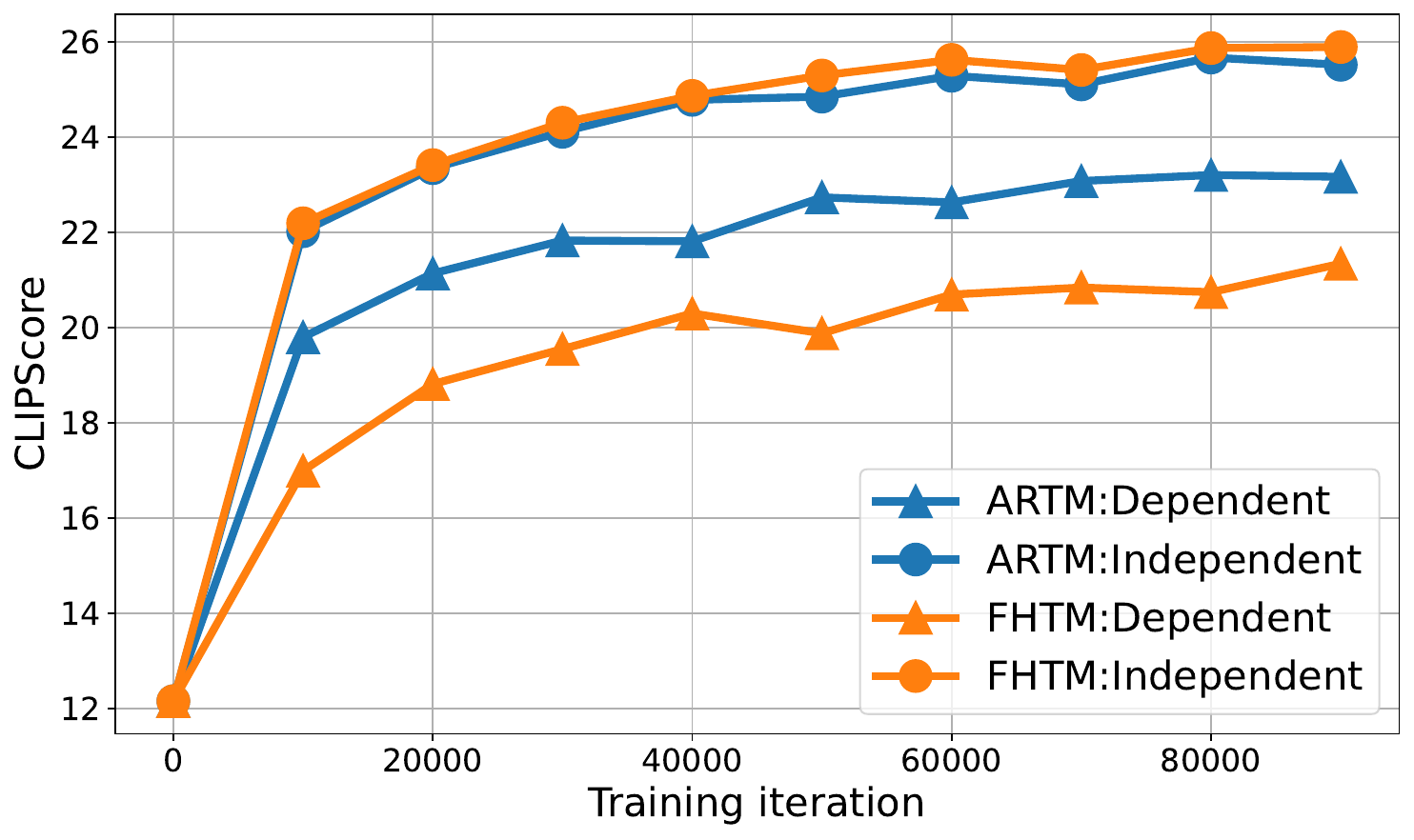} \\ \includegraphics[width=0.5\textwidth]{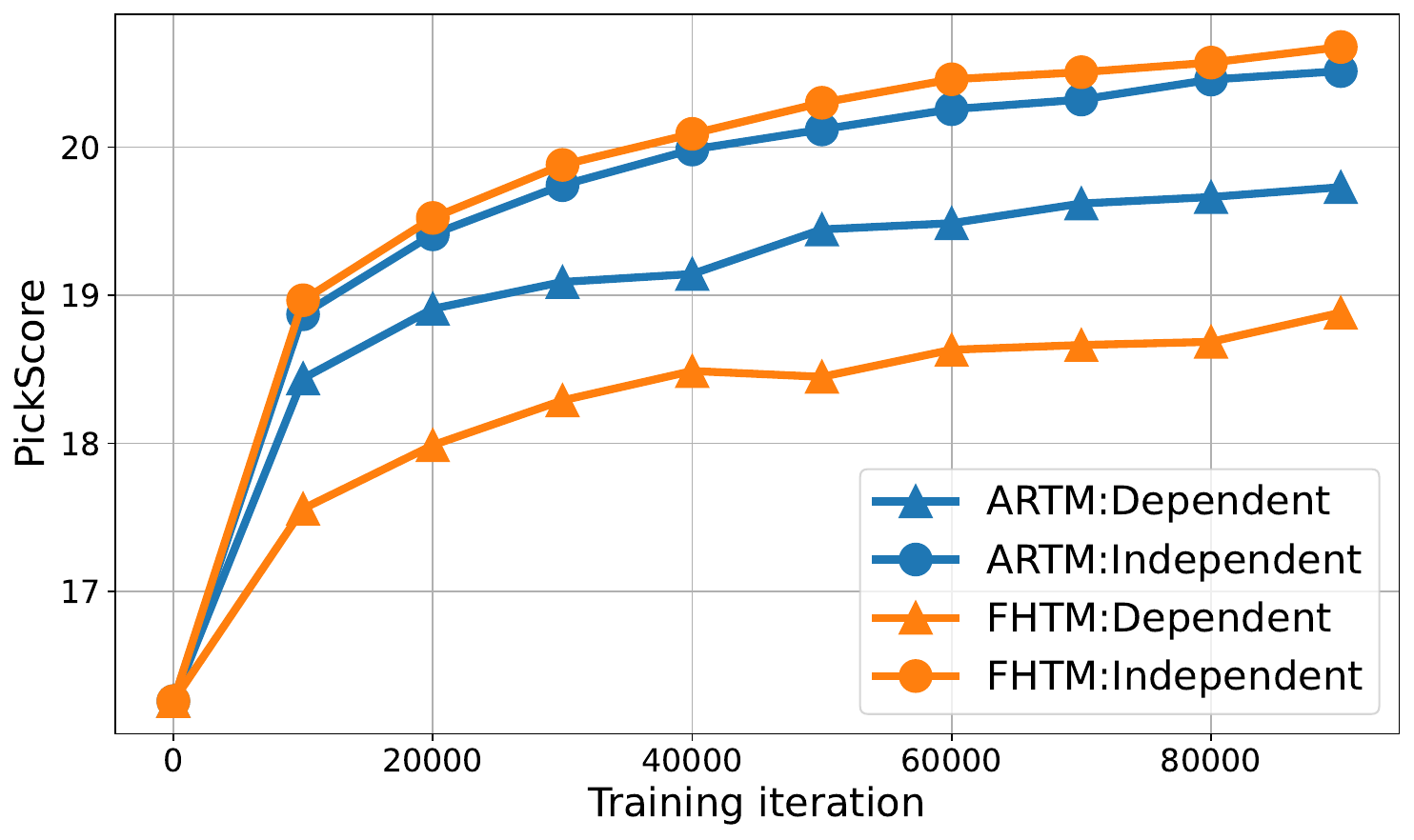} &
        \includegraphics[width=0.5\textwidth]{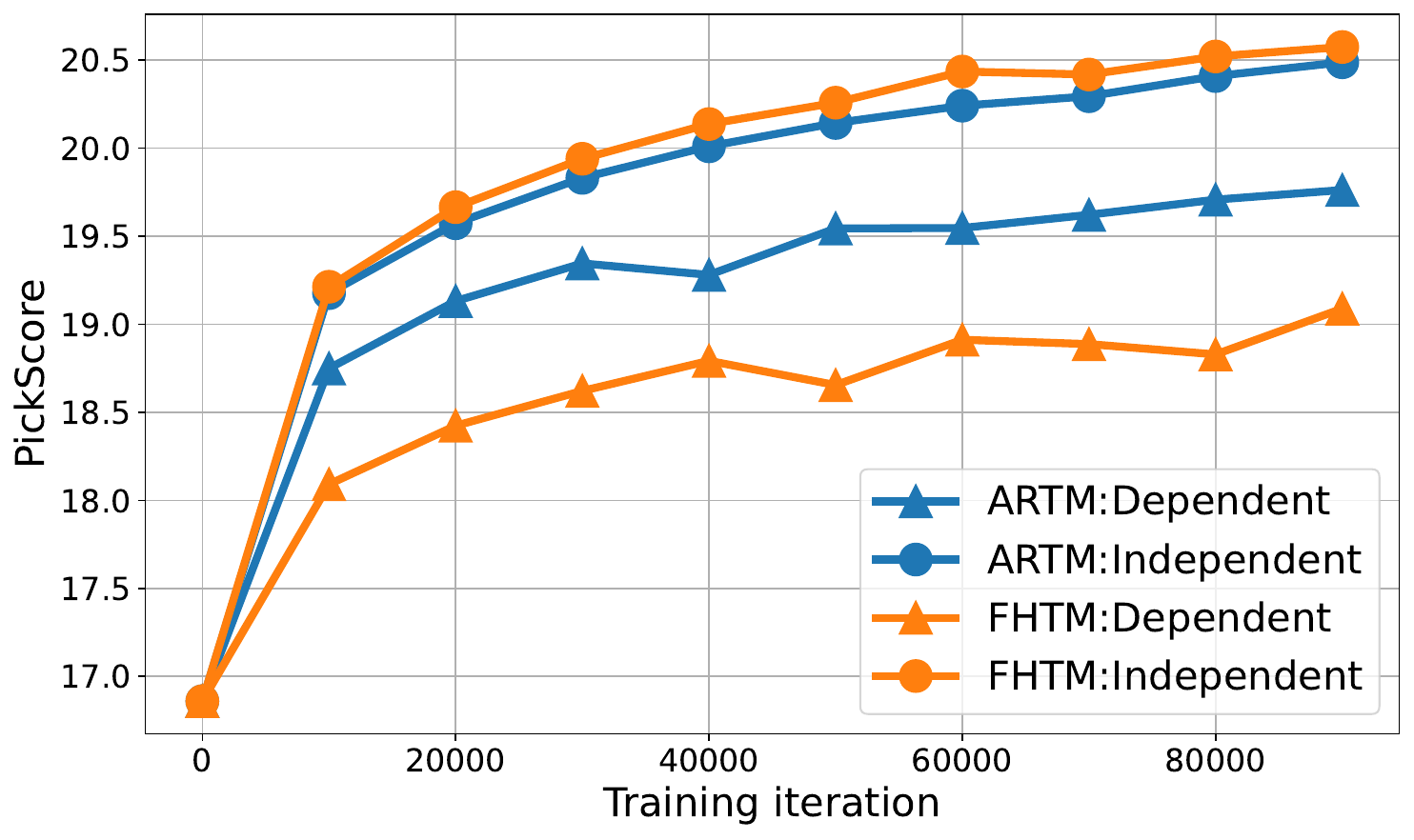}\\
    \end{tabular}    
    }
     \caption{Dependent linear process (\ref{e:Xt_linear}) vs.~Independent linear process (\ref{e:Xt_linear_independent}) on the AR kernels: ARTM-3 and FHTM-3. The models are evaluated on the MS-COCO (left) and PartiPrompts (right) with CLIPScore and PickScore every 10K training iterations across 100K iterations. Observe that on the AR kernels trained with the independent linear process are far superior to the ones trained with the dependent linear process.} 
    \label{fig:dep_vs_ind_process}
\end{figure}

\FloatBarrier 
\subsection{DTM Kernel expressiveness}
\begin{figure}[H]
    \centering
    \resizebox{\textwidth}{!}{%
    \begin{tabular}{cc}
        \includegraphics[width=0.5\textwidth]{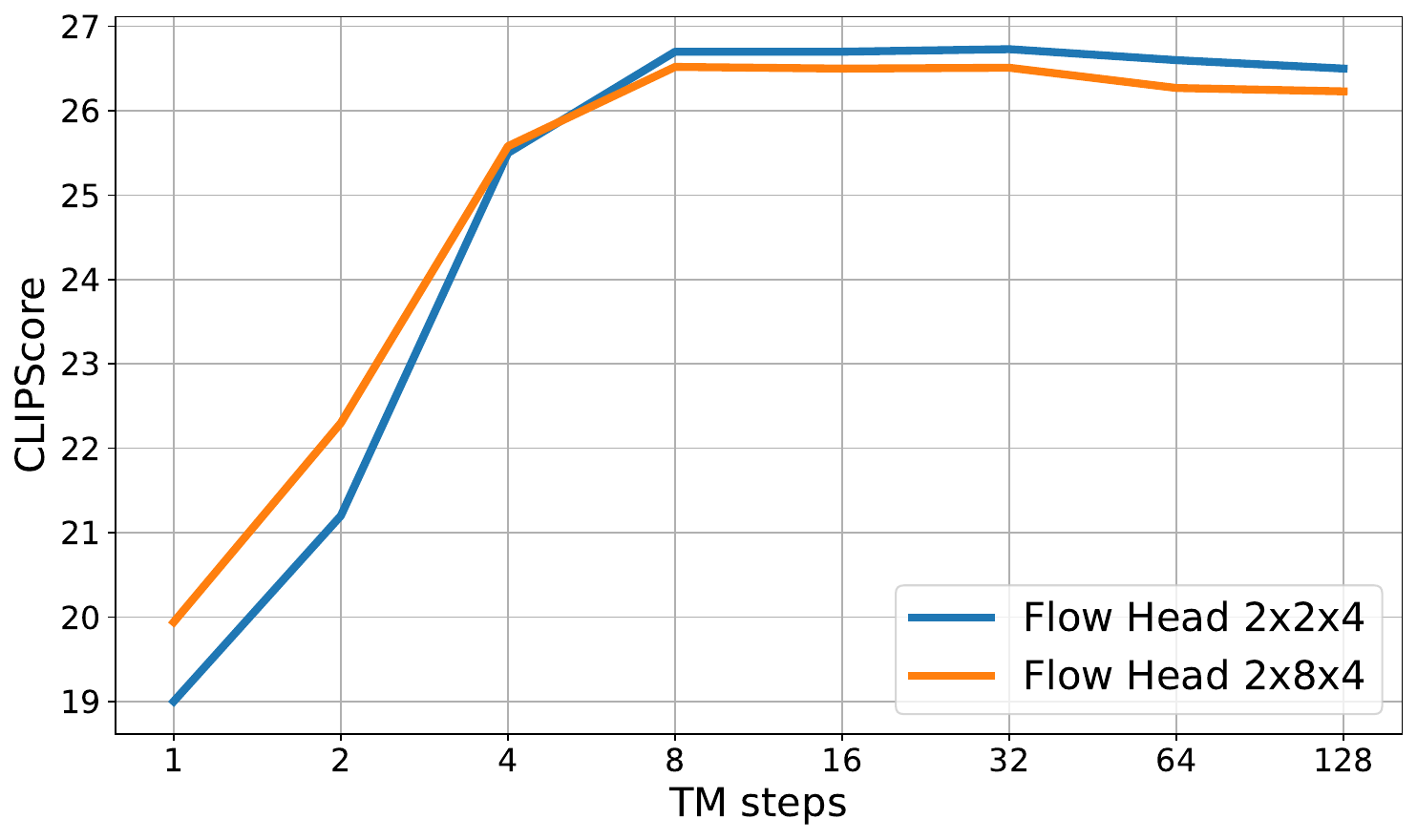} & \includegraphics[width=0.5\textwidth]{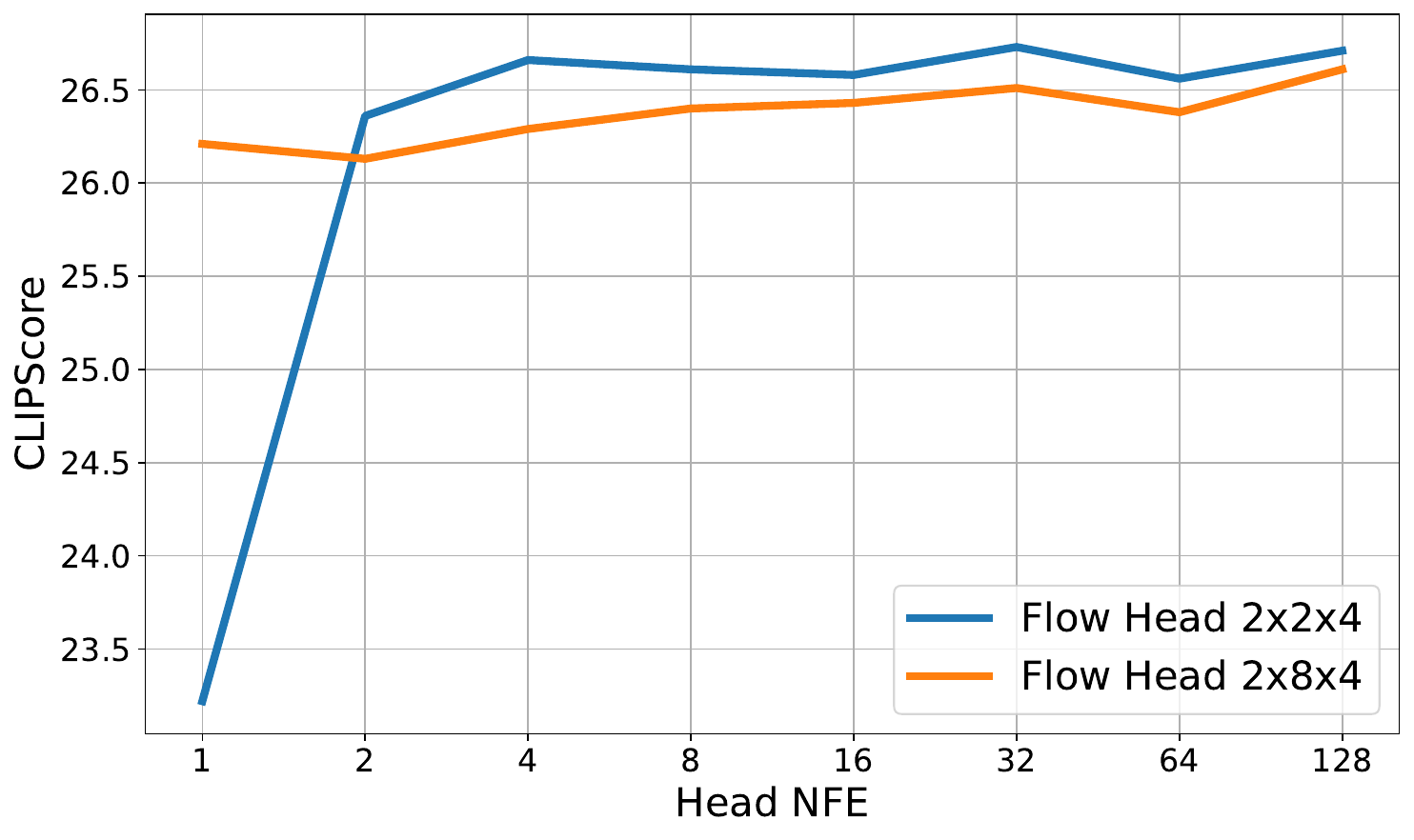} \\ \includegraphics[width=0.5\textwidth]{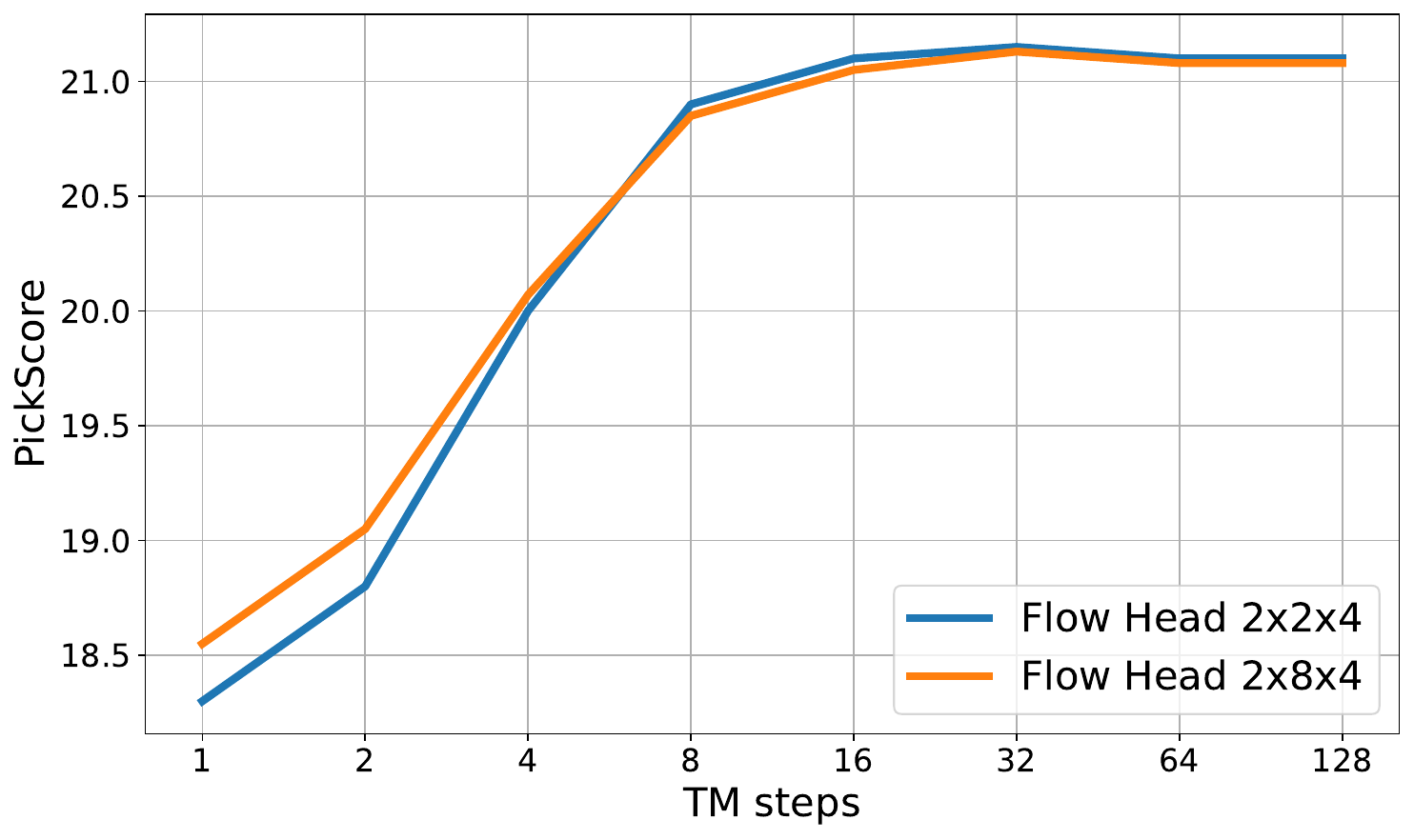} &
        \includegraphics[width=0.5\textwidth]{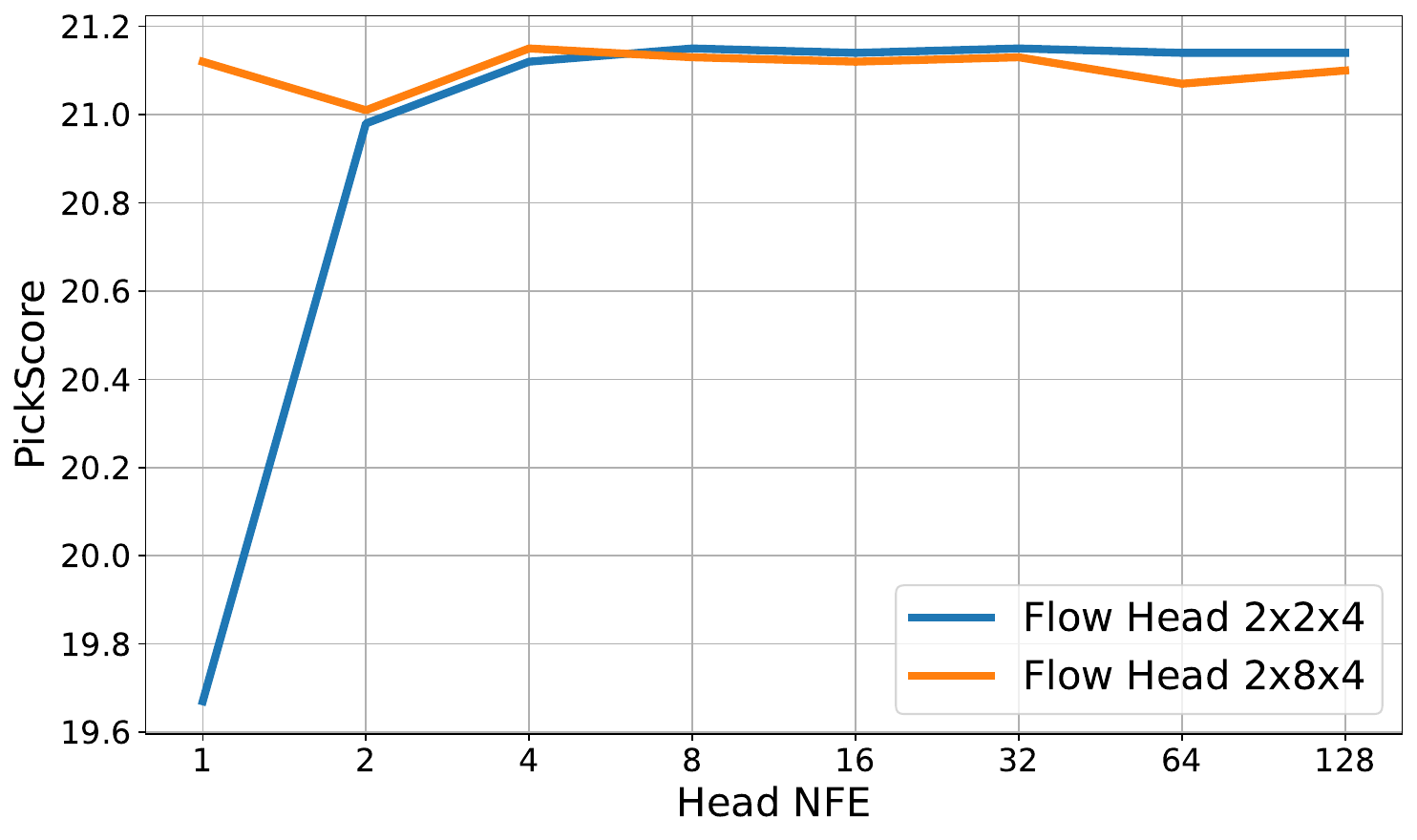}\\
    \end{tabular}    
    }
     \caption{
     Impact of flow head patch size: $2\times2\times4$ vs.~$2\times8\times4$, on the DTM performance, evaluated across varying numbers of TM steps (Left, with 32 Head NFE) and variying number of Head NFE (Right, with 32 TM steps). The metrics are CLIPScore (Top) and PickScore (Bottom) computed on the PartiPrompts dataset. On low number of TM steps, the larger flow head patch size shows an advantage in both metrics. On high number of TM steps, both patch sizes yield comparable results. This aligns with Theorem~\ref{thm:dtm_convergence}, which predicts that for infinitesimal steps size, the entries of $Y\in\R^d$ defined in \eqref{e:difference} become independent.
     }
     \label{fig:head_patch_size}
\end{figure}
\vspace{-15pt}
\FloatBarrier
\subsection{Scheduler ablation for independent linear process}
We have experimented with two transition scheduler options: uniform (as described in \ref{e:Xt_linear_independent}) and "exponential", \ie, $\frac{t}{T}\in\set{0,0.5,0.75,1}$. The results for ARTM and FHTM are reported in Table \ref{tab:uni_vs_exp} and show almost the same performance with a slight benefit towards exponential in DiT architecture and these are used in our main implementations. 
\begin{table*}[ht]
\centering
\renewcommand{\arraystretch}{1.2}
\setlength{\tabcolsep}{4pt}
\caption{Comparison of uniform and exponential transition steps.}
\begin{tabular}{lccccccc}
\toprule
 & & & & \multicolumn{2}{c}{\textbf{MS-COCO}} & \multicolumn{2}{c}{\textbf{PartiPrompts}} \\
\cmidrule(lr){5-6}
\cmidrule(lr){7-8}
Kernel & Arch & TM Steps & Scheduler & CLIPScore~$\uparrow$ & PickScore~$\uparrow$ & CLIPScore~$\uparrow$ & PickScore~$\uparrow$ \\
\midrule
\multirow{2}{*}{ARTM} & \multirow{2}{*}{DiT} & \multirow{2}{*}{3}
& Uniform & \numprint[1]{25.9914} & \numprint[1]{20.8412} & \numprint[1]{26.8494} & \numprint[1]{20.8204} \\
& & &  Exponential & \numprint[1]{26.07} & \numprint[1]{20.92} & \numprint[1]{27.02} & \numprint[1]{20.87} \\
\midrule
\multirow{2}{*}{FHTM} & \multirow{2}{*}{DiT} & \multirow{2}{*}{3}
& Uniform & \numprint[1]{25.9302} & \numprint[1]{20.9633} & \numprint[1]{26.8620} & \numprint[1]{20.8797} \\
& & & Exponential & \numprint[1]{26.14} & \numprint[1]{20.98} & \numprint[1]{27.00} & \numprint[1]{20.85} \\
\midrule
\multirow{2}{*}{FHTM} & \multirow{2}{*}{LLM} & \multirow{2}{*}{3}
& Uniform & \numprint[1]{26.0650} & \numprint[1]{21.0385} & \numprint[1]{26.9945} & \numprint[1]{20.9620} \\
& & & Exponential & \numprint[1]{26.14} & \numprint[1]{21.08} & \numprint[1]{26.95} & \numprint[1]{20.96} \\
\bottomrule
\end{tabular}
\label{tab:uni_vs_exp}
\end{table*}

\FloatBarrier
\vspace{-10pt}
\newpage
\subsection{Additional generated images comparison}
\label{aa:images_comparison}
\begin{figure}[h!]
\centering
\renewcommand{\arraystretch}{1.2}
\setlength{\tabcolsep}{4pt}
\resizebox{\linewidth}{!}{%
\begin{tabular}{*{4}{>{\centering\arraybackslash}m{0.25\textwidth}}}

FM & MAR & FHTM & DTM \\
\includegraphics[width=0.80\linewidth]{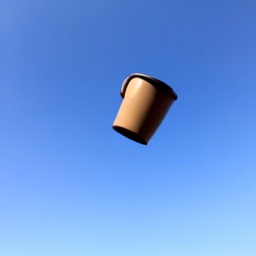} &
\includegraphics[width=0.80\linewidth]{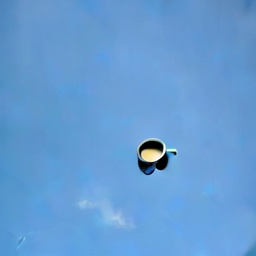} &
\includegraphics[width=0.80\linewidth]{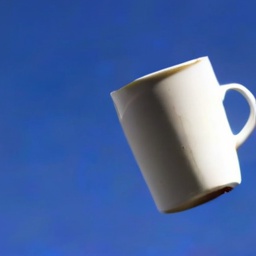} &
\includegraphics[width=0.80\linewidth]{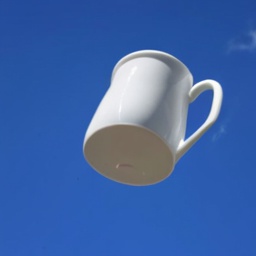} \\
\multicolumn{4}{c}{\small\textit{%
  \parbox[t]{\linewidth}{\centering\small\textit{\input{figures/image_comparison/101_000790_prompt.txt}}}%
''}} \\[6pt]

\includegraphics[width=0.80\linewidth]{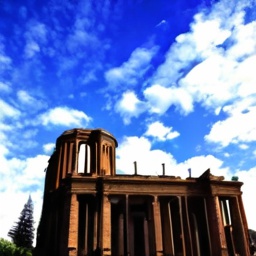} &
\includegraphics[width=0.80\linewidth]{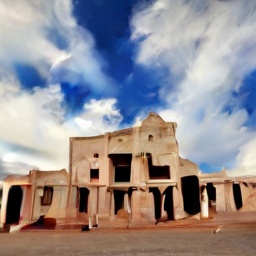} &
\includegraphics[width=0.80\linewidth]{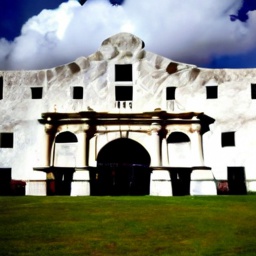} &
\includegraphics[width=0.80\linewidth]{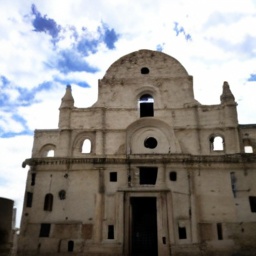} \\
\multicolumn{4}{c}{\small\textit{%
  \parbox[t]{\linewidth}{\centering\small\textit{\input{figures/image_comparison/102_000631_prompt.txt}}}%
''}} \\[6pt]

\includegraphics[width=0.80\linewidth]{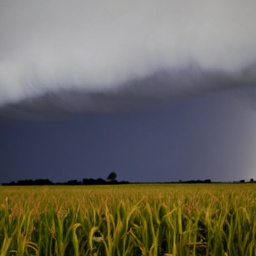} &
\includegraphics[width=0.80\linewidth]{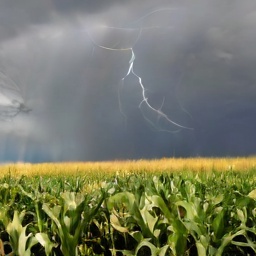} &
\includegraphics[width=0.80\linewidth]{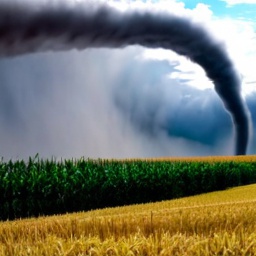} &
\includegraphics[width=0.80\linewidth]{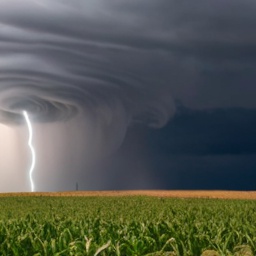} \\
\multicolumn{4}{c}{\small\textit{%
  \parbox[t]{\linewidth}{\centering\small\textit{\input{figures/image_comparison/145_000571_prompt.txt}}}%
''}} \\[6pt]

\includegraphics[width=0.80\linewidth]{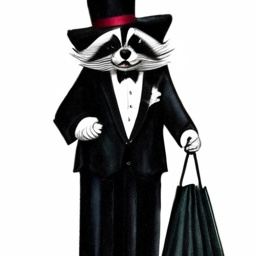} &
\includegraphics[width=0.80\linewidth]{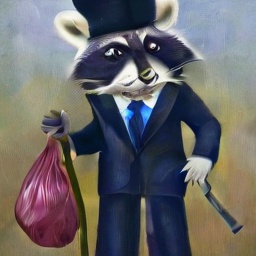} &
\includegraphics[width=0.80\linewidth]{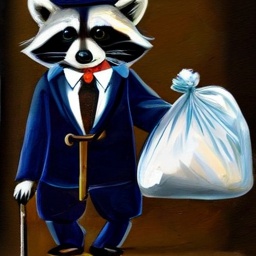} &
\includegraphics[width=0.80\linewidth]{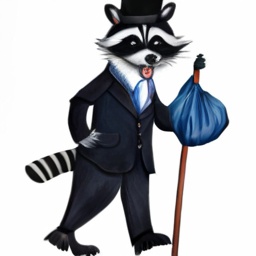} \\
\multicolumn{4}{c}{\small\textit{%
  \parbox[t]{\linewidth}{\centering\small\textit{\input{figures/image_comparison/98_000295_prompt.txt}}}%
}} \\[6pt]

\includegraphics[width=0.80\linewidth]{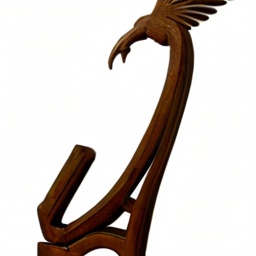} &
\includegraphics[width=0.80\linewidth]{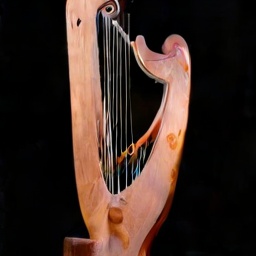} &
\includegraphics[width=0.80\linewidth]{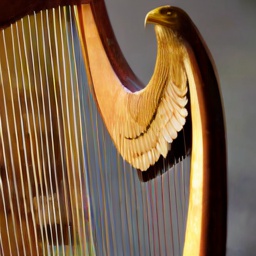} &
\includegraphics[width=0.80\linewidth]{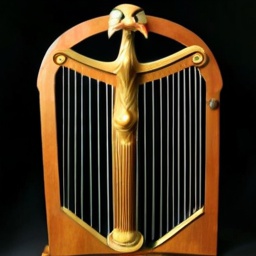} \\
\multicolumn{4}{c}{\small\textit{%
  \parbox[t]{\linewidth}{\centering\small\textit{\input{figures/image_comparison/505_000628_prompt.txt}}}%
}} \\[6pt]

\end{tabular}
}
\vspace{-5pt}
\caption{Additional generated samples of FM, MAR, FHTM, and DTM with models that are trained for 1M iterations.}
\label{afig:images_1}
\end{figure}

\begin{figure}[ht]
\centering
\renewcommand{\arraystretch}{1.2}
\setlength{\tabcolsep}{4pt}
\resizebox{\linewidth}{!}{%
\begin{tabular}{*{4}{>{\centering\arraybackslash}m{0.25\textwidth}}}
FM & MAR & FHTM & DTM \\
\includegraphics[width=0.80\linewidth]{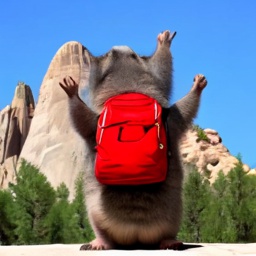} &
\includegraphics[width=0.80\linewidth]{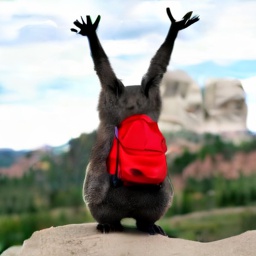} &
\includegraphics[width=0.80\linewidth]{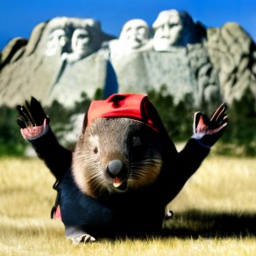} &
\includegraphics[width=0.80\linewidth]{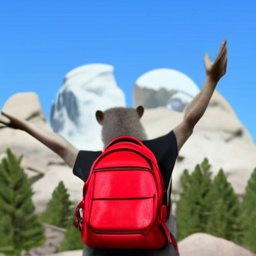} \\
\multicolumn{4}{c}{\small\textit{%
  \parbox[t]{\linewidth}{\centering\small\textit{\input{figures/image_comparison/20_000311_prompt.txt}}}%
}} \\[6pt]
\includegraphics[width=0.80\linewidth]{figures/image_comparison/223_000577_ffm.jpg} &
\includegraphics[width=0.80\linewidth]{figures/image_comparison/223_000577_mar.jpg} &
\includegraphics[width=0.80\linewidth]{figures/image_comparison/223_000577_fhtm.jpg} &
\includegraphics[width=0.80\linewidth]{figures/image_comparison/223_000577_dtm.jpg} \\
\multicolumn{4}{c}{\small\textit{%
  \parbox[t]{\linewidth}{\centering\small\textit{\input{figures/image_comparison/223_000577_prompt.txt}}}%
}} \\[6pt]
\includegraphics[width=0.80\linewidth]{figures/image_comparison/228_000795_ffm.jpg} &
\includegraphics[width=0.80\linewidth]{figures/image_comparison/228_000795_mar.jpg} &
\includegraphics[width=0.80\linewidth]{figures/image_comparison/228_000795_fhtm.jpg} &
\includegraphics[width=0.80\linewidth]{figures/image_comparison/228_000795_dtm.jpg} \\
\multicolumn{4}{c}{\small\textit{%
  \parbox[t]{\linewidth}{\centering\small\textit{\input{figures/image_comparison/228_000795_prompt.txt}}}%
}} \\[6pt]

\includegraphics[width=0.80\linewidth]{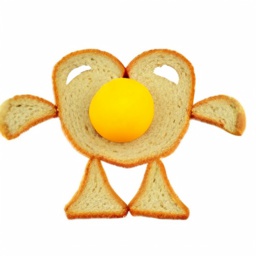} &
\includegraphics[width=0.80\linewidth]{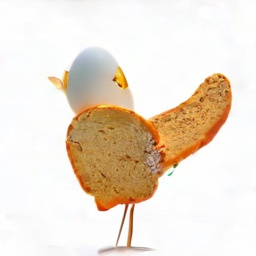} &
\includegraphics[width=0.80\linewidth]{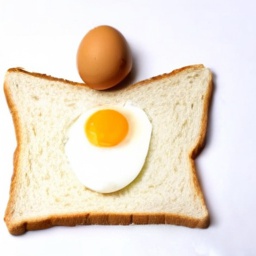} &
\includegraphics[width=0.80\linewidth]{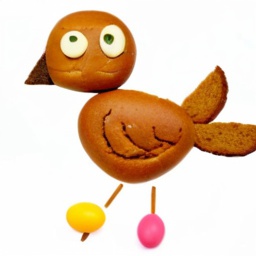} \\
\multicolumn{4}{c}{\small\textit{%
  \parbox[t]{\linewidth}{\centering\small\textit{\input{figures/image_comparison/267_000738_prompt.txt}}}%
}} \\[6pt]

\includegraphics[width=0.80\linewidth]{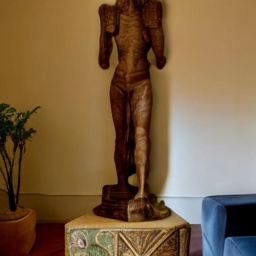} &
\includegraphics[width=0.80\linewidth]{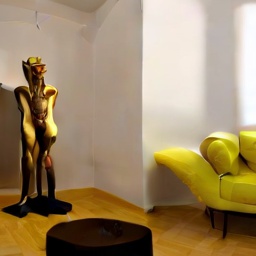} &
\includegraphics[width=0.80\linewidth]{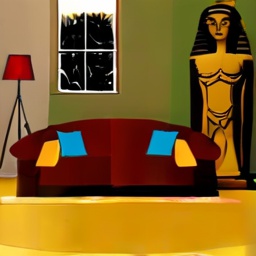} &
\includegraphics[width=0.80\linewidth]{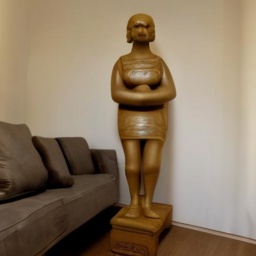} \\
\multicolumn{4}{c}{\small\textit{%
  \parbox[t]{\linewidth}{\centering\small\textit{\input{figures/image_comparison/588_000546_prompt.txt}}}%
}} \\[6pt]
\end{tabular}
}
\caption{Additional generated samples of FM, MAR, FHTM, and DTM with models that are trained for 1M iterations.}
\label{afig:images_2}
\end{figure}
\begin{figure}[ht]
\centering
\renewcommand{\arraystretch}{1.2}
\setlength{\tabcolsep}{4pt}
\resizebox{\linewidth}{!}{%
\begin{tabular}{*{4}{>{\centering\arraybackslash}m{0.25\textwidth}}}

FM & MAR & FHTM & DTM \\

\includegraphics[width=0.80\linewidth]{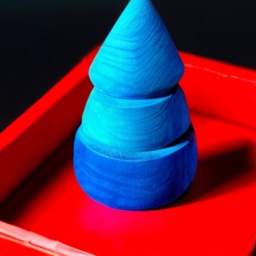} &
\includegraphics[width=0.80\linewidth]{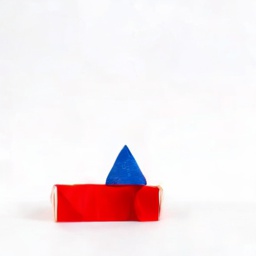} &
\includegraphics[width=0.80\linewidth]{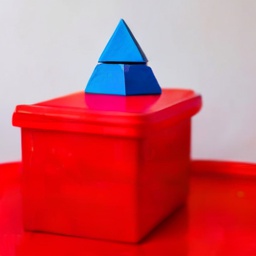} &
\includegraphics[width=0.80\linewidth]{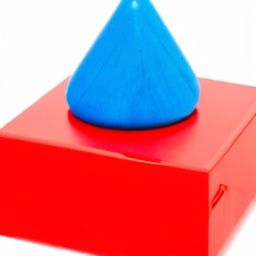} \\
\multicolumn{4}{c}{\small\textit{%
  \parbox[t]{\linewidth}{\centering\small\textit{\input{figures/image_comparison/306_000725_prompt.txt}}}%
}} \\[6pt]

\includegraphics[width=0.80\linewidth]{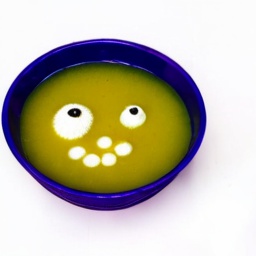} &
\includegraphics[width=0.80\linewidth]{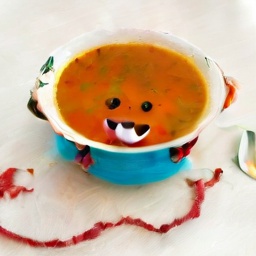} &
\includegraphics[width=0.80\linewidth]{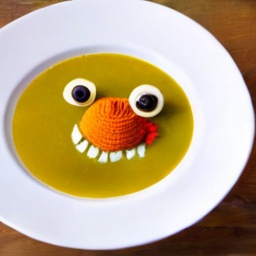} &
\includegraphics[width=0.80\linewidth]{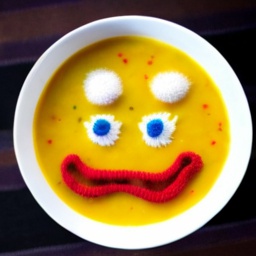} \\
\multicolumn{4}{c}{\small\textit{%
  \parbox[t]{\linewidth}{\centering\small\textit{\input{figures/image_comparison/409_000721_prompt.txt}}}%
}} \\[6pt]

\includegraphics[width=0.80\linewidth]{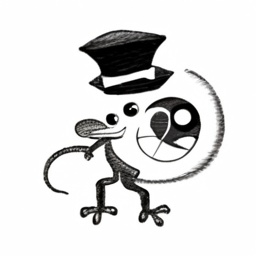} &
\includegraphics[width=0.80\linewidth]{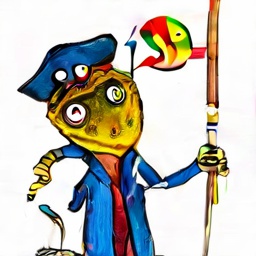} &
\includegraphics[width=0.80\linewidth]{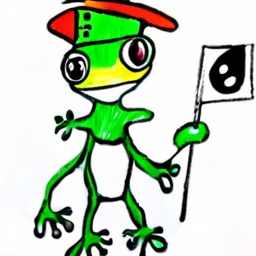} &
\includegraphics[width=0.80\linewidth]{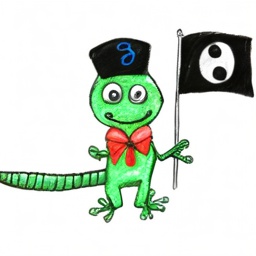} \\
\multicolumn{4}{c}{\small\textit{%
  \parbox[t]{\linewidth}{\centering\small\textit{\input{figures/image_comparison/903_001374_prompt.txt}}}%
}} \\[6pt]

\includegraphics[width=0.80\linewidth]{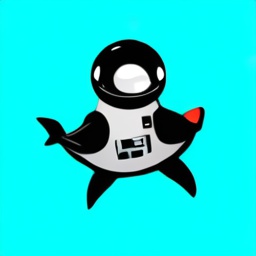} &
\includegraphics[width=0.80\linewidth]{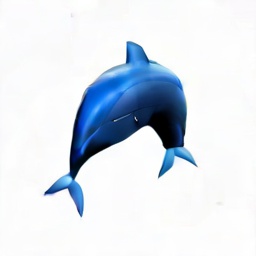} &
\includegraphics[width=0.80\linewidth]{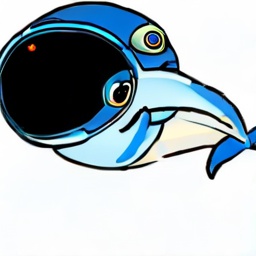} &
\includegraphics[width=0.80\linewidth]{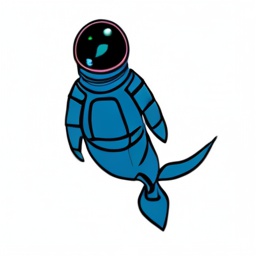} \\
\multicolumn{4}{c}{\small\textit{%
  \parbox[t]{\linewidth}{\centering\small\textit{\input{figures/image_comparison/922_001109_prompt.txt}}}%
}} \\[6pt]

\includegraphics[width=0.80\linewidth]{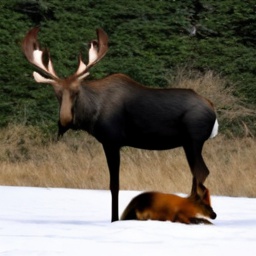} &
\includegraphics[width=0.80\linewidth]{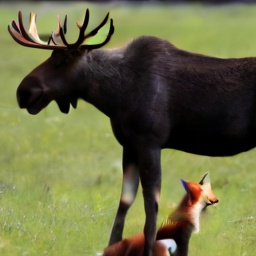} &
\includegraphics[width=0.80\linewidth]{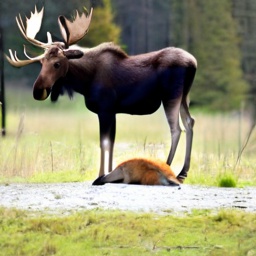} &
\includegraphics[width=0.80\linewidth]{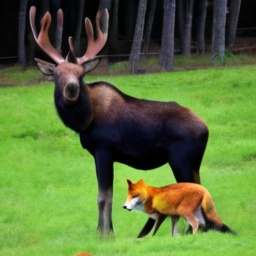} \\
\multicolumn{4}{c}{\small\textit{%
  \parbox[t]{\linewidth}{\centering\small\textit{\input{figures/image_comparison/66_001267_prompt.txt}}}%
}} \\[6pt]
\end{tabular}
}
\caption{Additional generated samples of FM, MAR, FHTM, and DTM with models that are trained for 1M iterations.}
\label{afig:images_3}
\end{figure}
\begin{figure}[ht]
\centering
\renewcommand{\arraystretch}{1.2}
\setlength{\tabcolsep}{4pt}
\resizebox{\linewidth}{!}{%
\begin{tabular}{*{4}{>{\centering\arraybackslash}m{0.25\textwidth}}}

FM & MAR & FHTM & DTM \\
\includegraphics[width=0.80\linewidth]{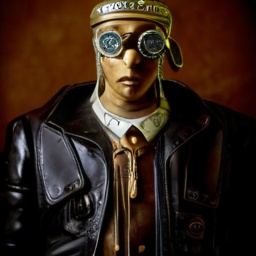} &
\includegraphics[width=0.80\linewidth]{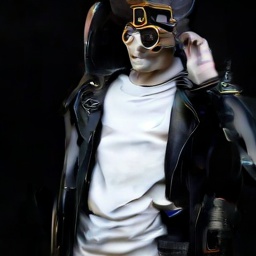} &
\includegraphics[width=0.80\linewidth]{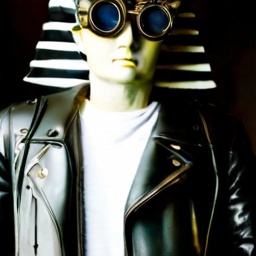} &
\includegraphics[width=0.80\linewidth]{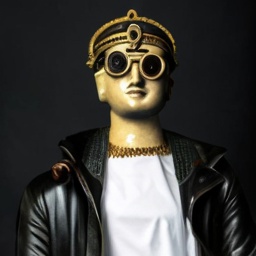} \\
\multicolumn{4}{c}{\small\textit{%
  \parbox[t]{\linewidth}{\centering\small\textit{\input{figures/image_comparison/683_000330_prompt.txt}}}%
}} \\[6pt]

\includegraphics[width=0.80\linewidth]{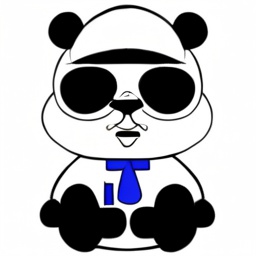} &
\includegraphics[width=0.80\linewidth]{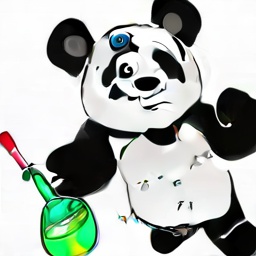} &
\includegraphics[width=0.80\linewidth]{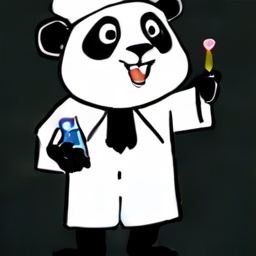} &
\includegraphics[width=0.80\linewidth]{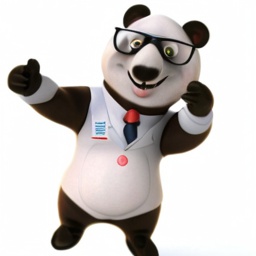} \\
\multicolumn{4}{c}{\small\textit{%
  \parbox[t]{\linewidth}{\centering\small\textit{\input{figures/image_comparison/730_000754_prompt.txt}}}%
}} \\[6pt]

\includegraphics[width=0.80\linewidth]{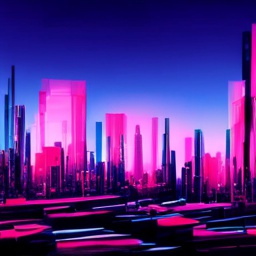} &
\includegraphics[width=0.80\linewidth]{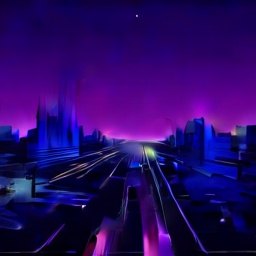} &
\includegraphics[width=0.80\linewidth]{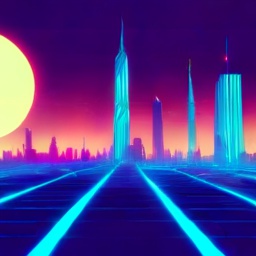} &
\includegraphics[width=0.80\linewidth]{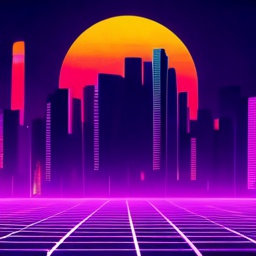} \\
\multicolumn{4}{c}{\small\textit{%
  \parbox[t]{\linewidth}{\centering\small\textit{\input{figures/image_comparison/745_001432_prompt.txt}}}%
}} \\[6pt]

\end{tabular}
}
\caption{Additional generated samples of FM, MAR, FHTM, and DTM with models that are trained for 1M iterations.}\
\label{afig:images_4}
\end{figure}

\begin{figure}[ht]
\centering
\renewcommand{\arraystretch}{1.2}
\setlength{\tabcolsep}{4pt}
\resizebox{\linewidth}{!}{%
\begin{tabular}{*{3}{>{\centering\arraybackslash}m{0.33\textwidth}}}

AR & ARTM-2 & ARTM-3 \\
\includegraphics[width=0.9\linewidth]{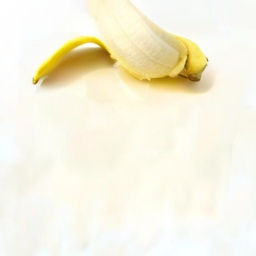} &
\includegraphics[width=0.9\linewidth]{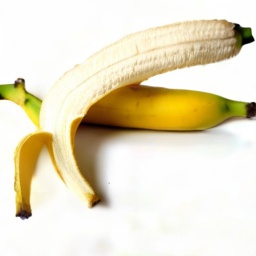} &
\includegraphics[width=0.9\linewidth]{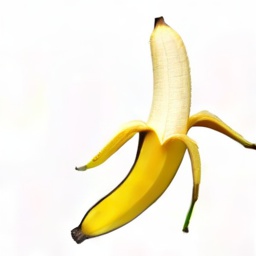} \\

\multicolumn{3}{c}{\small\textit{%
  \parbox[t]{\linewidth}{\centering\small\textit{\input{figures/compare_1_2_3/50_001230_artm_prompt.txt}}}%
}} \\[6pt]

\includegraphics[width=0.9\linewidth]{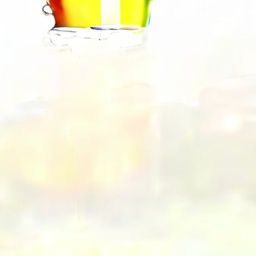} &
\includegraphics[width=0.9\linewidth]{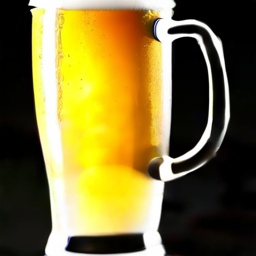} &
\includegraphics[width=0.9\linewidth]{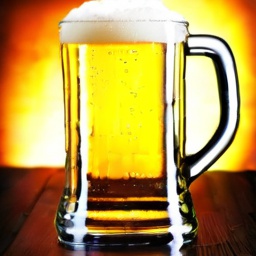} \\

\multicolumn{3}{c}{\small\textit{%
  \parbox[t]{\linewidth}{\centering\small\textit{\input{figures/compare_1_2_3/55_000231_artm_prompt.txt}}}%
}} \\[6pt]

\includegraphics[width=0.9\linewidth]{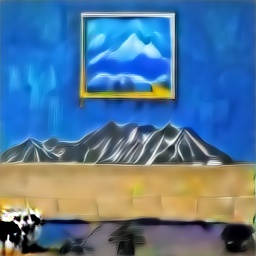} &
\includegraphics[width=0.9\linewidth]{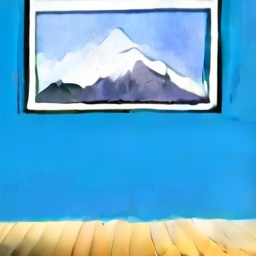} &
\includegraphics[width=0.9\linewidth]{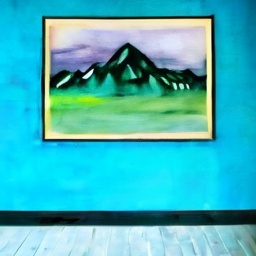} \\

\multicolumn{3}{c}{\small\textit{%
  \parbox[t]{\linewidth}{\centering\small\textit{\input{figures/compare_1_2_3/72_000558_artm_prompt.txt}}}%
}} \\[6pt]

\includegraphics[width=0.9\linewidth]{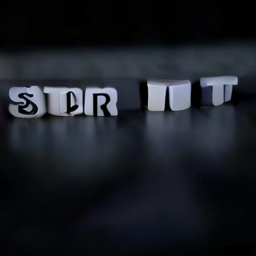} &
\includegraphics[width=0.9\linewidth]{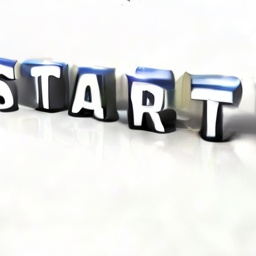} &
\includegraphics[width=0.9\linewidth]{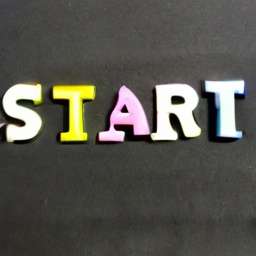} \\

\multicolumn{3}{c}{\small\textit{%
  \parbox[t]{\linewidth}{\centering\small\textit{\input{figures/compare_1_2_3/879_001622_artm_prompt.txt}}}%
}} \\[6pt]
\end{tabular}
}
\caption{Samples comparison of AR (left) vs.~ARTM-2 (middle) vs.~ARTM-3 (right) on models trained for 500K iteration with the DiT architecture.}
\label{afig:artm_1_2_3}
\end{figure}

\clearpage
\subsection{Generation process visualization}

\begin{figure}[h!]
\centering
\renewcommand{\arraystretch}{1.0}
\setlength{\tabcolsep}{4pt}
\resizebox{\linewidth}{!}{%
\begin{tabular}{*{1}{>{\centering\arraybackslash}m{1\textwidth}}}

\includegraphics[width=0.9\linewidth]{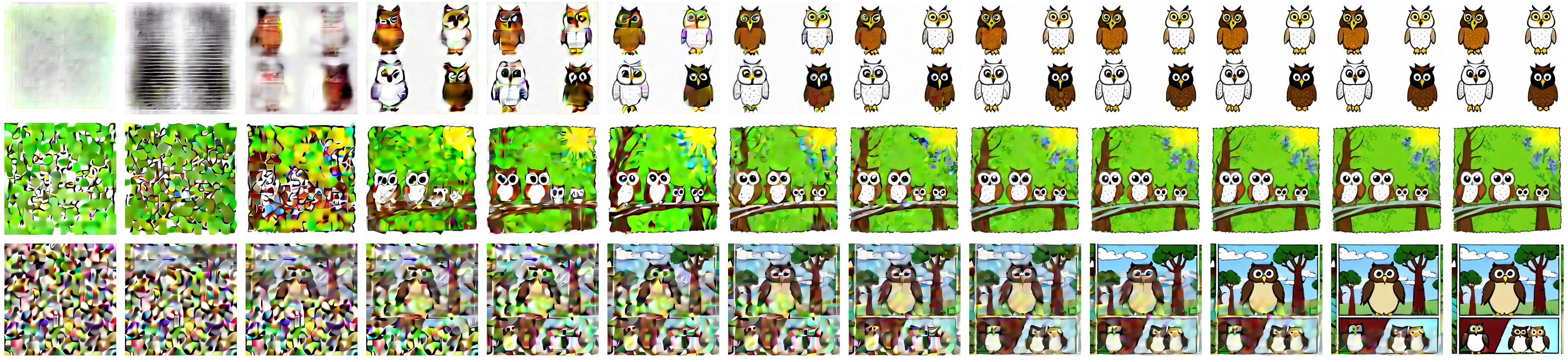} \\
\small\textit{%
  \parbox[t]{\linewidth}{\centering\small\textit{\input{figures/generation_path/47.txt}}}%
} \\[6pt]

\includegraphics[width=0.9\linewidth]{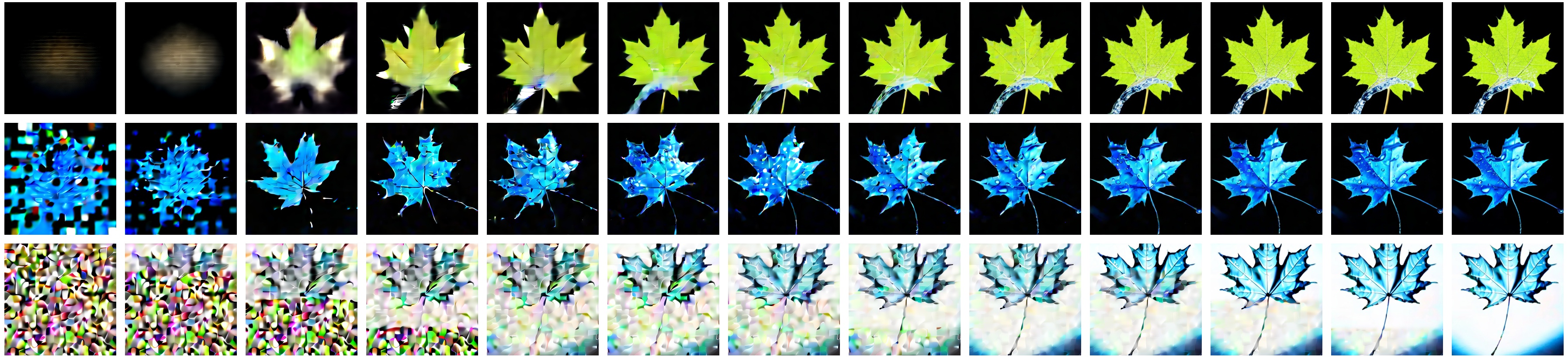} \\
\small\textit{%
  \parbox[t]{\linewidth}{\centering\small\textit{\input{figures/generation_path/138.txt}}}%
} \\[6pt]

\includegraphics[width=0.9\linewidth]{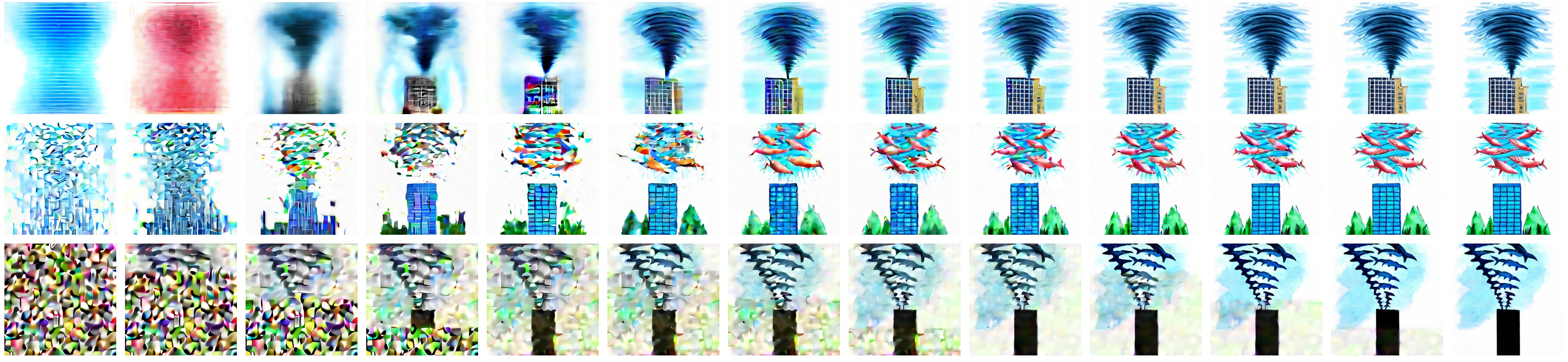} \\
\small\textit{%
  \parbox[t]{\linewidth}{\centering\small\textit{\input{figures/generation_path/222.txt}}}%
} \\[6pt]

\includegraphics[width=0.9\linewidth]{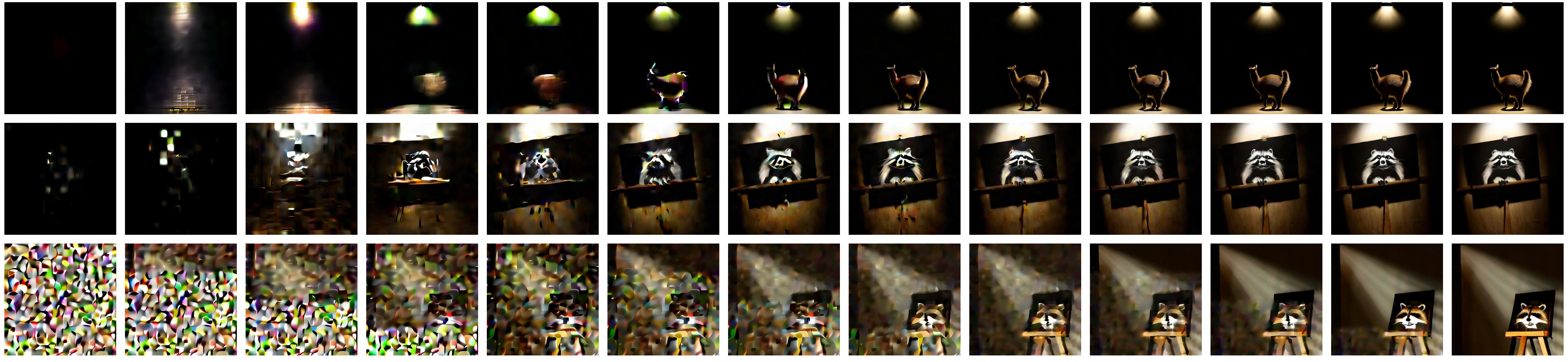} \\
\small\textit{%
  \parbox[t]{\linewidth}{\centering\small\textit{\input{figures/generation_path/249.txt}}}%
} \\[6pt]

\includegraphics[width=0.9\linewidth]{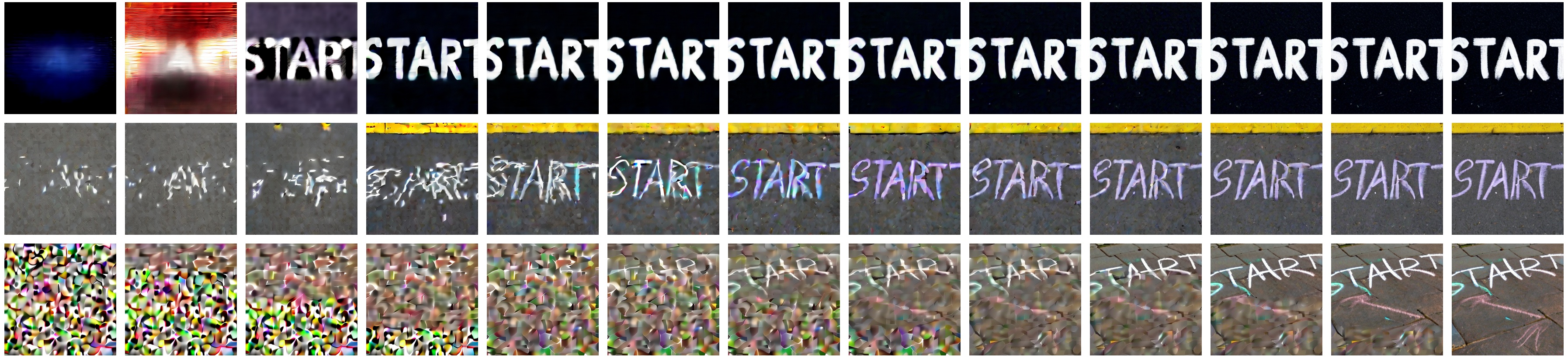} \\
\small\textit{%
  \parbox[t]{\linewidth}{\centering\small\textit{\input{figures/generation_path/503.txt}}}%
} \\[6pt]

\end{tabular}
}
\vspace{-10pt}
\caption{Generation process of FM (first row), DTM (second row), and FHTM (third row) with models that are trained for 1M iterations. FM and DTM are visualized using a denoising estimation. FHTM-3 is visualized with 4 intermediates per transition step.}
\label{afig:generation_process_1}
\end{figure}

\begin{figure}[ht]
\centering
\renewcommand{\arraystretch}{1.0}
\setlength{\tabcolsep}{4pt}
\resizebox{\linewidth}{!}{%
\begin{tabular}{*{1}{>{\centering\arraybackslash}m{1\textwidth}}}

\includegraphics[width=0.9\linewidth]{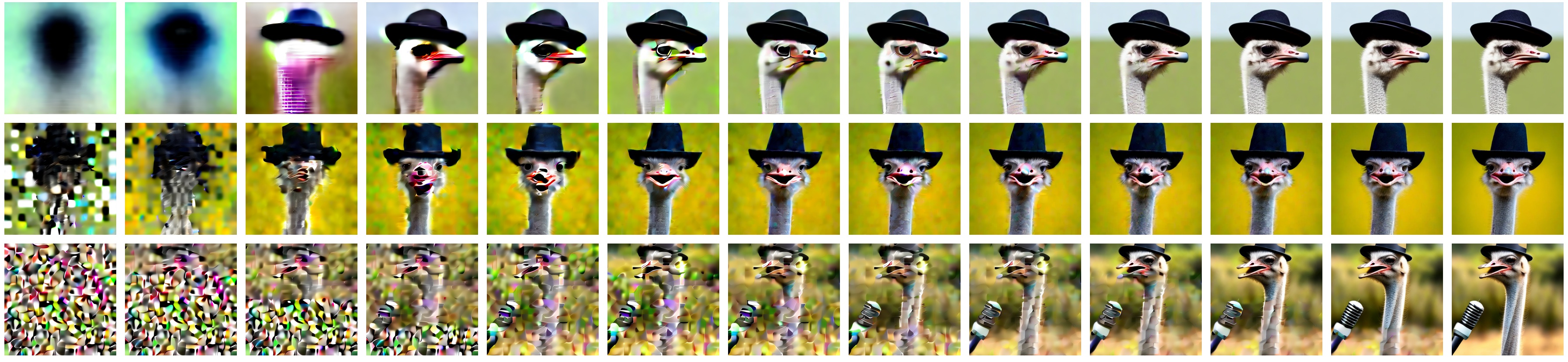} \\
\small\textit{%
  \parbox[t]{\linewidth}{\centering\small\textit{\input{figures/generation_path/792.txt}}}%
} \\[6pt]

\includegraphics[width=0.9\linewidth]{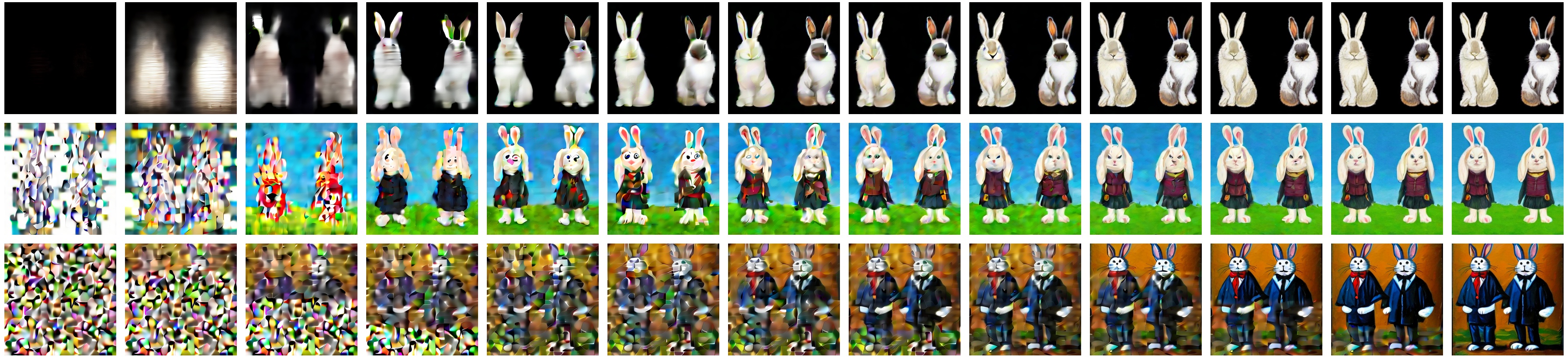} \\
\small\textit{%
  \parbox[t]{\linewidth}{\centering\small\textit{\input{figures/generation_path/846.txt}}}%
} \\[6pt]

\includegraphics[width=0.9\linewidth]{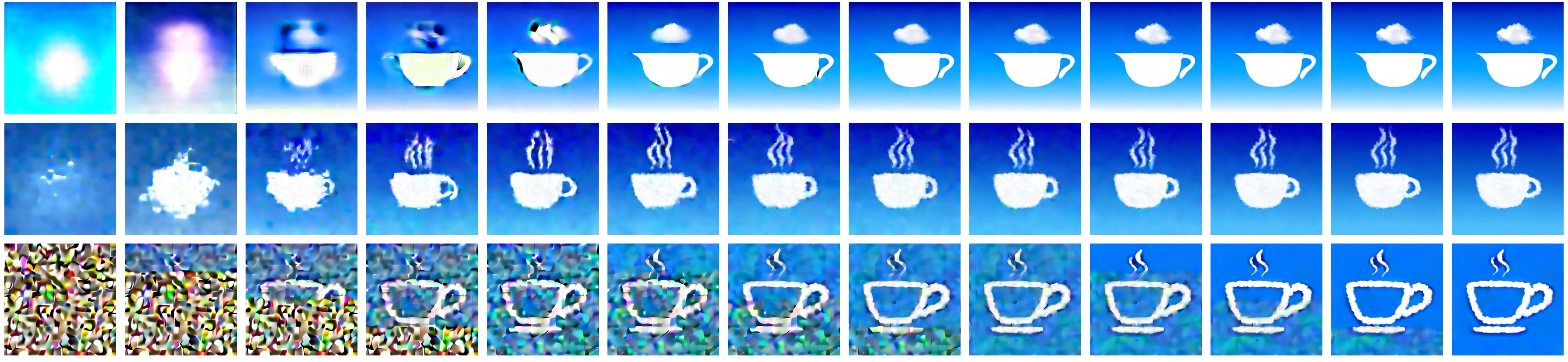} \\
\small\textit{%
  \parbox[t]{\linewidth}{\centering\small\textit{\input{figures/generation_path/894.txt}}}%
} \\[6pt]

\includegraphics[width=0.9\linewidth]{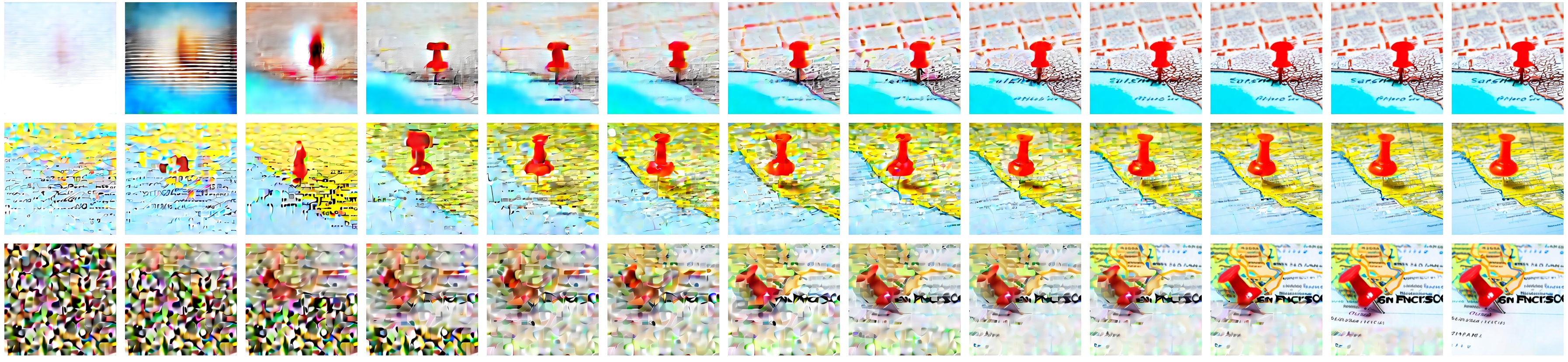} \\
\small\textit{%
  \parbox[t]{\linewidth}{\centering\small\textit{\input{figures/generation_path/1000.txt}}}%
} \\[6pt]

\includegraphics[width=0.9\linewidth]{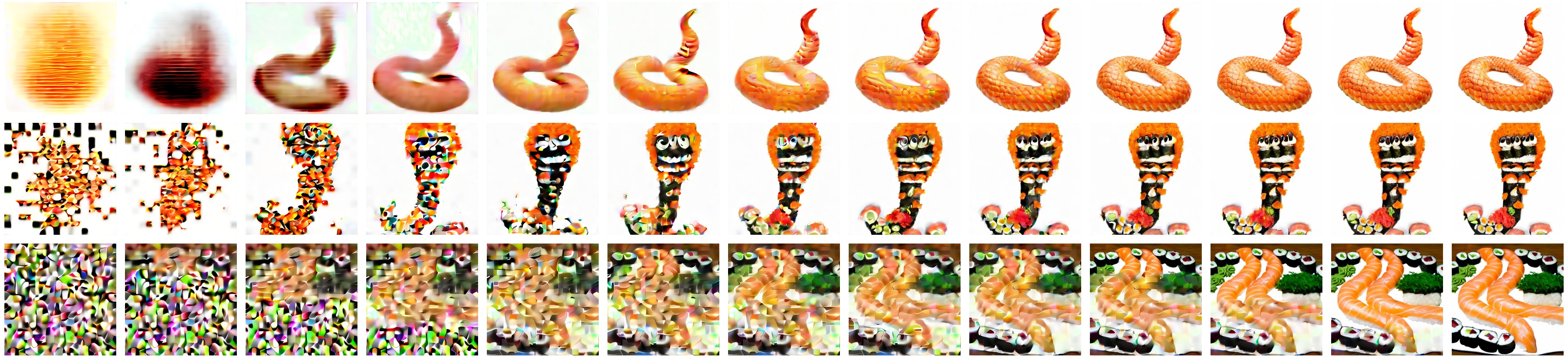} \\
\small\textit{%
  \parbox[t]{\linewidth}{\centering\small\textit{\input{figures/generation_path/1063.txt}}}%
} \\[6pt]

\end{tabular}
}
\caption{Generation process of FM (first row), DTM (second row), and FHTM (third row) with models that are trained for 1M iterations. FM and DTM are visualized using a denoising estimation. FHTM-3 is visualized with 4 intermediates per transition step.}
\label{afig:generation_process_2}
\end{figure}

\begin{figure}[ht]
\centering
\renewcommand{\arraystretch}{1.0}
\setlength{\tabcolsep}{4pt}
\resizebox{\linewidth}{!}{%
\begin{tabular}{*{1}{>{\centering\arraybackslash}m{1\textwidth}}}

\includegraphics[width=0.9\linewidth]{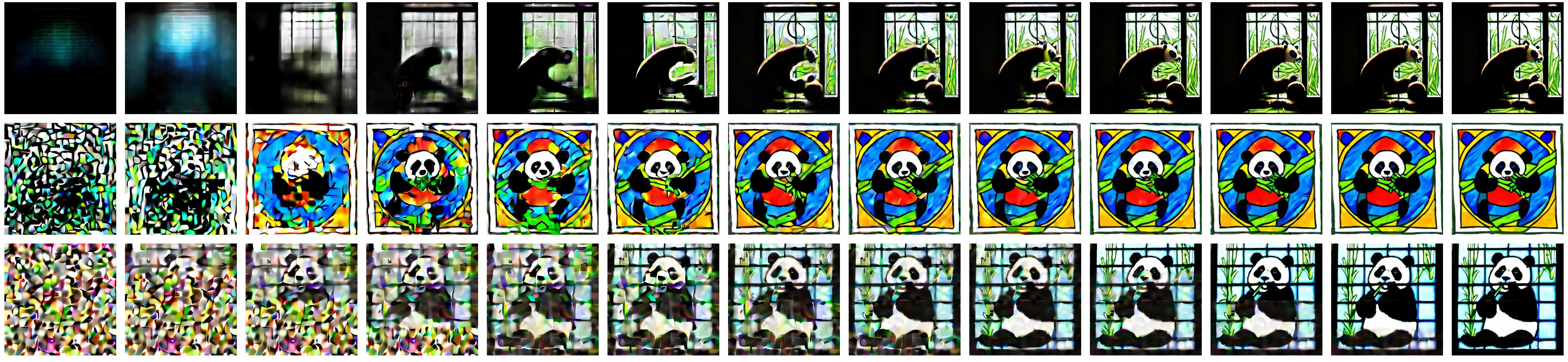} \\
\small\textit{%
  \parbox[t]{\linewidth}{\centering\small\textit{\input{figures/generation_path/1089.txt}}}%
} \\[6pt]

\includegraphics[width=0.9\linewidth]{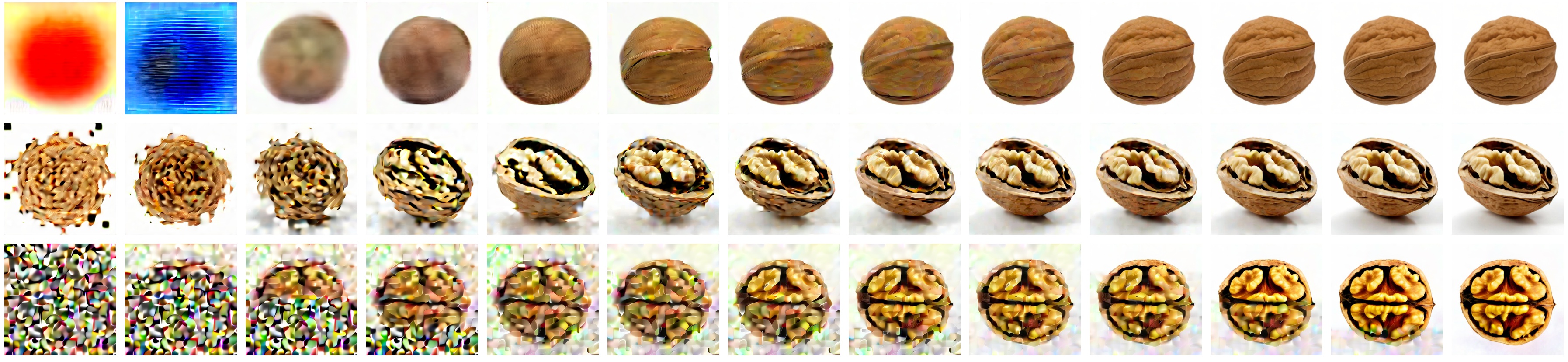} \\
\small\textit{%
  \parbox[t]{\linewidth}{\centering\small\textit{\input{figures/generation_path/1090.txt}}}%
} \\[6pt]

\includegraphics[width=0.9\linewidth]{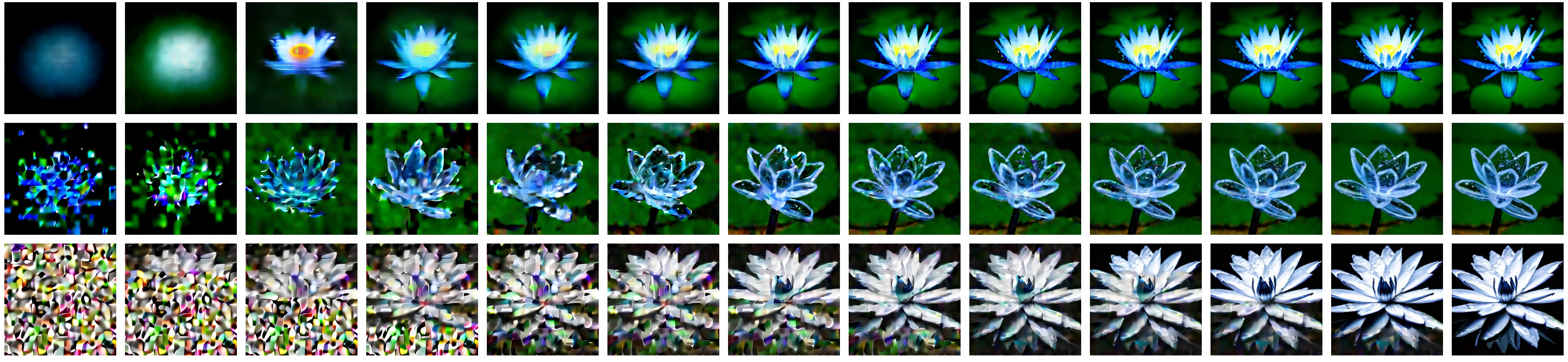} \\
\small\textit{%
  \parbox[t]{\linewidth}{\centering\small\textit{\input{figures/generation_path/1120.txt}}}%
} \\[6pt]

\includegraphics[width=0.9\linewidth]{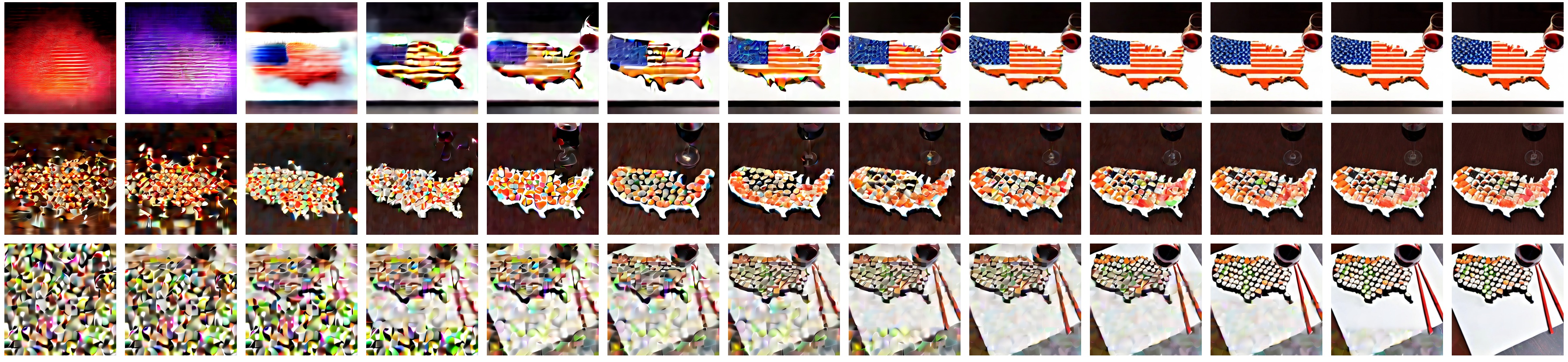} \\
\small\textit{%
  \parbox[t]{\linewidth}{\centering\small\textit{\input{figures/generation_path/1161.txt}}}%
} \\[6pt]

\includegraphics[width=0.9\linewidth]{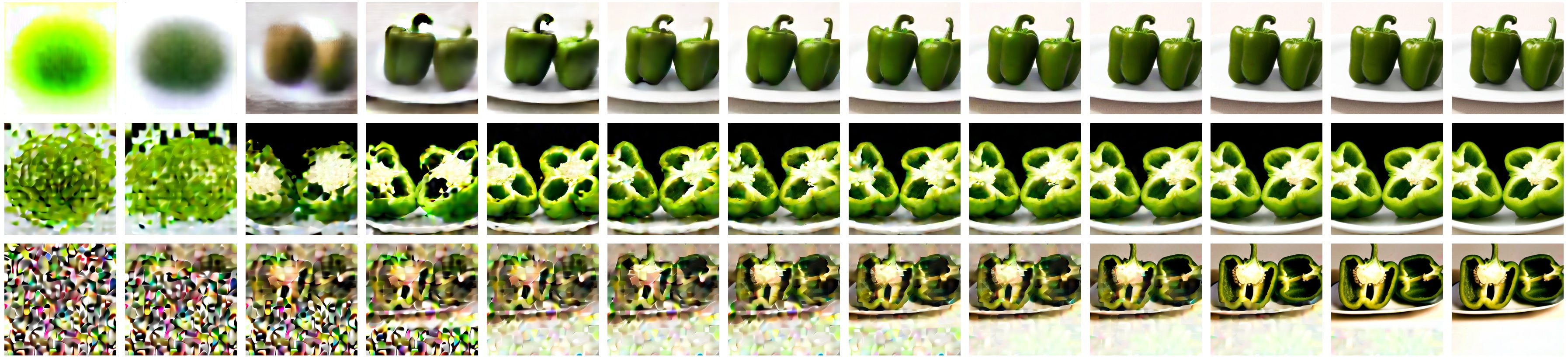} \\
\small\textit{%
  \parbox[t]{\linewidth}{\centering\small\textit{\input{figures/generation_path/1218.txt}}}%
} \\[6pt]

\end{tabular}
}
\caption{Generation process of FM (first row), DTM (second row), and FHTM (third row) with models that are trained for 1M iterations. FM and DTM are visualized using a denoising estimation. FHTM-3 is visualized with 4 intermediates per transition step.}
\label{afig:generation_process_3}
\end{figure}

\begin{figure}[ht]
\centering
\renewcommand{\arraystretch}{1.0}
\setlength{\tabcolsep}{4pt}
\resizebox{\linewidth}{!}{%
\begin{tabular}{*{1}{>{\centering\arraybackslash}m{1\textwidth}}}

\includegraphics[width=0.9\linewidth]{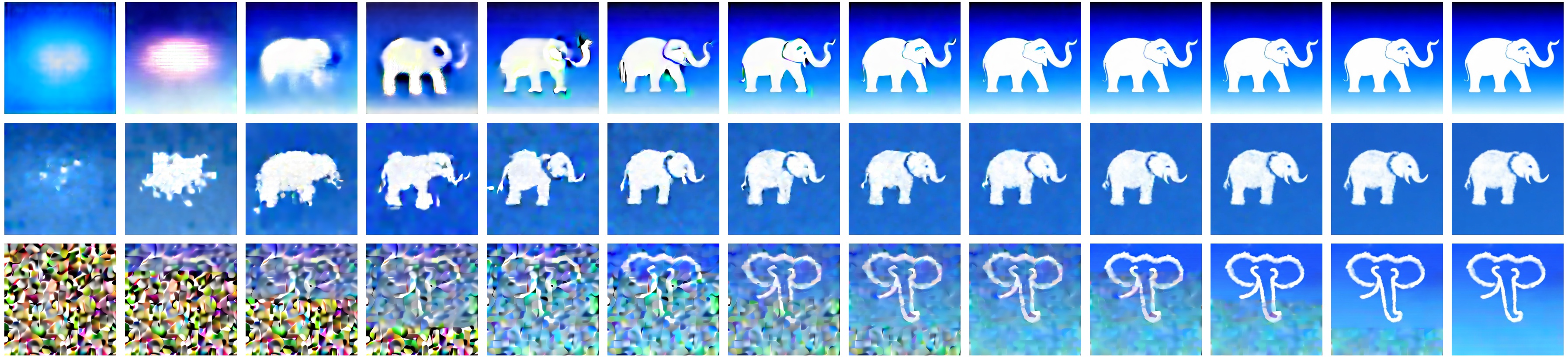} \\
\small\textit{%
  \parbox[t]{\linewidth}{\centering\small\textit{\input{figures/generation_path/1448.txt}}}%
} \\[6pt]

\includegraphics[width=0.9\linewidth]{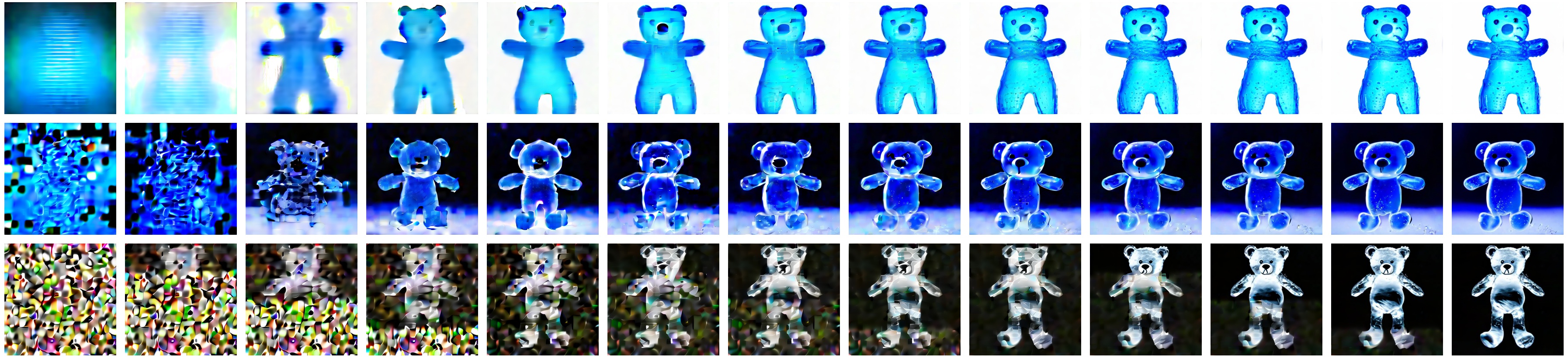} \\
\small\textit{%
  \parbox[t]{\linewidth}{\centering\small\textit{\input{figures/generation_path/1489.txt}}}%
} \\[6pt]

\includegraphics[width=0.9\linewidth]{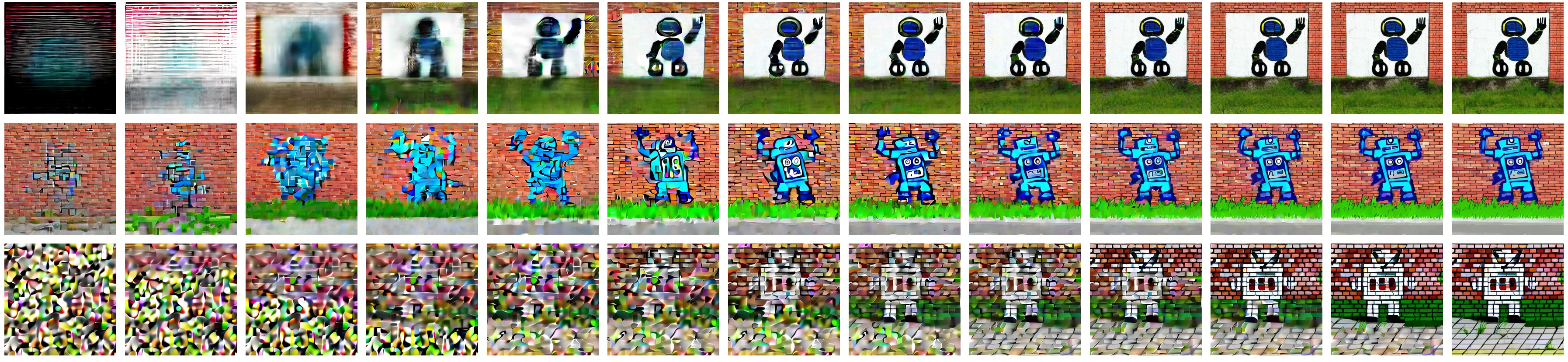} \\
\small\textit{%
  \parbox[t]{\linewidth}{\centering\small\textit{\input{figures/generation_path/1508.txt}}}%
} \\[6pt]

\includegraphics[width=0.9\linewidth]{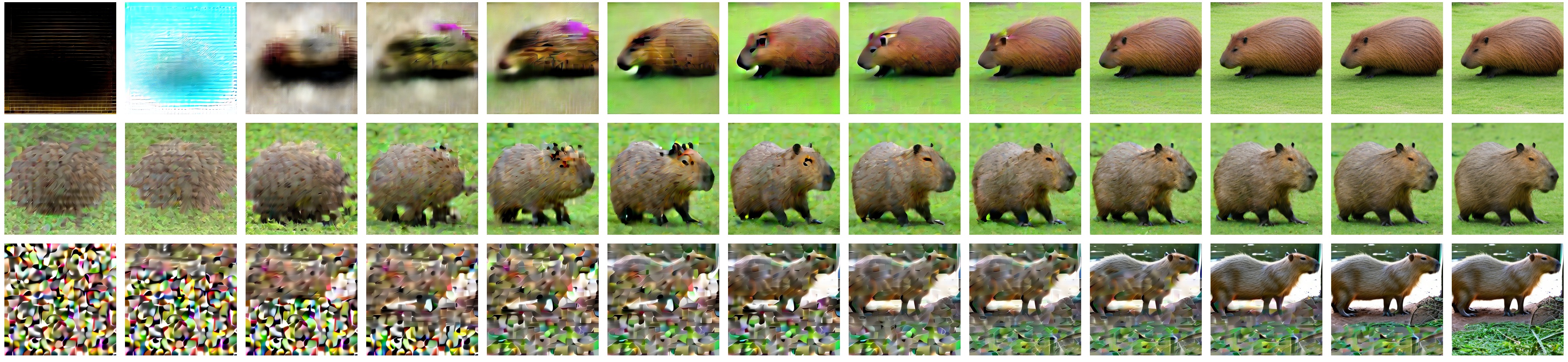} \\
\small\textit{%
  \parbox[t]{\linewidth}{\centering\small\textit{\input{figures/generation_path/1515.txt}}}%
} \\[6pt]

\includegraphics[width=0.9\linewidth]{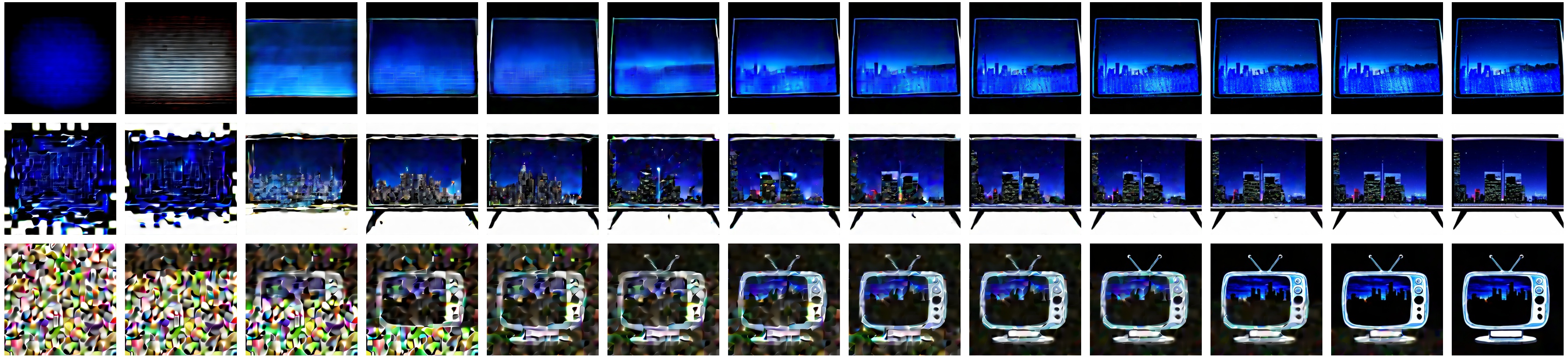} \\
\small\textit{%
  \parbox[t]{\linewidth}{\centering\small\textit{\input{figures/generation_path/1597.txt}}}%
} \\[6pt]

\end{tabular}
}
\caption{Generation process of FM (first row), DTM (second row), and FHTM (third row) with models that are trained for 1M iterations. FM and DTM are visualized using a denoising estimation. FHTM-3 is visualized with 4 intermediates per transition step.}
\label{afig:generation_process_4}
\end{figure}

\FloatBarrier 
\subsection{Classifier free guidance sensitivity}
\begin{figure}[H]
    \centering
    \resizebox{\textwidth}{!}{%
    \begin{tabular}{cc}
        \includegraphics[width=0.5\textwidth]{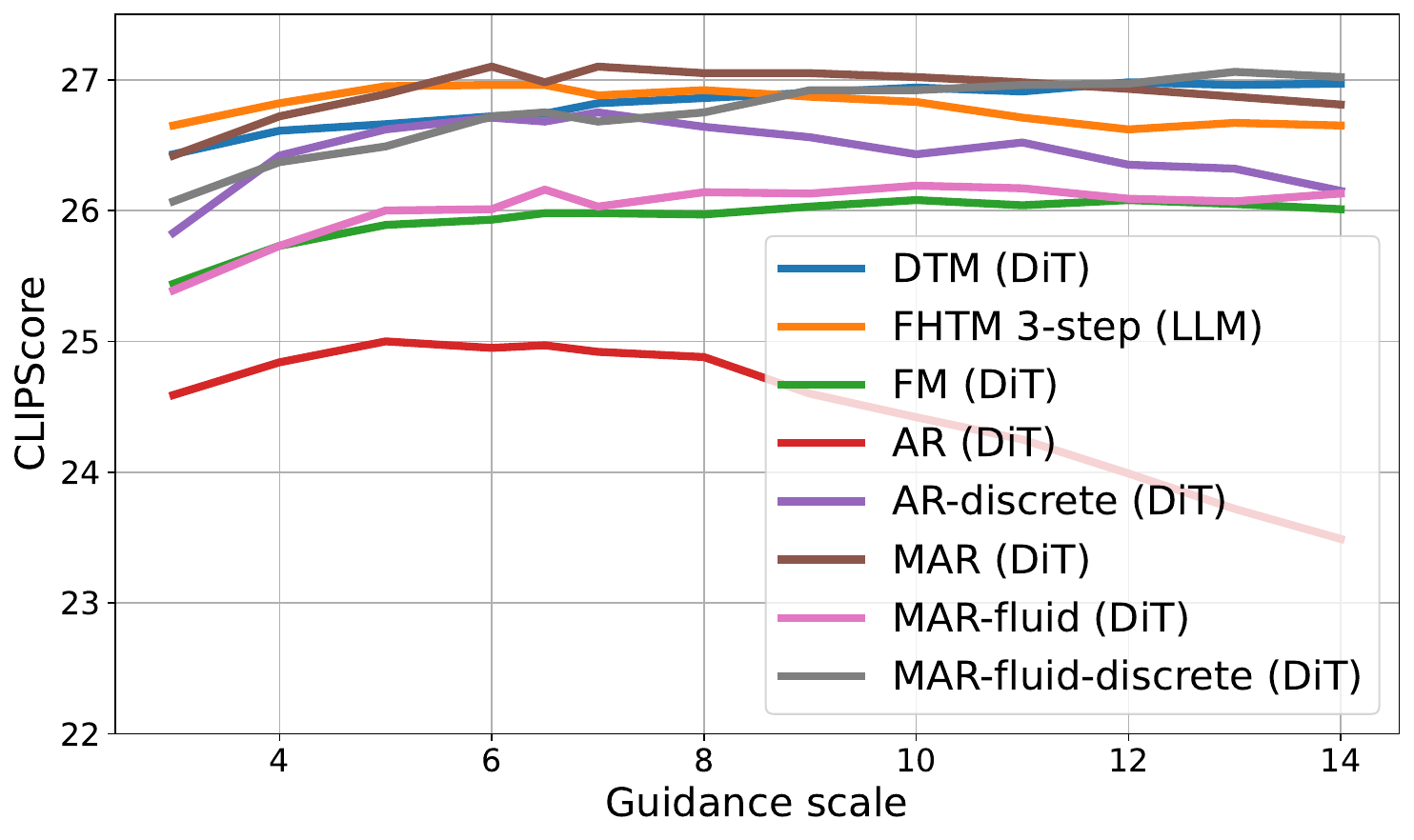} & \includegraphics[width=0.5\textwidth]{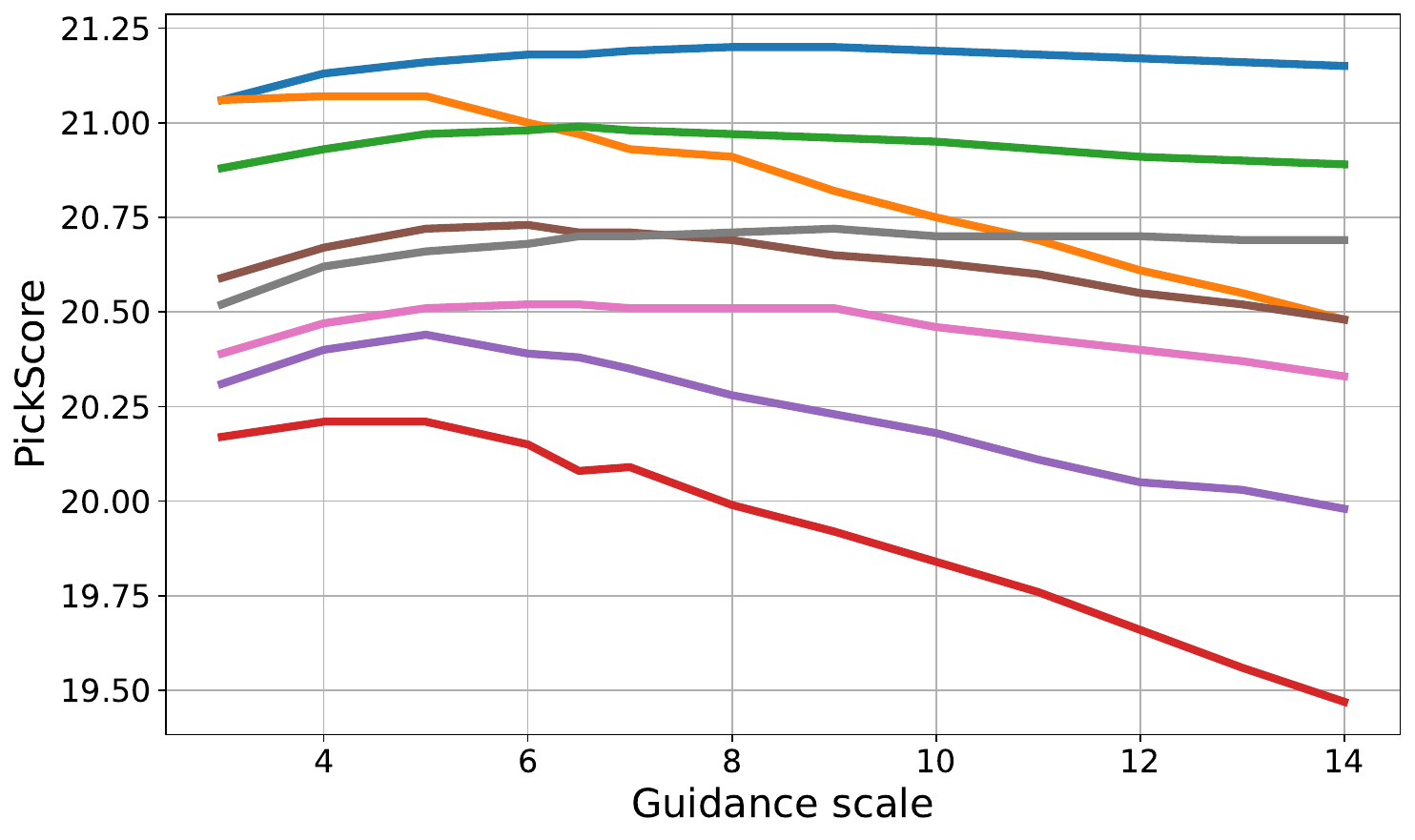} \\
    \end{tabular}    
    }
     \caption{
     CLIPScore vs.~CFG guidance scale (left) and PickScore vs.~CFG guidance scale (right) of DTM and FHTM variants, and the baselines: FM, AR, AR-Discrete, MAR, MAR-Fluid, MAR-Fluid-Discrete on the PartiPrompts dataset.
     }
     \label{fig:score_vs_cfg}
\end{figure}

\begin{figure}[ht]
\centering
\renewcommand{\arraystretch}{1.2}
\setlength{\tabcolsep}{4pt}
\resizebox{\linewidth}{!}{%
\begin{tabular}{*{12}{>{\centering\arraybackslash}m{0.083\textwidth}}}

3 & 4 & 5 & 6 & 7 & 8 & 9 & 10 & 11 & 12 & 13 & 14 \\
\includegraphics[width=\linewidth]{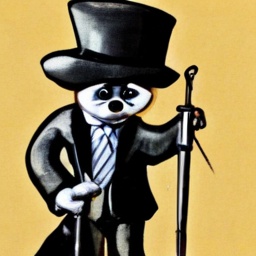} &
\includegraphics[width=\linewidth]{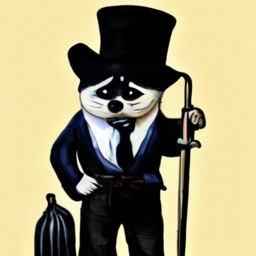} &
\includegraphics[width=\linewidth]{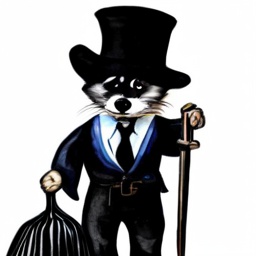} &
\includegraphics[width=\linewidth]{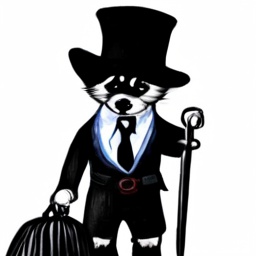} &
\includegraphics[width=\linewidth]{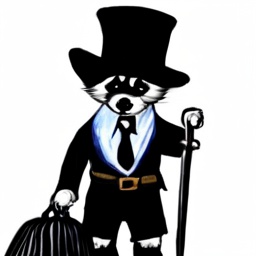} &
\includegraphics[width=\linewidth]{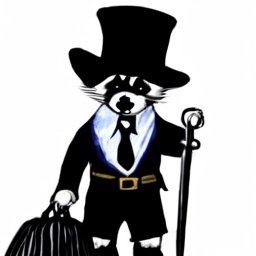} &
\includegraphics[width=\linewidth]{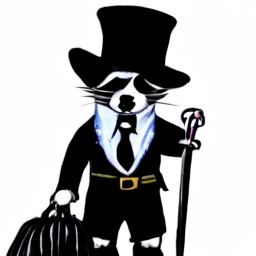} &
\includegraphics[width=\linewidth]{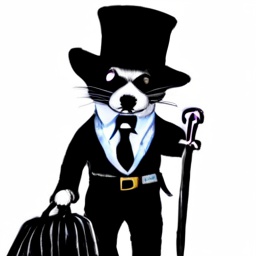} &
\includegraphics[width=\linewidth]{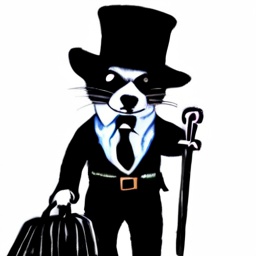} &
\includegraphics[width=\linewidth]{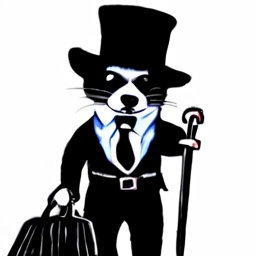} &
\includegraphics[width=\linewidth]{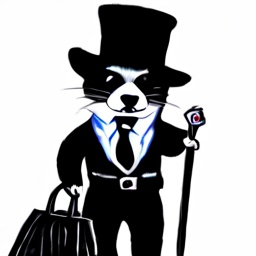} &
\includegraphics[width=\linewidth]{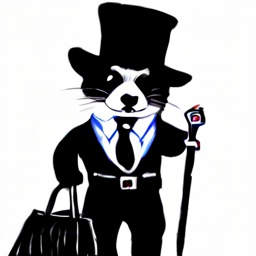} \\

\includegraphics[width=\linewidth]{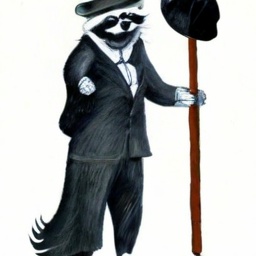} &
\includegraphics[width=\linewidth]{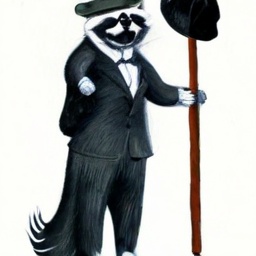} &
\includegraphics[width=\linewidth]{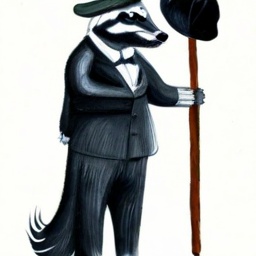} &
\includegraphics[width=\linewidth]{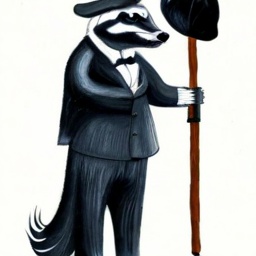} &
\includegraphics[width=\linewidth]{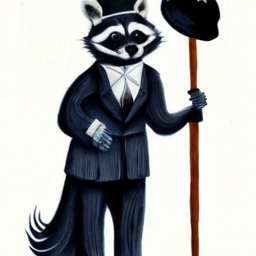} &
\includegraphics[width=\linewidth]{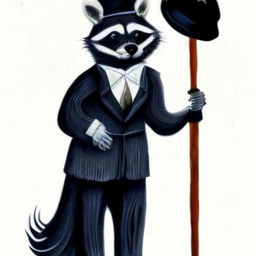} &
\includegraphics[width=\linewidth]{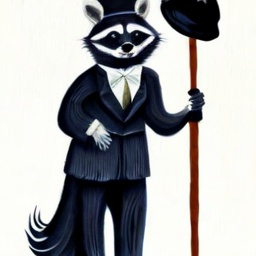} &
\includegraphics[width=\linewidth]{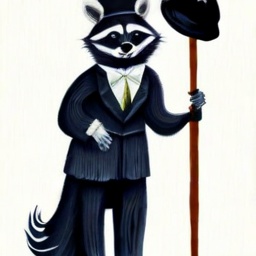} &
\includegraphics[width=\linewidth]{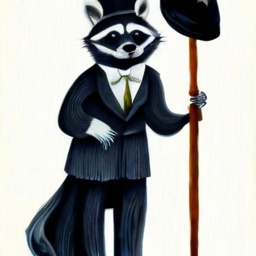} &
\includegraphics[width=\linewidth]{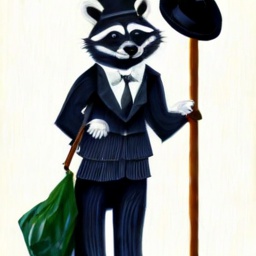} &
\includegraphics[width=\linewidth]{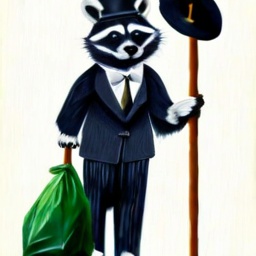} &
\includegraphics[width=\linewidth]{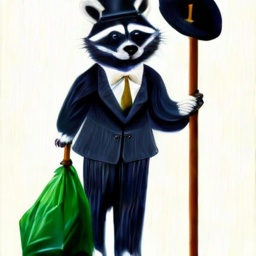} \\

\includegraphics[width=\linewidth]{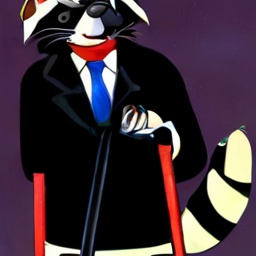} &
\includegraphics[width=\linewidth]{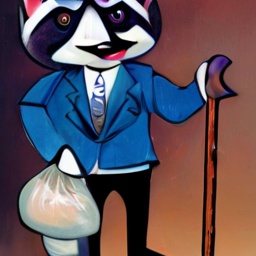} &
\includegraphics[width=\linewidth]{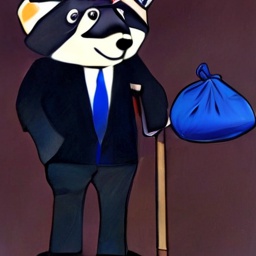} &
\includegraphics[width=\linewidth]{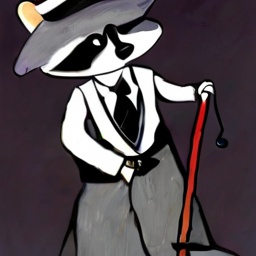} &
\includegraphics[width=\linewidth]{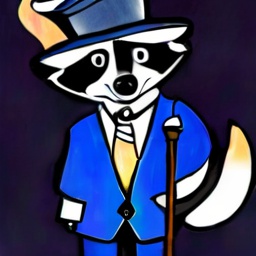} &
\includegraphics[width=\linewidth]{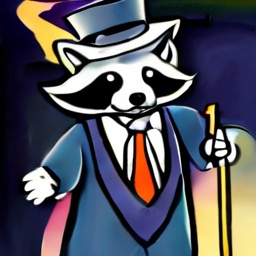} &
\includegraphics[width=\linewidth]{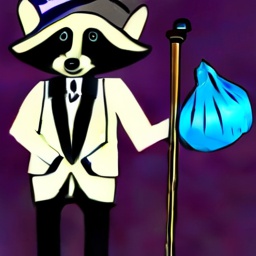} &
\includegraphics[width=\linewidth]{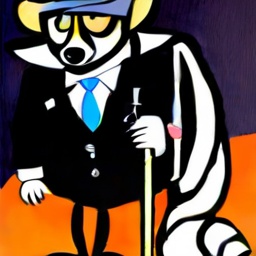} &
\includegraphics[width=\linewidth]{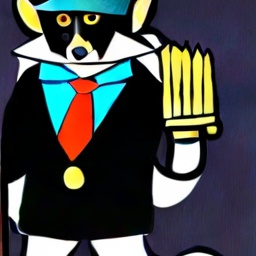} &
\includegraphics[width=\linewidth]{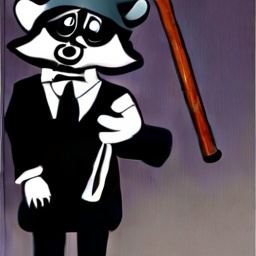} &
\includegraphics[width=\linewidth]{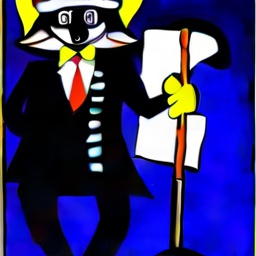} &
\includegraphics[width=\linewidth]{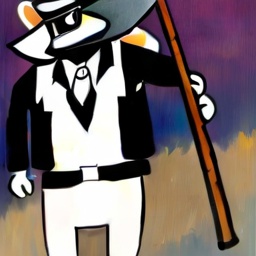} \\
\multicolumn{12}{c}{\small\textit{%
  \parbox[t]{\linewidth}{\centering\small\textit{\input{figures/cfg/000291.txt}}}%
}} \\[6pt]

\includegraphics[width=\linewidth]{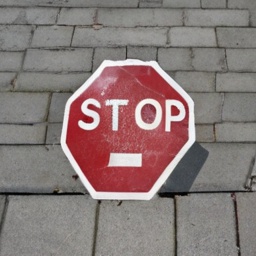} &
\includegraphics[width=\linewidth]{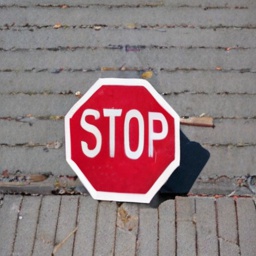} &
\includegraphics[width=\linewidth]{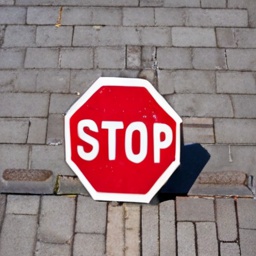} &
\includegraphics[width=\linewidth]{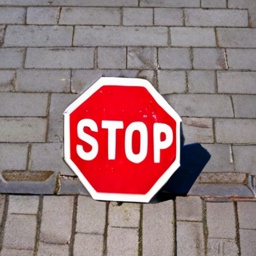} &
\includegraphics[width=\linewidth]{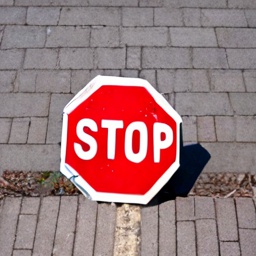} &
\includegraphics[width=\linewidth]{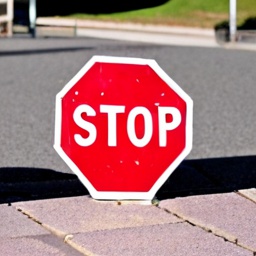} &
\includegraphics[width=\linewidth]{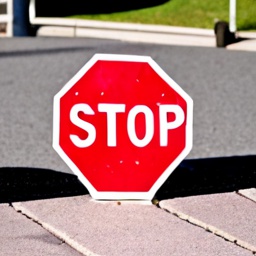} &
\includegraphics[width=\linewidth]{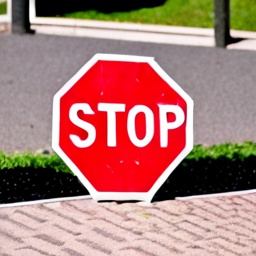} &
\includegraphics[width=\linewidth]{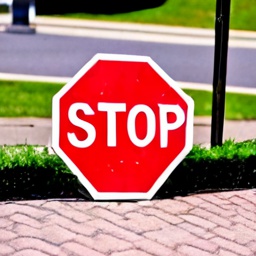} &
\includegraphics[width=\linewidth]{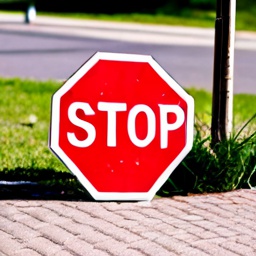} &
\includegraphics[width=\linewidth]{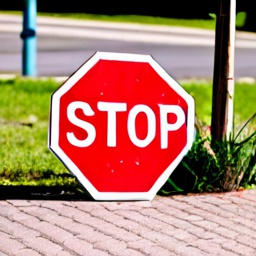} &
\includegraphics[width=\linewidth]{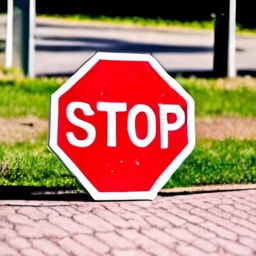} \\

\includegraphics[width=\linewidth]{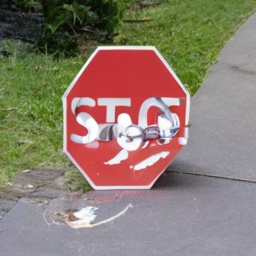} &
\includegraphics[width=\linewidth]{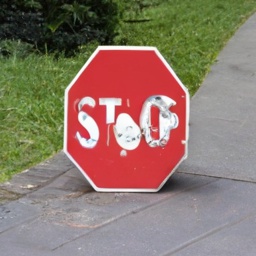} &
\includegraphics[width=\linewidth]{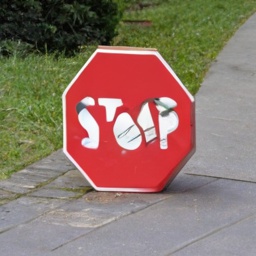} &
\includegraphics[width=\linewidth]{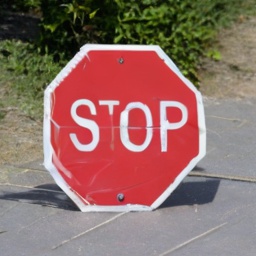} &
\includegraphics[width=\linewidth]{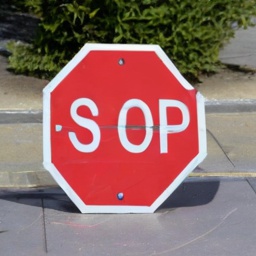} &
\includegraphics[width=\linewidth]{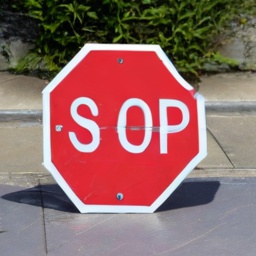} &
\includegraphics[width=\linewidth]{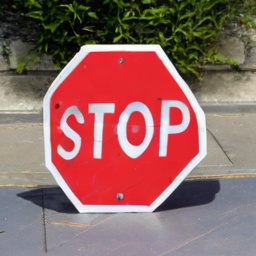} &
\includegraphics[width=\linewidth]{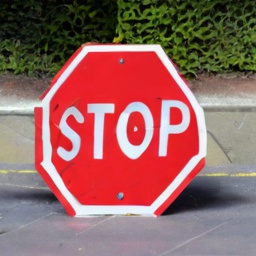} &
\includegraphics[width=\linewidth]{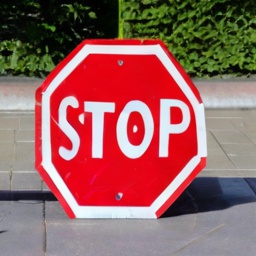} &
\includegraphics[width=\linewidth]{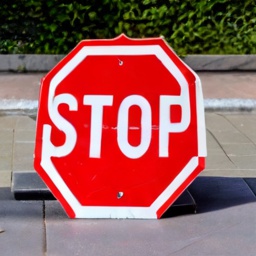} &
\includegraphics[width=\linewidth]{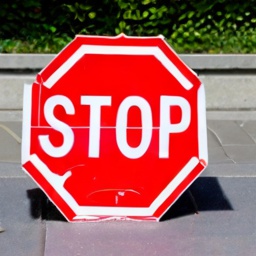} &
\includegraphics[width=\linewidth]{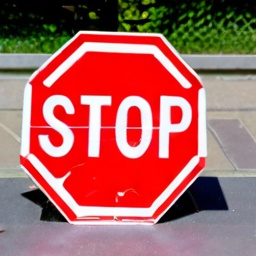} \\

\includegraphics[width=\linewidth]{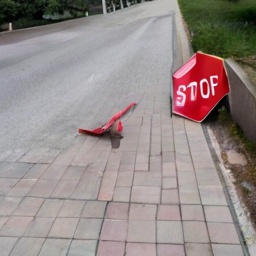} &
\includegraphics[width=\linewidth]{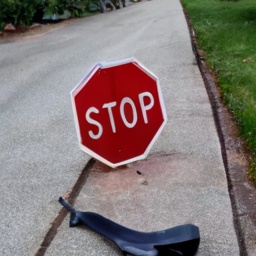} &
\includegraphics[width=\linewidth]{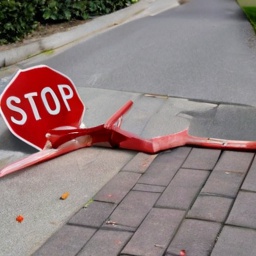} &
\includegraphics[width=\linewidth]{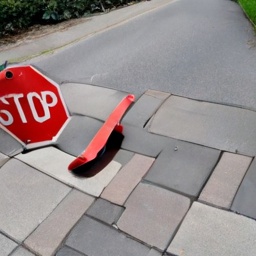} &
\includegraphics[width=\linewidth]{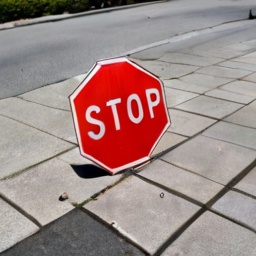} &
\includegraphics[width=\linewidth]{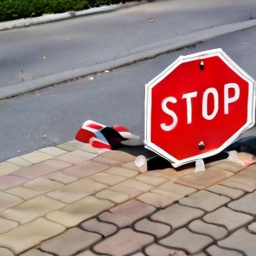} &
\includegraphics[width=\linewidth]{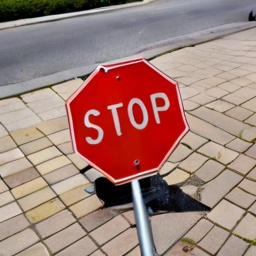} &
\includegraphics[width=\linewidth]{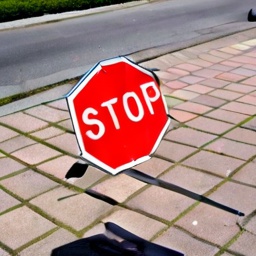} &
\includegraphics[width=\linewidth]{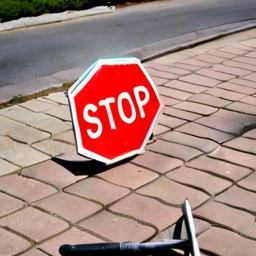} &
\includegraphics[width=\linewidth]{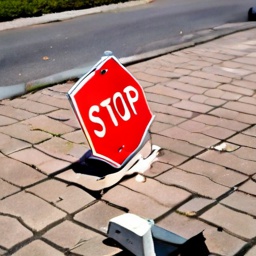} &
\includegraphics[width=\linewidth]{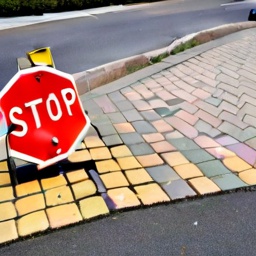} &
\includegraphics[width=\linewidth]{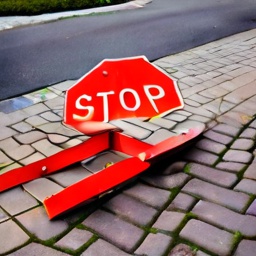} \\
\multicolumn{12}{c}{\small\textit{%
  \parbox[t]{\linewidth}{\centering\small\textit{\input{figures/cfg/000501.txt}}}%
}} \\[6pt]

\includegraphics[width=\linewidth]{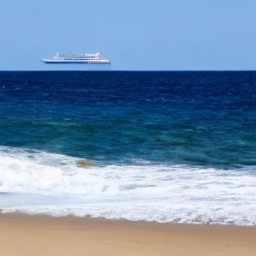} &
\includegraphics[width=\linewidth]{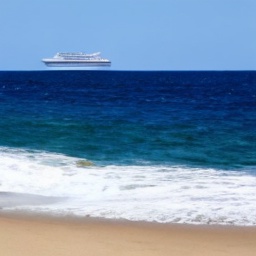} &
\includegraphics[width=\linewidth]{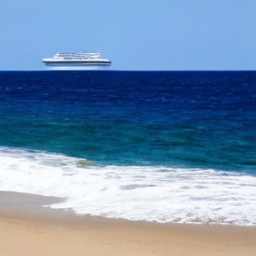} &
\includegraphics[width=\linewidth]{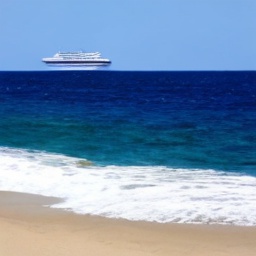} &
\includegraphics[width=\linewidth]{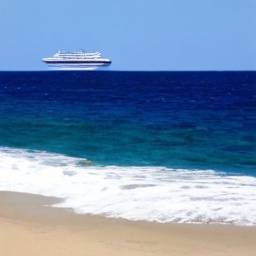} &
\includegraphics[width=\linewidth]{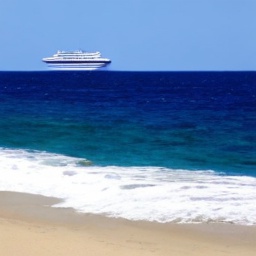} &
\includegraphics[width=\linewidth]{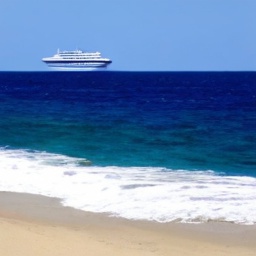} &
\includegraphics[width=\linewidth]{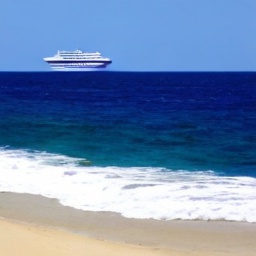} &
\includegraphics[width=\linewidth]{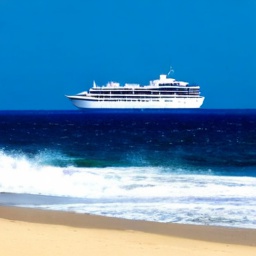} &
\includegraphics[width=\linewidth]{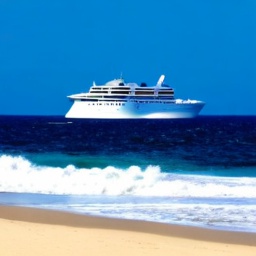} &
\includegraphics[width=\linewidth]{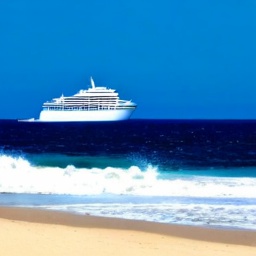} &
\includegraphics[width=\linewidth]{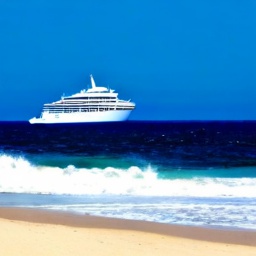} \\

\includegraphics[width=\linewidth]{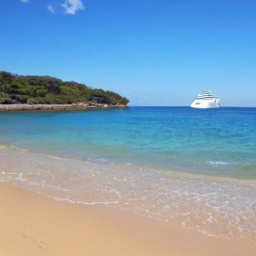} &
\includegraphics[width=\linewidth]{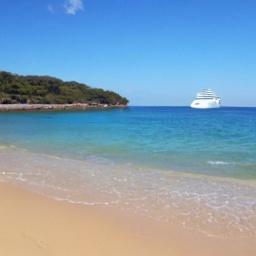} &
\includegraphics[width=\linewidth]{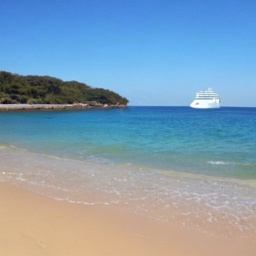} &
\includegraphics[width=\linewidth]{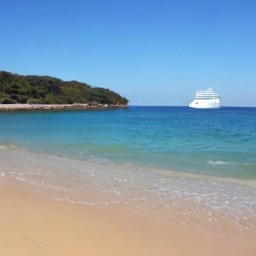} &
\includegraphics[width=\linewidth]{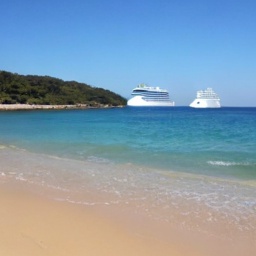} &
\includegraphics[width=\linewidth]{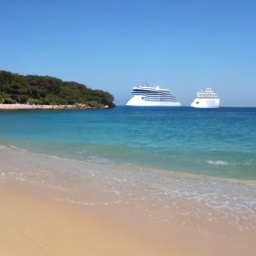} &
\includegraphics[width=\linewidth]{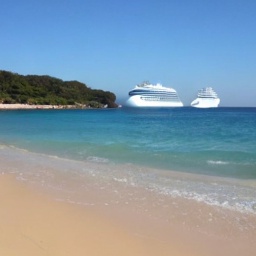} &
\includegraphics[width=\linewidth]{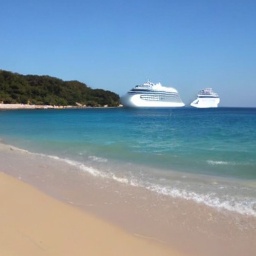} &
\includegraphics[width=\linewidth]{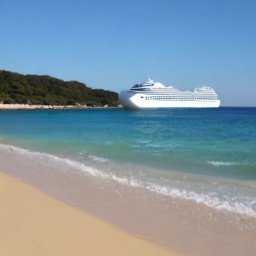} &
\includegraphics[width=\linewidth]{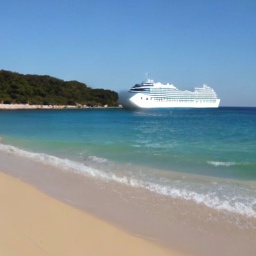} &
\includegraphics[width=\linewidth]{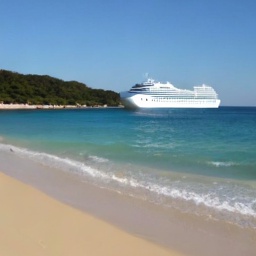} &
\includegraphics[width=\linewidth]{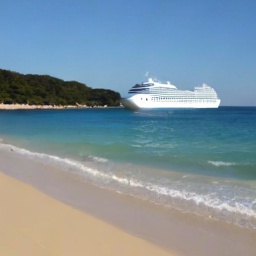} \\

\includegraphics[width=\linewidth]{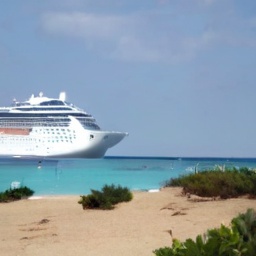} &
\includegraphics[width=\linewidth]{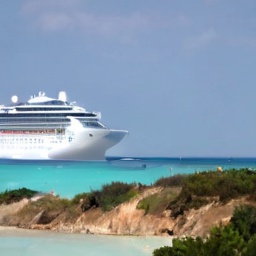} &
\includegraphics[width=\linewidth]{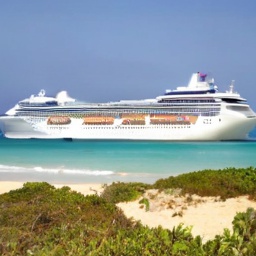} &
\includegraphics[width=\linewidth]{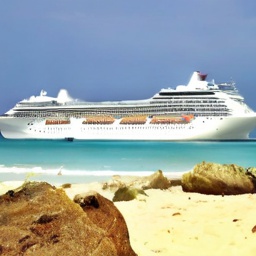} &
\includegraphics[width=\linewidth]{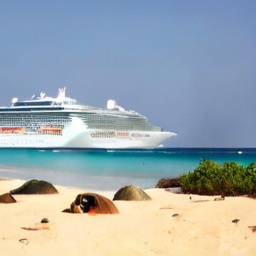} &
\includegraphics[width=\linewidth]{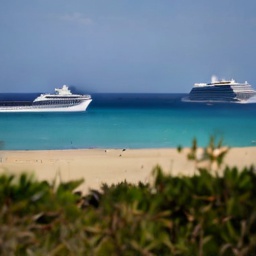} &
\includegraphics[width=\linewidth]{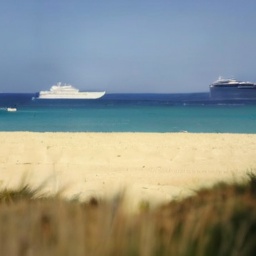} &
\includegraphics[width=\linewidth]{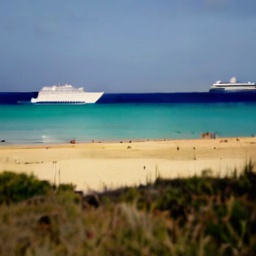} &
\includegraphics[width=\linewidth]{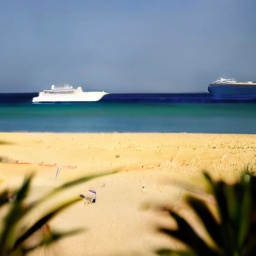} &
\includegraphics[width=\linewidth]{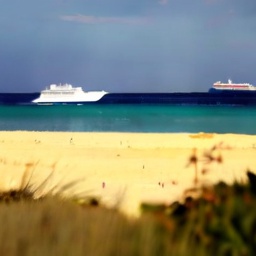} &
\includegraphics[width=\linewidth]{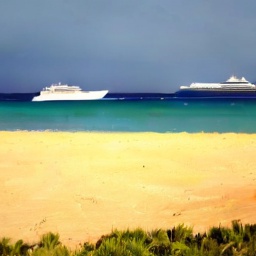} &
\includegraphics[width=\linewidth]{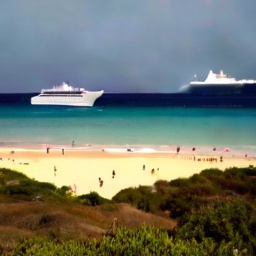} \\
\multicolumn{12}{c}{\small\textit{%
  \parbox[t]{\linewidth}{\centering\small\textit{\input{figures/cfg/000564.txt}}}%
}} \\[6pt]

\includegraphics[width=\linewidth]{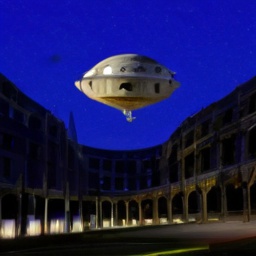} &
\includegraphics[width=\linewidth]{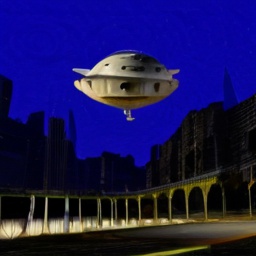} &
\includegraphics[width=\linewidth]{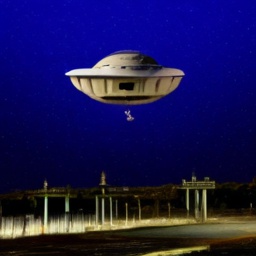} &
\includegraphics[width=\linewidth]{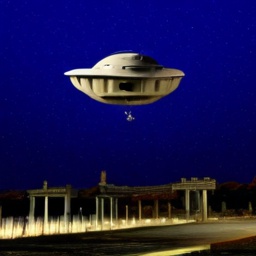} &
\includegraphics[width=\linewidth]{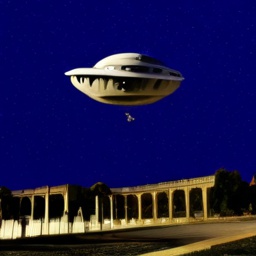} &
\includegraphics[width=\linewidth]{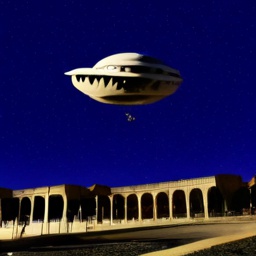} &
\includegraphics[width=\linewidth]{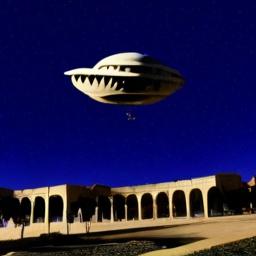} &
\includegraphics[width=\linewidth]{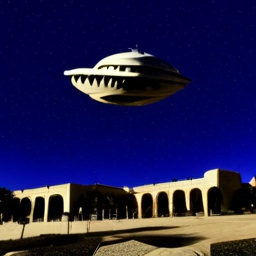} &
\includegraphics[width=\linewidth]{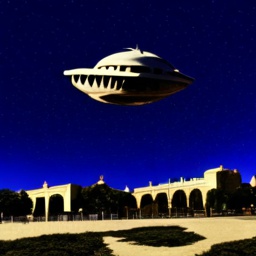} &
\includegraphics[width=\linewidth]{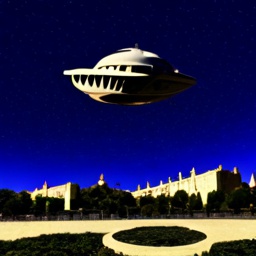} &
\includegraphics[width=\linewidth]{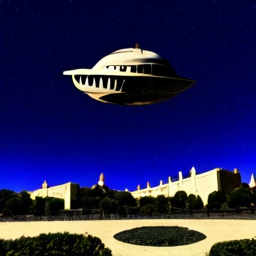} &
\includegraphics[width=\linewidth]{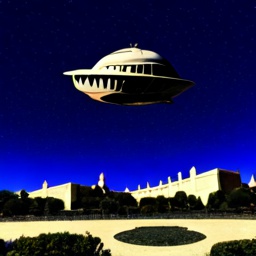} \\

\includegraphics[width=\linewidth]{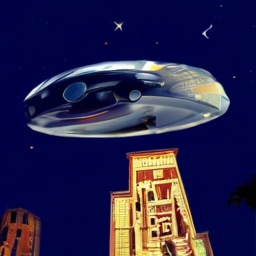} &
\includegraphics[width=\linewidth]{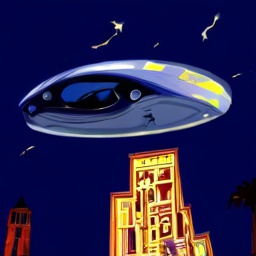} &
\includegraphics[width=\linewidth]{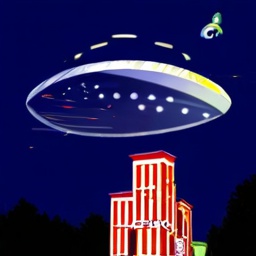} &
\includegraphics[width=\linewidth]{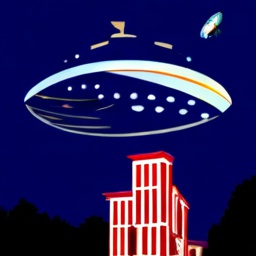} &
\includegraphics[width=\linewidth]{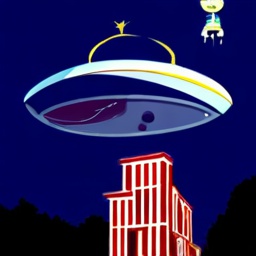} &
\includegraphics[width=\linewidth]{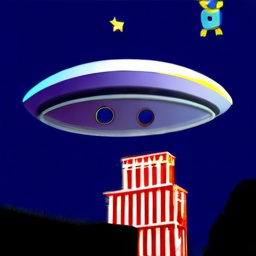} &
\includegraphics[width=\linewidth]{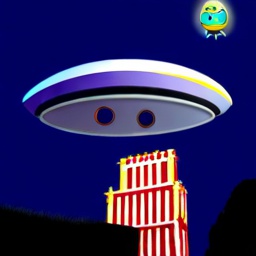} &
\includegraphics[width=\linewidth]{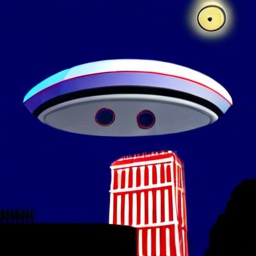} &
\includegraphics[width=\linewidth]{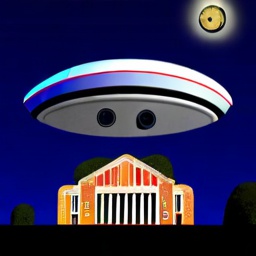} &
\includegraphics[width=\linewidth]{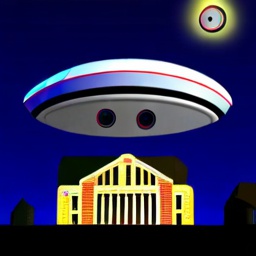} &
\includegraphics[width=\linewidth]{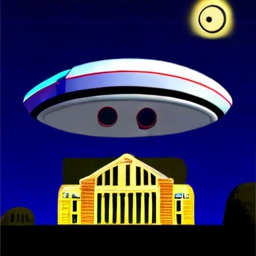} &
\includegraphics[width=\linewidth]{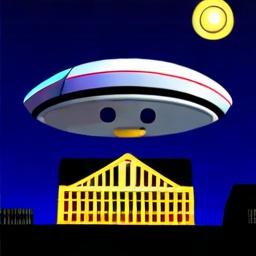} \\

\includegraphics[width=\linewidth]{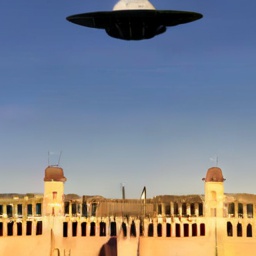} &
\includegraphics[width=\linewidth]{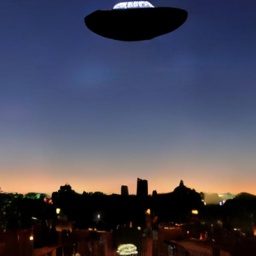} &
\includegraphics[width=\linewidth]{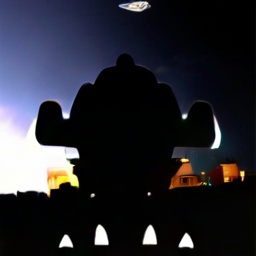} &
\includegraphics[width=\linewidth]{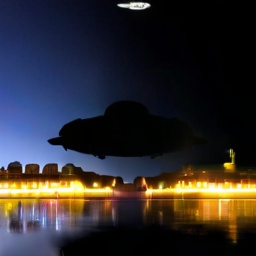} &
\includegraphics[width=\linewidth]{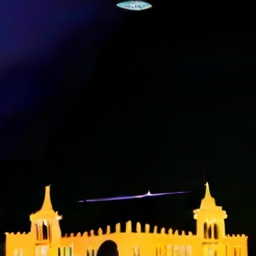} &
\includegraphics[width=\linewidth]{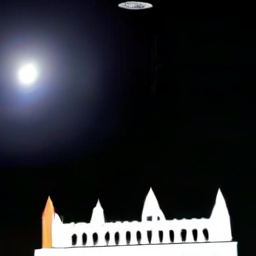} &
\includegraphics[width=\linewidth]{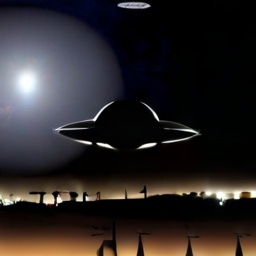} &
\includegraphics[width=\linewidth]{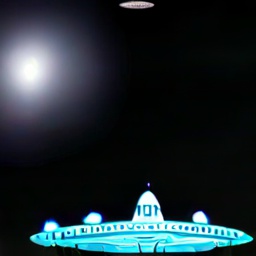} &
\includegraphics[width=\linewidth]{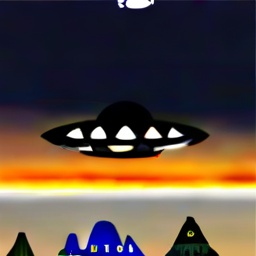} &
\includegraphics[width=\linewidth]{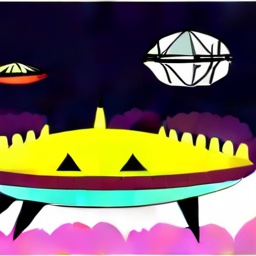} &
\includegraphics[width=\linewidth]{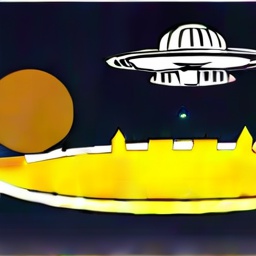} &
\includegraphics[width=\linewidth]{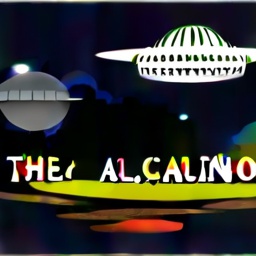} \\
\multicolumn{12}{c}{\small\textit{%
  \parbox[t]{\linewidth}{\centering\small\textit{\input{figures/cfg/000834.txt}}}%
}} \\[6pt]

\includegraphics[width=\linewidth]{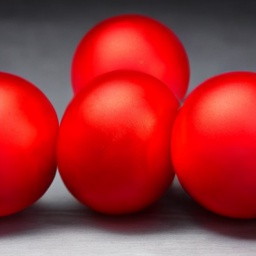} &
\includegraphics[width=\linewidth]{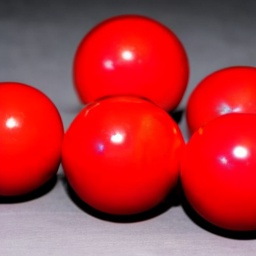} &
\includegraphics[width=\linewidth]{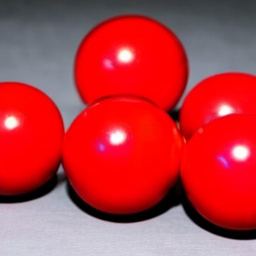} &
\includegraphics[width=\linewidth]{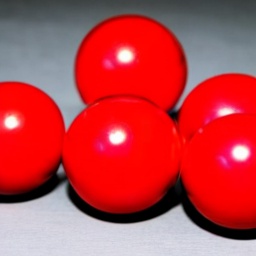} &
\includegraphics[width=\linewidth]{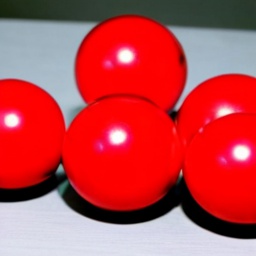} &
\includegraphics[width=\linewidth]{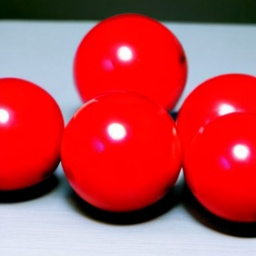} &
\includegraphics[width=\linewidth]{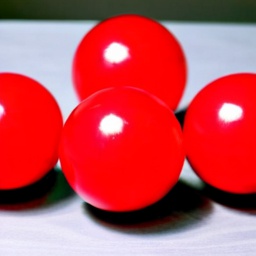} &
\includegraphics[width=\linewidth]{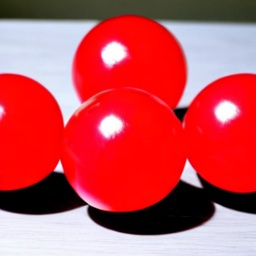} &
\includegraphics[width=\linewidth]{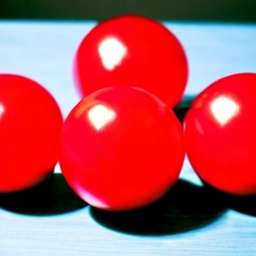} &
\includegraphics[width=\linewidth]{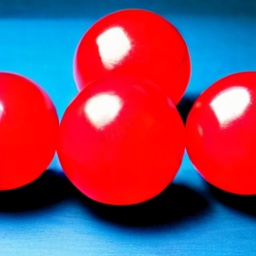} &
\includegraphics[width=\linewidth]{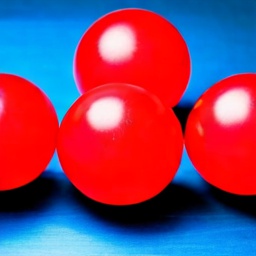} &
\includegraphics[width=\linewidth]{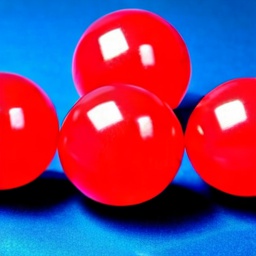} \\

\includegraphics[width=\linewidth]{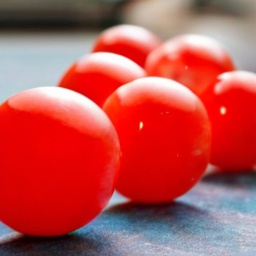} &
\includegraphics[width=\linewidth]{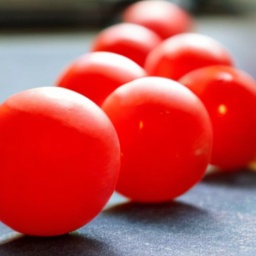} &
\includegraphics[width=\linewidth]{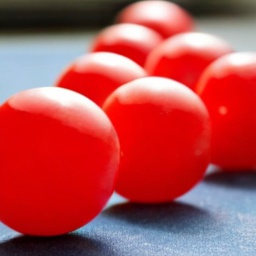} &
\includegraphics[width=\linewidth]{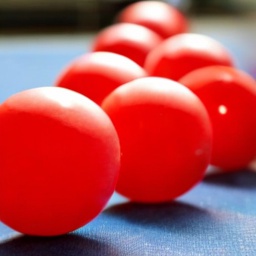} &
\includegraphics[width=\linewidth]{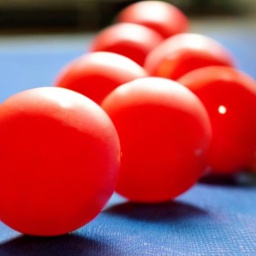} &
\includegraphics[width=\linewidth]{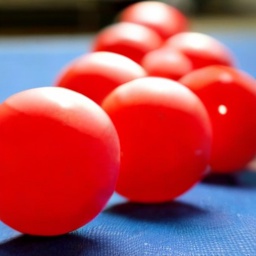} &
\includegraphics[width=\linewidth]{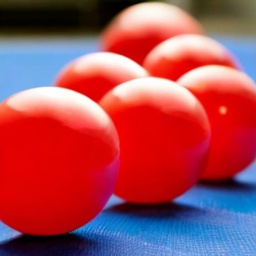} &
\includegraphics[width=\linewidth]{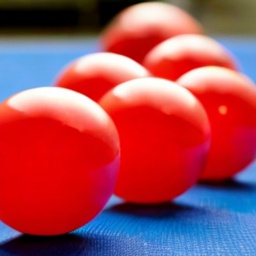} &
\includegraphics[width=\linewidth]{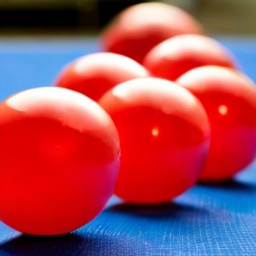} &
\includegraphics[width=\linewidth]{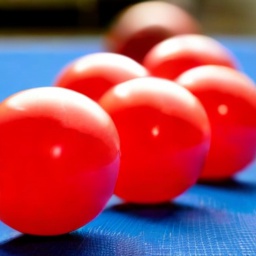} &
\includegraphics[width=\linewidth]{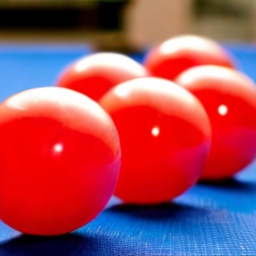} &
\includegraphics[width=\linewidth]{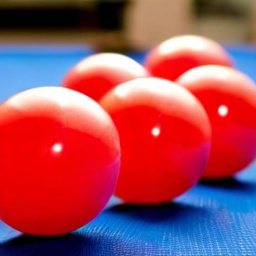} \\

\includegraphics[width=\linewidth]{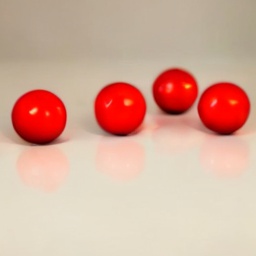} &
\includegraphics[width=\linewidth]{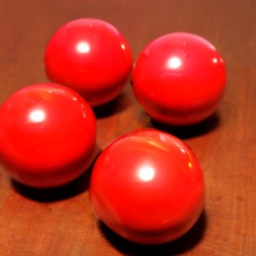} &
\includegraphics[width=\linewidth]{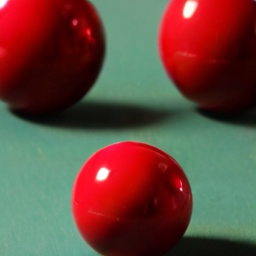} &
\includegraphics[width=\linewidth]{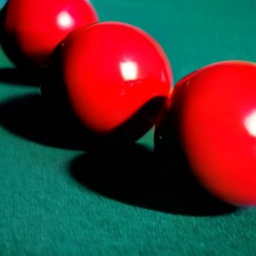} &
\includegraphics[width=\linewidth]{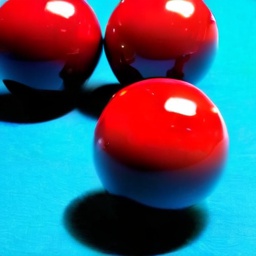} &
\includegraphics[width=\linewidth]{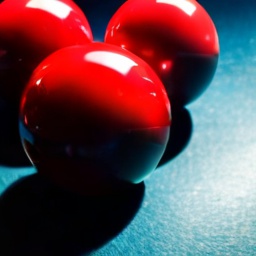} &
\includegraphics[width=\linewidth]{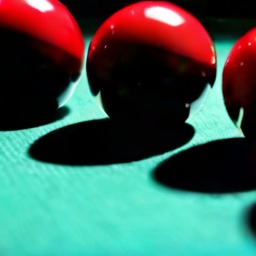} &
\includegraphics[width=\linewidth]{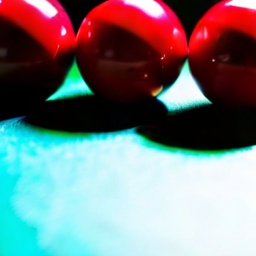} &
\includegraphics[width=\linewidth]{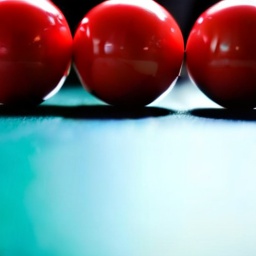} &
\includegraphics[width=\linewidth]{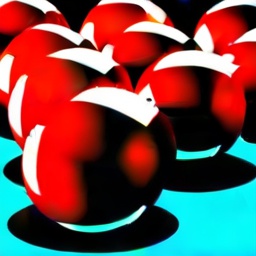} &
\includegraphics[width=\linewidth]{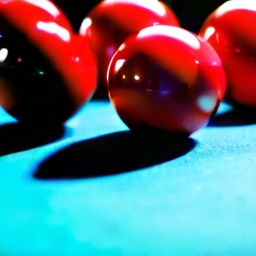} &
\includegraphics[width=\linewidth]{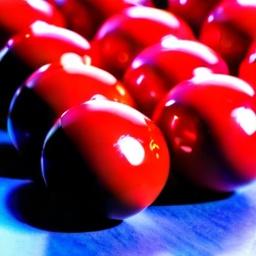} \\
\multicolumn{12}{c}{\small\textit{%
  \parbox[t]{\linewidth}{\centering\small\textit{\input{figures/cfg/001063.txt}}}%
}} \\[6pt]

\end{tabular}
}
\caption{Classifier free guidance sensitivity for FM (first row), DTM (second row), and FHTM (third row) with models that are trained for 500k iterations.}
\label{afig:cfg_sensativity}
\end{figure}

\FloatBarrier

\section{Training and sampling algorithms}
\label{a:tm_algorithms}
Algorithms \ref{alg:tm_training} and \ref{alg:tm_sampling} describe and training and sampling (resp.) of transition matching for a general supervision process, kernel parametrization, and kernel modeling. In this section, we provide training and sampling algorithms tailored to the specific desgin choices of our three variants: (i) DTM is described in Figure~\ref{fig:dtm_train_and_sample}, (ii) ARTM is described in Figure~\ref{fig:artm_train_and_sample}, and  (iii) FHTM is described in Figure~\ref{fig:fhtm_train_and_sample}. Additionally, we provide Python code of a training step for each variant: (i) DTM in Figure~\ref{fig:dtm_python_code}, (ii) ARTM in Figure~\ref{fig:artm_python_code}, and (iii) FHTM in Figure~\ref{fig:fhtm_python_code}.
\begin{figure}[t]
\newcommand{\commentwidth}{11.0cm} %
\algnewcommand{\CommentLine}[1]{%
  \hfill\makebox[\commentwidth][l]{\textcolor{cyan}{\footnotesize$\triangleright$~#1}}%
}
\centering
\begin{minipage}[t]{0.99\textwidth}
\begin{algorithm}[H] 
\caption{DTM Training}
\label{alg:dtm_train}
\begin{algorithmic}[1]
\Require $p_T$ \CommentLine{Data}
\Require $T$ \CommentLine{Number of TM steps}
\While{not converged}
    \State Sample $X_T\sim p_T$
    \State Sample $t\sim \mathcal{U}([T\!-\!1])$
    \Statex
    \State Sample $X_0 \sim N(0, I_d)$ \tikzmark{top1}
    \State $X_t \gets\parr{1-\frac{t}{T}}X_0 + \frac{t}{T}X_T$\tikzmark{right1}
    \State $Y\gets X_T - X_0$\tikzmark{bottom1}
    \Statex 
    \State $h_t \gets f_t^{\theta}(X_t)$\tikzmark{top2}
    \ParFor{$i=1,...,n$}
        \State Sample $Y_0^i \sim N(0, I_{d/n})$
        \State Sample $s\sim \mathcal{U}([0,1])$
        \State $Y_s^i \gets\parr{1-s}Y_0^i + sY^i$
        \State $\mathcal{L}^i(\theta) \gets \norm{g^{\theta}_{s,t}(Y_s^i,h_t^i) - \parr{Y^i-Y^i_0}}^2 $\tikzmark{right}
    \EndParFor
     \State $\mathcal{L}(\theta) \gets \frac{1}{n}\sum_{i}\mathcal{L}^i(\theta)$\tikzmark{bottom2}
    \Statex 
    \State $\theta \gets \theta - \gamma \nabla_\theta \mathcal{L}$ \CommentLine{Optimization step}
\EndWhile
\State \Return $\theta$
\end{algorithmic}
\AddNote{top1}{bottom1}{right}{Sample $(X_t,Y)\sim q_{t,Y|T}(\cdot|X_T)$}
\AddNote{top2}{bottom2}{right}{$\mathcal{L}(\theta) \gets \hat{D}(Y, p^\theta_{Y|t}(\cdot|X_t))$}
\end{algorithm}
\end{minipage}\hfill
\begin{minipage}[t]{0.99\textwidth}
\begin{algorithm}[H] 
\caption{DTM Sampling}
\label{alg:dtm_sample}
\begin{algorithmic}[1]
\Require $\theta$ \CommentLine{Trained model}
\Require $T$ \CommentLine{Number of TM steps}
\State Sample $X_0 \sim \gN(0, I_{d})$
\For{$t = 0$ \textbf{to} $T-1$}
    \Statex
    \State $h_t \gets f^{\theta}\parr{X_t, t}$\tikzmark{top1}
    \ParFor{$i=1,...,n$}
        \State Sample $Y_0^i \sim N(0, I_{d/n})$
        \State $Y^i \gets \texttt{ode\_solve}\parr{Y_0^i, g_{\cdot, t}^{\theta}\parr{\cdot, h_t^i}}$\tikzmark{right1}
    \EndParFor\tikzmark{bottom1}
    \Statex\tikzmark{top2}
    \State $X_{t+1} \gets X_t + \frac{1}{T}Y$\tikzmark{bottom2}
\EndFor
\State \Return $X_T$
\end{algorithmic}
\AddNote{top1}{bottom1}{right1}{Sample $Y \sim p^\theta_{Y|t}(\cdot | X_t)$}
\AddNote{top2}{bottom2}{right1}{Sample $X_{t+1} \sim q_{t+1|t,Y}(\cdot | X_t, Y)$}
\end{algorithm}
\end{minipage}\hfill
\caption{
$n$ is the effective sequence length after $\texttt{patchify}$ layer. The \emph{parallel for} operations run simultaneously across the "sequence length" dimension of the tensor; $\texttt{ode\_solve}$ is any generic ODE solver for solving \eqref{e:ode}.
}
\label{fig:dtm_train_and_sample}
\end{figure}

\begin{figure}[t]

\newcommand{\commentwidth}{11.0cm} 
\algnewcommand{\CommentLine}[1]{%
  \hfill\makebox[\commentwidth][l]{\textcolor{cyan}{\footnotesize$\triangleright$~#1}}%
}
\centering
\begin{minipage}[t]{0.99\textwidth}
\begin{algorithm}[H] 
\caption{ARTM Training}
\label{alg:artm_train}
\begin{algorithmic}[1]
\Require $p_T$ \CommentLine{Data}
\Require $T$ \CommentLine{Number of TM steps}
\While{not converged}
    \State Sample  $X_T\sim p_T$
    \State Sample $t\sim \mathcal{U}([T\!-\!1])$
    \Statex
    \State Sample $X_{0,t} \sim N(0, I_d)$ \tikzmark{top1}
    \State $X_t \gets\parr{1-\frac{t}{T}}X_{0,t} + \frac{t}{T}X_T$
    \State Sample $X_{0,t+1} \sim N(0, I_d)$
    \State $X_{t+1} \gets\parr{1-\frac{t+1}{T}}X_{0,t+1} + \frac{t+1}{T}X_T$\tikzmark{bottom1}\tikzmark{right1}
    \Statex 
    \ParFor{$i=1,...,n$}\tikzmark{top2}
        \State $h_{t+1}^i \gets f_t^{\theta}\parr{X_t,X_{t+1}^{<i}}$
        \State Sample $Y_0^i \sim N(0, I_{d/n})$
        \State Sample $s\sim \mathcal{U}([0,1])$
        \State $Y_s^i \gets\parr{1-s}Y_0^i + sX_{t+1}^i$
        \State $\mathcal{L}^i(\theta) \gets \norm{g^{\theta}_{s,t}(Y_s^i,h_{t+1}^i) - \parr{X_{t+1}^i-Y^i_0}}^2 $\tikzmark{right}
    \EndParFor
     \State $\mathcal{L}(\theta) \gets \frac{1}{n}\sum_{i}\mathcal{L}^i(\theta)$\tikzmark{bottom2}
    \Statex 
    \State $\theta \gets \theta - \gamma \nabla_\theta \mathcal{L}$ \CommentLine{Optimization step}
\EndWhile
\State \Return $\theta$
\end{algorithmic}
\AddNote{top1}{bottom1}{right}{Sample $(X_t,Y)\sim q_{t,Y|T}(\cdot|X_T)$}
\AddNote{top2}{bottom2}{right}{$\mathcal{L}(\theta) \gets \hat{D}(Y, p^\theta_{Y|t}(\cdot|X_t))$}
\end{algorithm}
\end{minipage}\hfill
\begin{minipage}[t]{0.99\textwidth}
\begin{algorithm}[H] 
\caption{ARTM Sampling}
\label{alg:artm_sample}
\begin{algorithmic}[1]
\Require $\theta$ \CommentLine{Trained model}
\Require $T$ \CommentLine{Number of TM steps}
\State Sample $X_0 \sim \gN(0, I_{d})$
\For{$t = 0$ \textbf{to} $T-1$}
    \For{$i=1,...,n$}\tikzmark{top1}
        \State $h_{t+1}^i \gets f_t^{\theta}\parr{X_t,X_{t+1}^{<i}}$
        \State Sample $Y_0^i \sim N(0, I_{d/n})$
        \State $X_{t+1}^i \gets \texttt{ode\_solve}\parr{Y_0^i, g_{\cdot, t}^{\theta}\parr{\cdot, h_{t+1}^i}}$\tikzmark{right1}
    \EndFor\tikzmark{bottom1}
\EndFor
\State \Return $X_T$
\end{algorithmic}
\AddNote{top1}{bottom1}{right1}{Sample $X_{t+1} \sim p^\theta_{t+1|t}(\cdot | X_t)$}
\end{algorithm}
\end{minipage}\hfill
\caption{
$n$ is the effective sequence length after $\texttt{patchify}$ layer. The \emph{parallel for} operations run simultaneously across the "sequence length" dimension of the tensor; $\texttt{ode\_solve}$ is any generic ODE solver for solving \eqref{e:ode}.
}
\label{fig:artm_train_and_sample}
\end{figure}

\begin{figure}[t]
\newcommand{\commentwidth}{11.0cm} 
\algnewcommand{\CommentLine}[1]{%
  \hfill\makebox[\commentwidth][l]{\textcolor{cyan}{\footnotesize$\triangleright$~#1}}%
}
\centering
\begin{minipage}[t]{0.99\textwidth}
\begin{algorithm}[H] 
\caption{FHTM Training}
\label{alg:fhtm_train}
\begin{algorithmic}[1]
\Require $p_T$ \CommentLine{Data}
\Require $T$ \CommentLine{Number of TM steps}
\While{not converged}
    \State Sample $X_T\sim p_T$
    \Statex
    \ParFor{$t=0,...,T$}\tikzmark{top1}
        \State Sample $X_{0,t} \sim N(0, I_d)$ 
        \State $X_t \gets\parr{1-\frac{t}{T}}X_{0,t} + \frac{t}{T}X_T$\tikzmark{right1}
    \EndParFor\tikzmark{bottom1}
    \Statex 
    \ParFor{$t=0,...,T-1$,\ $i=1,...,n$}\tikzmark{top2}
    \State $h_{t+1}^i \gets f_t^{\theta}\parr{X_0,...,X_t,X_{t+1}^{<i}}$
        \State Sample $Y_0^i \sim N(0, I_{d/n})$
        \State Sample $s\sim \mathcal{U}([0,1])$
        \State $Y_s^i \gets\parr{1-s}Y_0^i + sX_{t+1}^i$
        \State $\mathcal{L}_t^i(\theta) \gets \norm{g^{\theta}_{s,t}(Y_s^i,h_{t+1}^i) - \parr{X_{t+1}^i-Y^i_0}}^2 $\tikzmark{right}
    \EndParFor
     \State $\mathcal{L}(\theta) \gets \frac{1}{nT}\sum_{i,t}\mathcal{L}_t^i(\theta)$\tikzmark{bottom2}
    \Statex 
    \State $\theta \gets \theta - \gamma \nabla_\theta \mathcal{L}$ \CommentLine{Optimization step}
\EndWhile
\State \Return $\theta$
\end{algorithmic}
\AddNote{top1}{bottom1}{right}{Sample $(X_t,Y)\sim q_{t,Y|T}(\cdot|X_T)$}
\AddNote{top2}{bottom2}{right}{$\mathcal{L}(\theta) \gets \hat{D}(Y, p^\theta_{Y|t}(\cdot|X_t))$}
\end{algorithm}
\end{minipage}\hfill
\begin{minipage}[t]{0.99\textwidth}
\begin{algorithm}[H] 
\caption{FHTM Sampling}
\label{alg:fhtm_sample}
\begin{algorithmic}[1]
\Require $\theta$ \CommentLine{Trained model}
\Require $T$ \CommentLine{Number of TM steps}
\State Sample $X_0 \sim \gN(0, I_{d})$
\For{$t = 0$ \textbf{to} $T-1$}
    \For{$i=1,...,n$}\tikzmark{top1}
        \State $h_{t+1}^i \gets f_t^{\theta}\parr{X_0,...,X_t,X_{t+1}^{<i}}$
        \State Sample $Y_0^i \sim N(0, I_{d/n})$
        \State $X_{t+1}^i \gets \texttt{ode\_solve}\parr{Y_0^i, g_{\cdot, t}^{\theta}\parr{\cdot, h_{t+1}^i}}$\tikzmark{right1}
    \EndFor\tikzmark{bottom1}
\EndFor
\State \Return $X_T$
\end{algorithmic}
\AddNote{top1}{bottom1}{right1}{Sample $X_{t+1} \sim p^\theta_{t+1|t}(\cdot | X_t)$}
\end{algorithm}
\end{minipage}\hfill
\caption{
$n$ is the effective sequence length after $\texttt{patchify}$ layer. The \emph{parallel for} operations run simultaneously across the "sequence length" dimension of the tensor; $\texttt{ode\_solve}$ is any generic ODE solver for solving \eqref{e:ode}.
}
\label{fig:fhtm_train_and_sample}
\end{figure}

\FloatBarrier
\begin{figure}[H]
  \centering
  \inputminted{python}{python_algorithms/dtm_train.tex}
  \caption{Python code for DTM training}
  \label{fig:dtm_python_code}
\end{figure}
\begin{figure}[H]
  \centering
  \inputminted{python}{python_algorithms/artm_train.tex}
  \caption{Python code for ARTM training}
  \label{fig:artm_python_code}
\end{figure}
\begin{figure}[H]
  \centering
  \inputminted{python}{python_algorithms/fhtm_train.tex}
  \caption{Python code for FHTM training}
  \label{fig:fhtm_python_code}
\end{figure}

\FloatBarrier

\newpage
\section{Convergence of DTM to flow matching}
\label{a:dtm_convergness}

Here we want to prove the following fact: Assume we have a sequence of Markov chains $\set{X_0,X_h,X_{2h},\ldots,X_{1}}$, with an initial state $X_0=x$, where $h=\frac{1}{T}$ and $T\too \infty$.  For convenience note that we index the Markov states with fractions $\ell h$, $\ell\in[T]$, and we  denote the RV
\begin{equation}
    Y_t = \frac{X_{t+h}-X_t}{h}.
\end{equation}
Assume the Markov chains satisfy:
\begin{enumerate}
    \item The function $f_t(x)=\E\brac{ Y_t \vert X_t=x }$ is Lipshcitz continuous. By Lipschitz we mean that $\norm{f_s(y)-f_t(x)}\leq c_L \parr{ \abs{s-t} + \norm{x-y} }$. 
\item For $\ell\in[k]$ the quadratic variation satisfies,  $\E\brac{\norm{Y_{\ell h}}^2\vert X_0=x} \leq c(x)$.
\end{enumerate}
Let $k=k(h)\in \Nat$ be an integer-valued function of $h$ such that $k\too \infty$ and $\frac{1}{2}\geq k h\too 0$ as $h\too 0$. We will prove that the random variable 
\begin{equation}
    \frac{X_{kh} - X_0}{kh} 
\end{equation}
\emph{converges in mean} to $f_0(x)$. That is, we want to show
\begin{theorem}\label{thm:appendix_main}
Considering a sequence of Markov processes $\set{X_0,X_h,X_{2h},\ldots,X_{1}}$ satisfying the assumptions above, then
\begin{equation}
    \lim_{h\too 0}\E\brac{ \norm{\frac{X_{kh} - X_0}{kh} - f_0(X_0)}^2 \ \Bigg \vert \ X_0=x } =0.
\end{equation}

\end{theorem}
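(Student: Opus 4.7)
The plan is to express $X_{kh}-X_0$ as a telescoping sum of the increments and reduce the statement to a law-of-large-numbers type estimate for the empirical average of $Y_{\ell h}$. Since $X_{\ell h+h}-X_{\ell h} = h\, Y_{\ell h}$ by definition, one has
\begin{equation*}
\frac{X_{kh}-X_0}{kh} \;=\; \frac{1}{k}\sum_{\ell=0}^{k-1} Y_{\ell h},
\end{equation*}
so the object of interest is the empirical mean of the $Y_{\ell h}$. I would then split
\begin{equation*}
\frac{1}{k}\sum_{\ell=0}^{k-1} Y_{\ell h} - f_0(X_0) \;=\; \underbrace{\frac{1}{k}\sum_{\ell=0}^{k-1}\bigl(Y_{\ell h}-f_{\ell h}(X_{\ell h})\bigr)}_{\text{(A): martingale part}} \;+\; \underbrace{\frac{1}{k}\sum_{\ell=0}^{k-1}\bigl(f_{\ell h}(X_{\ell h})-f_0(X_0)\bigr)}_{\text{(B): drift part}},
\end{equation*}
and estimate the $L^2$ norms of (A) and (B) separately.

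For (A), let $\mathcal{F}_\ell = \sigma(X_0,X_h,\ldots,X_{\ell h})$. By the Markov property and the definition of $f_t$, $\E[Y_{\ell h}-f_{\ell h}(X_{\ell h})\mid \mathcal{F}_\ell]=0$, so the terms form a martingale difference sequence. Cross terms therefore vanish after conditioning on the later index, and using assumption~2 together with Jensen to bound $\E[\|f_{\ell h}(X_{\ell h})\|^2\mid X_0=x]\leq c(x)$, I get
\begin{equation*}
\E\!\left[\Big\|\text{(A)}\Big\|^2 \,\Big|\, X_0=x\right] \;=\; \frac{1}{k^2}\sum_{\ell=0}^{k-1}\E\bigl[\|Y_{\ell h}-f_{\ell h}(X_{\ell h})\|^2 \,\big|\, X_0=x\bigr] \;\leq\; \frac{4\,c(x)}{k} \;\xrightarrow[h\to 0]{}\; 0.
\end{equation*}

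For (B), Lipschitz continuity of $f$ gives $\|f_{\ell h}(X_{\ell h})-f_0(X_0)\|\leq c_L(\ell h + \|X_{\ell h}-X_0\|)$. To control $\|X_{\ell h}-X_0\|$, I would write $X_{\ell h}-X_0 = h\sum_{j=0}^{\ell-1} Y_{jh}$ and apply Cauchy--Schwarz to obtain $\|X_{\ell h}-X_0\|^2 \leq \ell h^2 \sum_{j=0}^{\ell-1}\|Y_{jh}\|^2$, whence assumption~2 yields $\E[\|X_{\ell h}-X_0\|^2 \mid X_0=x] \leq (\ell h)^2 c(x) \leq (kh)^2 c(x)$. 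Combining via Jensen on the outer average,
\begin{equation*}
\E\!\left[\Big\|\text{(B)}\Big\|^2 \,\Big|\, X_0=x\right] \;\leq\; \frac{1}{k}\sum_{\ell=0}^{k-1} 2c_L^2\bigl((\ell h)^2 + \E[\|X_{\ell h}-X_0\|^2\mid X_0=x]\bigr) \;\leq\; 2c_L^2(kh)^2\bigl(1+c(x)\bigr) \;\xrightarrow[h\to 0]{}\; 0,
\end{equation*}
since $kh\to 0$ by hypothesis. Summing the two bounds and using $(a+b)^2\leq 2a^2+2b^2$ completes the proof.

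\textbf{Main obstacle.} The subtlety is in the martingale part: one must be careful that the orthogonality between terms indexed by $\ell\neq m$ really follows from the Markov property (conditioning on the later $\mathcal{F}_m$ and using $\E[Y_{mh}\mid \mathcal{F}_m]=f_{mh}(X_{mh})$), rather than from any independence assumption that the $Y_{\ell h}$ do not satisfy. Once that is handled cleanly, both terms decay, (A) like $1/k$ and (B) like $(kh)^2$, and the regime $k\to\infty$, $kh\to 0$ is precisely what makes both go to zero simultaneously.
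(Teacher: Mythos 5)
Your proof is correct, and it takes a genuinely different route from the paper's. The paper expands the square $\E\brac{\norm{\frac{1}{k}\sum_\ell Y_{\ell h}-f_0(X_0)}^2\vert X_0=x}$ directly into three terms and then proves two approximate identities — $\E\brac{Y_{\ell h}\vert X_0=x}=f_0(x)+O(kh)$ and, for $\ell\ne m$, $\E\brac{Y_{\ell h}\cdot Y_{mh}\vert X_0=x}=\norm{f_0(x)}^2+O(kh)$ — each via a tower-property computation combined with the Lipschitz bound on $f$ and the second-moment assumption. You instead perform a Doob-type decomposition into a martingale-difference part (A) and a drift part (B): the cross terms in (A) cancel \emph{exactly} by conditioning on the later filtration (not merely up to $O(kh)$), so all of the Lipschitz/continuity work is concentrated in the single drift estimate for (B). The two proofs use the same ingredients (Markov property, Lipschitz continuity of $f_t$, the bound $\E\brac{\norm{Y_{\ell h}}^2\vert X_0=x}\le c(x)$) and yield comparable rates — your $O(1/k)+O((kh)^2)$ versus the paper's $O(k^{-1}+kh)$ — but your organization is cleaner and more modular: it isolates the variance contribution from the bias contribution and makes transparent why the regime $k\to\infty$, $kh\to 0$ is exactly what is needed. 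The one point to state carefully (which you flag yourself) is that $Y_{mh}-f_{mh}(X_{mh})$ is $\gF_{m+1}$-measurable and hence $\gF_\ell$-measurable for $m<\ell$, so the orthogonality follows from $\E\brac{Y_{\ell h}-f_{\ell h}(X_{\ell h})\vert \gF_\ell}=0$ via the Markov property; with that spelled out, the argument is complete.
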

\begin{proof}
First,
\begin{align}\label{ea:mean_conv}
    \E\brac{ \norm{\frac{X_{kh} - X_0}{kh} - f_0(X_0)}^2 \ \Bigg \vert \ X_0=x } &= 
    \E\brac{ \norm{\frac{1}{k}\sum_{\ell=0}^{k-1}Y_{\ell h} - f_0(X_0)}^2 \ \Bigg \vert \ X_0=x } 
\end{align}
and if we open the squared norm we get three terms:
\begin{align}
    \E\brac{ \norm{f_0(X_0)}^2 \Big\vert X_0=x} &= \norm{f_0(x)}^2. 
\end{align}

\begin{align}
    \E \brac{ \frac{1}{k} \sum_{\ell=0}^{k-1} Y_{\ell h} \cdot f_0(X_0) \ \Bigg \vert \ X_0=x } &= f_0(x) \cdot \frac{1}{k} \sum_{\ell=0}^{k-1} \E \brac{  Y_{\ell h}  \ \Bigg \vert \ X_0=x } 
\end{align}

\begin{align*}
    \E \brac{ \frac{1}{k^2} \sum_{\ell=0}^{k-1} \sum_{m=0}^{k-1}  Y_{\ell h} \cdot Y_{m h} \ \Bigg \vert \ X_0=x } &=  \frac{1}{k^2}\sum_{\ell=0}^{k-1} \E\brac{ \norm{Y_{\ell h}}^2 \vert X_0=x } \\ &\qquad +  \frac{2}{k^2}\sum_{\ell=0}^{k-1} \sum_{m=0}^{\ell-1}\E \brac{    Y_{\ell h} \cdot Y_{m h} \ \Bigg \vert \ X_0=x }
\end{align*}
We will later show that $\E\brac{Y_{\ell h}\vert X_0=x}= f_0(x) + O(kh)$ and for $\ell\ne m$ we have $\E\brac{Y_{\ell h}\cdot Y_{m h}\vert X_0=x}=\norm{f_0(x)}^2 + O(kh)$. Plugging these we get that \eqref{ea:mean_conv} equals 
\begin{align}
    \norm{f_0(x)}^2-2\norm{f_0(x)}^2+ \frac{k^2-k}{k^2}\norm{f_0(x)}^2 + O(kh + k^{-1}) \too 0,
\end{align}
as $h\too 0$, where we used assumption 2 above to bound $\E\brac{ \norm{Y_{\ell h}}^2 \vert X_0=x } \leq c(x)$. 
Now to conclude we show
\begin{align}
    \norm{ \E\brac{ Y_{\ell h}  \vert X_0=x} - f_0(x) } &=     \norm{ \E\brac{ \E \brac{ Y_{\ell h} \vert X_{\ell h} } \vert X_0=x} - f_0(x) } \\
    &= \norm{ \E\brac{ f_{\ell h}(X_{\ell h}) \vert X_0=x} - f_0(x) } \\
     &= \norm{ \E\brac{ f_{\ell h}(X_{\ell h}) -f_0(X_0) \vert X_0=x}  } \\
      &= \E\brac{ \norm{ f_{\ell h}(X_{\ell h}) -f_0(X_0) }\vert X_0=x}   \\
      &\leq c_L \E\brac{\ell h +  \norm{X_{\ell h} - X_0} \vert X_0=x }  \\
      &\leq c_L \E\brac{\ell h +  \sum_{m=0}^{\ell-1}\norm{X_{(m+1) h} - X_{mh}} \bigg \vert X_0=x }  \\
      &\leq  O(k h) + c_L h  \sum_{m=0}^{\ell-1} \E\brac{\norm{Y_{mh}} \bigg \vert X_0=x }  \\
            &\leq  O(k h) + c_L hk \sqrt{c(x)}  \\ \label{ea:E_Y_lh}
            &= O(h k).
\end{align}
Now for $m<\ell$ we have 
\begin{align}
& \abs{\E\brac{ Y_{\ell h} \cdot Y_{mh} \ \vert \ X_0=x } - f_0(x)\cdot  \E\brac{ Y_{m h}  \vert X_0=x}} \\
 &=   \abs{  \E\brac{ Y_{m h} \cdot (Y_{\ell h}-f_0(X_0) ) \ \vert \ X_0=x }  } \\
&=   \abs{  \E\brac{ Y_{m h} \cdot \E\brac{ Y_{\ell h}-f_0(X_0)  \vert X_{\ell h}} \ \vert \ X_0=x }  } \\
&=   \abs{  \E\brac{ Y_{m h} \cdot \parr{ f_{\ell h}(X_{\ell h}) - f_0(X_0)  } \ \vert \ X_0=x }  } \\
&\leq     \E\brac{ \abs{ Y_{m h} \cdot \parr{ f_{\ell h}(X_{\ell h}) - f_0(X_0)  } }\ \vert \ X_0=x }   \\
&\leq     \E\brac{ \norm{ Y_{m h}} \norm{ f_{\ell h}(X_{\ell h}) - f_0(X_0)  } \ \vert \ X_0=x }   \\
&\leq     \E\brac{ \norm{ Y_{m h}}  c_l (kh + \sum_{j=0}^{\ell-1} \norm{X_{(j+1)h} - X_{jh} }  \ \vert \ X_0=x }   \\
&\leq     \E\brac{ \norm{ Y_{m h}}  c_l (kh + h\sum_{j=0}^{\ell-1} \norm{Y_{jh} } ) \ \vert \ X_0=x }   \\
&\leq    O(kh) +  c_L h \E\brac{    \sum_{j=0}^{\ell-1} \norm{ Y_{m h}}\norm{Y_{jh} } \ \vert \ X_0=x }   \\
&\leq    O(kh) +  \frac{c_L h}{2} \E\brac{    \sum_{j=0}^{\ell-1} \norm{ Y_{m h}}^2+\norm{Y_{j h} }^2 \ \vert \ X_0=x }   \\ \label{ea:E_Y_lh_Y_mh}
&= O(k h).
\end{align}
Therefore, 
\begin{align}
& \abs{\E\brac{ Y_{\ell h} \cdot Y_{mh} \ \vert \ X_0=x } - f_0(x)\cdot f_0(x)}  \\&\leq \abs{\E\brac{ Y_{\ell h} \cdot Y_{mh} \ \vert \ X_0=x } - f_0(x)\cdot  \E\brac{ Y_{m h}  \vert X_0=x}} \\
&\quad + \abs{ f_0(x)\cdot  \E\brac{ Y_{m h}  \vert X_0=x} - f_0(x)\cdot f_0(x)} \\& \leq O(k h) ,
\end{align}
where we used \eqref{ea:E_Y_lh} and \eqref{ea:E_Y_lh_Y_mh}, and the proof is done since $k h\too 0 $ as $h\too 0$.
\end{proof}

\newpage
\subsection{The DTM case}
We note show that the DTM process satisfies the two assumptions above. We recall that the DTM process is defined by $Y_t\sim q_{Y|t}(\cdot | X_t)$ where $Y=X_1-X_0$. 

First we check the Lipchitz property. 
\begin{align}
f_t(x) &= \E\brac{Y_t \vert X_t=x} \\
&= \E\brac{X_1-X_0 \vert X_t=x} \\
&= u_t(x) \\
&= \int \frac{x_1-x}{1-t}  p_{1|t}(x_1|x)\dd x_1 \\
&= \int \frac{x_1-x}{1-t}  \frac{ p_{t|1}(x|x_1)p_1(x_1)}{\int p_{t|1}(x|x'_1) p_1(x'_1)\dd x'_1 }\dd x_1 
\end{align}
which is Lipschitz for $t<1$ as long as $p_{t|1}(x|x_1)>0$ for all $x$, and is continuously differentiable in $t$ and $x$, both hold for the Gaussian kernel $p_{t|1}(x|x_1) = \gN(x|t x_1, (1-t)I)$. 

Let us check the second property. For this end we make the realistic assumption that our data is bounded, \ie, $\norm{X_1}\leq r$ for some constant $r>0$. Then, consider some RV $X_1'-X_0' = Y_t \sim p_{Y|t}(\cdot|X_t)$. Then by definition we have that $X_{t+h}=X_t + h(X_1'-X_0')$ and $X_t=(1-t)X_0' + t X_1'$. Therefore, 
\begin{align}
 \norm{X_{t+h}} &=  
 \norm{X_t + h(X'_1-X'_0)} \\
 &= \norm{\frac{(1-t-h)X_t+h(1+t)X'_1}{(1-t)}} \\
 &\leq \frac{(1-(t+h))}{(1-t)}\norm{X_t} + h\frac{1+t}{1-t}r.
 \end{align}
We apply this to $t+h=\ell h$ where $\ell\in[k]$ and therefore 
\begin{align}
    \norm{X_{\ell h}} &\leq \norm{X_{(\ell-1)h}} + h\frac{1+kh}{1-kh}r\\
    &\leq  \norm{X_0} + kh\frac{1+kh}{1-kh}r \\ \label{ea:c_x}
    &\leq \norm{x} + \frac{3r}{2} = \tilde{c}(x)
\end{align}
where we used $kh\leq \frac{1}{2}$. Finally, 
\begin{align}
 \E\brac{\norm{Y_t}^2\vert X_0=x} &= 
 \E\brac{ \E\brac{ \norm{Y_t}^2 \vert X_t } \vert X_0=x} \\
 &=  \E\brac{ \E\brac{ \norm{X'_1-X'_0}^2 \vert X_t } \vert X_0=x} \\
  &\leq   \E\brac{ \E\brac{ \norm{X'_1 - \frac{X_t-tX'_1}{(1-t)}}^2 \vert X_t } \vert X_0=x} 
  \\
    &\leq   \E\brac{ \E\brac{ \norm{\frac{X_t-(1+t)X'_1}{(1-t)}}^2 \vert X_t } \vert X_0=x} \\
    &\leq 2 \E \brac{\frac{1}{(1-t)^2}\norm{X_t}^2  + \frac{(1+t)^2}{(1-t)^2}r^2 \vert X_0=x  }, 
 \end{align}
where we used again $X_t=(1-t)X'_0 + t X'_1$. Lastly, applying this to $t=\ell h \leq kh \leq \frac{1}{2}$ and using \eqref{ea:c_x} we get 
\begin{align}
    \E\brac{\norm{Y_{k\ell}}^2\vert X_0=x}&\leq \frac{2}{(1-t)^2}\tilde{c}(x)^2+2\frac{(1+t)^2}{(1-t)^2} r^2 = c(x). 
\end{align}

\end{document}